\newcommand\n[1]{{\bar{#1}}}
\newcommand\eql[2]{{#1}\!=\!{#2}}
\newcommand{\To}{\!\!\to\!\!}
\newcommand{\adnan}[1]{{\color{red}#1}}
\newcommand\shrink[1]{}
\newcommand{\hl}[1]{{`#1'}}
\def\then{{\Rightarrow}}
\def\bthen{{\Leftarrow}}
\def\w{{\omega}}
\def\l{{\ell}}
\def\cd{{|}}
\newcommand{\iml}{\textsc{iml}}
\newcommand{\SR}{\textsc{sr}}
\newcommand{\SSR}{\textsc{ssr}}
\newcommand{\NR}{\textsc{nr}}
\newcommand{\SNR}{\textsc{snr}}
\newcommand{\CNF}{\textsc{cnf}}
\newcommand{\EXP}{\textsc{prune}}
\def\DE(#1,#2,#3){(#1,#2)\!\!\rightarrow\!\!#3}
\def\DE(#1,#2,#3){\xrightarrow[]{\footnotesize #1,\:#2}\!#3}
\newtheorem{corollary}{Corollary}
\newtheorem{lemma}{Lemma}
\newtheorem{definition}{Definition}
\newtheorem{proposition}{Proposition}
\title{On the Computation of Necessary and Sufficient Explanations}
\author{Paper ID: 5323}
\title{On the Computation of Necessary and Sufficient Explanations}
\author {
    Adnan Darwiche,
    Chunxi Ji
}
\begin{document}

\maketitle

\begin{abstract}
The {\em complete reason} behind a decision is a Boolean formula
that characterizes why the decision was made.
This recently introduced notion has a number of applications, 
which include generating explanations, detecting decision bias
and evaluating counterfactual queries. 
Prime {\em implicants} of the complete reason are known as {\em sufficient reasons} for the decision and they correspond to what is known
as PI explanations and abductive explanations. 
In this paper, we refer to the prime {\em implicates} of a complete reason as {\em necessary reasons} for the decision. 
We justify this terminology semantically and show that necessary reasons correspond to what is known as contrastive explanations.
We also study the computation of complete reasons for multi-class decision trees and graphs with nominal 
and numeric features for which we derive efficient, closed-form complete reasons. 
We further investigate the computation of shortest necessary and sufficient reasons for a broad class of complete reasons,
which include the derived closed forms and the complete reasons for Sentential Decision Diagrams (SDDs).
We provide an algorithm which can enumerate their shortest necessary reasons in output polynomial time. 
Enumerating shortest sufficient reasons for this class of complete reasons is hard even for a single reason. 
For this problem, we provide an algorithm that appears to be quite efficient as we show empirically.
\end{abstract}

\section{Introduction}

Reasoning about the behavior of AI systems has been receiving significant attention recently,
particularly the decisions made by machine learning classifiers. Some methods operate
directly on classifiers, e.g.,~\cite{LIME,ANCHOR} while others operator on symbolic encodings of their 
input-output behavior, e.g.,~\cite{NarodytskaKRSW18,IgnatievNM19a} 
which may be compiled into tractable circuits~\cite{uai/ChanD03,ijcai/ShihCD18,aaai/ShihCD19,kr/ShiSDC20,kr/AudemardKM20,corr/abs-2107-01654}. 
When explaining decisions, the notion of a {\em sufficient reason} has been
well investigated. This is a minimal subset of an instance that is sufficient to trigger the decision 
and can therefore be used to explain why it was made. 
Sufficient reasons were introduced in~\cite{ijcai/ShihCD18} under 
the name of {\em PI explanations} and later referred to as {\em abductive explanations}~\cite{IgnatievNM19a}.\footnote{See, 
e.g.,~\cite{ijar/ChoiXD12,ANCHOR,ijcai/WangKB21} for some approaches that can be viewed as approximating sufficient reasons
and~\cite{JoaoApp} for a study of the quality of some of these approximations.}
Two related notions we discuss later are {\em contrastive explanations} as formalized in~\cite{aiia/IgnatievNA020} and 
{\em counterfactual explanations} as formalized in~\cite{kr/AudemardKM20}.\footnote{There is an
extensive body of work in philosophy, social science and AI that discusses contrastive explanations and counterfactual
explanations; see, e.g.,~\cite{Garfinkel1982-GARFOE-3,Lewis1986-LEWCE,temple1988contrast,Lipton1990-LIPCE,Counterfactual_blackbox,van2018contrastive,ai/Miller19,Wachter_Contrastive_2019,Counterfactual_Visual,verma2020counterfactual,Diverse_Counterfactual_Explanations}. 
While the definitions of these notions are sometimes variations or refinements on one another, they are not always compatible.}

\cite{ecai/DarwicheH20} introduced the {\em complete reason} for a decision as a Boolean formula that 
characterizes why a decision was made, and showed how it can be used to gather insights
about the decision. This includes generating explanations, determining
decision bias and evaluating counterfactual queries.
For example, it was shown that sufficient reasons correspond to the {\em prime implicants} of the complete reason.
Hence, if one has access to the complete reason behind a decision, then one can abstract
the computation of sufficient reasons away from the classifier and its encoding or compilation.
Consider a classifier for admitting applicants to an academic program based
five Boolean features~\cite{ecai/DarwicheH20}: passing the entrance exam (\(E\)),
being a first time applicant (\(F\)), having good grades (\(G\)), having work experience (\(W\)) and 
coming from a rich hometown (\(R\)). The positive instances of this classifier are
specified by the following Boolean formula:
\(
\Delta =
(e \vee g)\wedge(e \vee r)\wedge(e \vee w)\wedge(f \vee r)\wedge(\n{f} \vee g \vee w).
\)
Luna (\(\delta\)) passed the entrance exam, has good grades and work experience, 
comes from a rich hometown but is not a first time applicant (\(\delta = e,\n{f},g,r,w\)). 
The classifier will admit Luna. The complete reason for this decision is:
\(
\Gamma = (e\vee g)\wedge(e \vee w)\wedge(r)\wedge(\n{f}\vee g \vee w).
\)
There are four prime implicants of \(\Gamma\):  \(\{e,g,r\}, \{e,r,w\}, \{e,\n{f},r\}\) and \(\{g,r,w\}\).
Each is a minimal subset of instance \(\delta\) which is sufficient to trigger the admit decision. 
Even though the number of sufficient
reasons may be exponential,  the complete reason
can be compact and computed in linear time if the classifier is represented using a suitable form~\cite{ecai/DarwicheH20}.
Further insights can be obtained about a decision by analyzing its complete reason.
For example, the decision on Luna is {\em biased} as it would be different if she did not come
from a rich hometown. In that case, she would be denied admission {\em because} she does not come 
from a rich hometown and is not a first time applicant as this would be the only sufficient reason for rejection. 
These conclusions can be derived by operating directly, and efficiently, on the complete reason as shown in~\cite{ecai/DarwicheH20}.

More recently, \cite{darwiche2021quantifying} introduced the notion of {\em universal
literal quantification} to Boolean logic and used it to formulate complete reasons.
According to this formulation, we can obtain the above complete 
reason \(\Gamma\) by computing \(\forall e,\n{f},g,r,w \cdot \Delta\), to be explained later. 
We will base our treatment on this formulation while operating in a discrete instead of a Boolean setting. 
The conclusion section in~\cite{darwiche2021quantifying} proposed a generalization of universal literal 
quantification to discrete variables but without further discussion. We will adopt this definition, study it
further and exploit it to derive efficient, closed-form complete reasons
for multi-class decision trees and graphs with nominal (discrete) and numeric (continuous) features.
We will show that the obtained complete reasons belong to a particular logical form that arise
when explaining the decisions of a broader class of classifiers. We will further show that the {\em prime
implicates} of complete reasons correspond to contrastive explanations, which will provide
further insights into the semantics and utility of these explanations. We will refer to these prime implicates
as {\em necessary reasons} for the decision and semantically justify this terminology.
We will then propose an output polynomial algorithm for computing the shortest necessary reasons
of the identified class of complete reasons. We will finally show that computing shortest sufficient
reasons is hard for this class of complete reasons and propose an algorithm for computing them
which appears to be quite efficient based on an empirical evaluation. 
Proofs of all results can be found in the appendix.

\shrink{Proofs of all results can be found in~\cite{DarwicheJi22-arxiv}.}

\section{Syntax and Semantics of Discrete Formulas}
\label{sec:discrete}

We start by defining the syntax and semantics of discrete formulas which we use to capture classifiers with discrete features.
The treatment in this section is largely classical and provides obvious generalizations of what is known on Boolean logic. 
But we spell it out so we can provide a formal treatment of our 
upcoming results, especially that we sometimes depart from what may be customary. 

For a discrete variable \(X\) with values \(x_1, \ldots, x_n\), we will call \(\eql X x_i\) a {\em state} for variable \(X\).
A {\em discrete formula} is defined over a set of discrete variables as follows.
Every state or constant (\(\top\), \(\bot\)) is a discrete formula. 
If \(\alpha\) and \(\beta\) are discrete formulas, then \(\neg \alpha\), \(\alpha \vee \beta\) and \(\alpha \wedge \beta\) are discrete formulas.
A {\em positive literal} is a state \(\eql X x_i\) typically denoted by \(x_i\).
A {\em negative literal} is a negated state \(\neg (\eql X x_i),\) typically denoted by \(\n{x}_i\). 
A negative literal will also be called a state if the variable has only two values. 
A {\em clause} is a disjunction of literals with at most one literal per variable.
A {\em term} is a conjunction of literals with at most one literal per variable.
\shrink{
\footnote{This limits what can be 
represented by a clause or a term. For example, let \(S\) and \(\n{S}\) be a partition of states for variable \(X\). 
Then \(c = \bigvee_{x_i \in S} x_i\) is equivalent to \(t = \bigwedge_{x_i \in \n{S}} \n{x}_i\), where
each says that variable \(X\) has a state in set \(S\). Neither \(c\) is a clause nor \(t\) is a term
by our definition. However, \(c\) is a DNF and \(t\) is a CNF.}
}
A {\em CNF} is a conjunction of clauses.
A {\em DNF} is a disjunction of terms.
An {\em NNF} is defined as follows. Constants and literals are NNFs. If \(\alpha\) and \(\beta\) are NNFs, then
\(\alpha \vee \beta\) and \(\alpha \wedge \beta\) are NNFs (hence, conjunctions and disjunctions cannot be negated).
An NNF is {\em \(\vee\)-decomposable} iff for each disjunction \(\bigvee_i \alpha_i\) in the NNF, 
the disjuncts \(\alpha_i\) do not share variables.
An NNF is {\em \(\wedge\)-decomposable} iff for each conjunction \(\bigwedge_i \alpha_i\) in the NNF, 
the conjuncts  \(\alpha_i\) do not share variables.
An NNF is {\em positive} iff it contains only positive literals. 
Any NNF can be made positive by replacing negative literals \(\n{x}_i\) with \(\bigvee_{j \neq i} x_j\).
An NNF is {\em monotone} iff it is positive and does not contain distinct states \(x_i\) and \(x_j\) for any variable \(X\).

A {\em positive term} contains only positive literals (i.e., states).
The {\em conditioning} of discrete formula \(\Delta\) on positive term \(\gamma\) is denoted 
\(\Delta \cd \gamma\) and obtained as follows. For each state \(x_i \in \gamma\),  replace the
occurrences of \(x_i\) with \(\top\) and the occurrences of \(x_j\), \(j \neq i\), with \(\bot\). 
The formula \(\Delta \cd x_i\) does not mention variable \(X\). 
An {\em instance} is a positive term which contains precisely one state for each variable. 
If we condition a discrete formula on an instance, we get a Boolean formula that 
does not mention any variables (evaluates to true or false). 

The semantics of discrete formulas is symmetric to the semantics of Boolean formulas, except that the notion
of a {\em world} (truth assignment) is now defined as a function that maps each discrete variable to one of 
its states (a world corresponds to an instance). 
A world \(\w\) {\em satisfies} a discrete formula \(\alpha\), written \(\w \models \alpha\), precisely when \(\alpha \cd \w\) 
evaluates to true.
In this case, we say that world \(\w\) is a {\em model} of formula \(\alpha\).
Notions such as satisfiability, validity, implication and equivalence can now be defined for discrete formulas as in Boolean logic. 
For example, formula \(\alpha\) implies formula \(\beta\), written \(\alpha \models \beta\), iff every model of \(\alpha\) is
a model of \(\beta\). We next define the notions of implicants and implicates.
An {\em implicant} of a discrete formula \(\Delta\) is a term \(\delta\) such that \(\delta \models \Delta\).
The implicant is {\em prime} iff no other implicant \(\delta^\star\) is such that \(\delta^\star \subset \delta\).
An {\em implicate}  is a clause \(\delta\) such that \(\Delta \models \delta\).
The implicate is {\em prime} iff no other implicate \(\delta^\star\) is such that \(\delta^\star \subset \delta\).

\shrink{
Boolean logic falls as a special case when every discrete variable \(X\) has exactly 
two states: true and false. In Boolean logic, we use \(x\) (or \(X\)) to reference the true state and 
\(\n{x}\) (or \(\neg X\)) to reference the false state (which is kept implicit). For Boolean logic to fall properly 
as a special case of our discrete formulation, we have to interpret \(\n{x}\) (\(\neg X\)) as a second state 
of variable \(X\). As a result, every Boolean NNF is positive according to our earlier definitions.    
}

Our treatment will represent a classifier with discrete features and multiple classes \(c_1, \ldots, c_n\) by a set of mutually 
exclusive and exhaustive discrete formulas \(\Delta^1, \ldots, \Delta^n\), where the models of formula \(\Delta^i\) capture 
the instances in class \(c_i\). That is, instance \(\delta\) is in class \(c_i\) iff \(\delta \models \Delta^i\).
We refer to each \(\Delta^i\) as a {\em class formula.} When \(\delta \models \Delta^i\), we  say that instance \(\delta\) 
is {\em decided positively} by \(\Delta^i\). The complete reason for this decision will then be the formula
\(\forall \delta \cdot \Delta^i\). The next section will explain what \(\forall \delta\)  is and how to compute it efficiently.
In the upcoming discussion, we may use the engineering notation for Boolean operators when convenient,
writing \(x_1y_2 + x_2 z_3\), for example, instead of \((x_1 \wedge y_2) \vee (x_2 \wedge z_3)\).

\section{Quantifying States of Discrete Variables}
\label{sec:dquantify}

\cite{darwiche2021quantifying} introduced universal literal quantification for Boolean logic
and suggested the following generalization to discrete variables without further study.

\begin{definition} \label{def:d-quantify}
For formula \(\Delta\) and variable \(X\) with states \(x_1, \ldots, x_n\), the universal quantification
of state \(x_i\) from \(\Delta\) is defined as follows:
\(\forall x_i \cdot \Delta = (\Delta \cd x_i) \wedge \bigwedge_{j\neq i} (x_i \vee \Delta \cd x_j)\).
\shrink{
\begin{eqnarray*}
\forall x_i \cdot \Delta & = &  \Delta \cd x_i \wedge \bigwedge_{j\neq i} (x_i \vee \Delta \cd x_j) \\
\exists x_i \cdot \Delta & = &  \Delta \cd x_i \vee \bigvee_{j\neq i} (x_j \wedge \Delta \cd x_j)
\end{eqnarray*}
}
\end{definition}

Quantification is commutative so we can equivalently write \(\forall x \cdot (\forall y \cdot \Delta)\), 
\(\forall y \cdot (\forall x \cdot \Delta)\) or \(\forall \{x,y\} \cdot \Delta\). 
We will study Definition~\ref{def:d-quantify} and exploit it for computing complete reasons.

\begin{definition}\label{def:cr}
If instance \(\delta\) is decided positively by class formula \(\Delta\), then
\(\forall \delta \cdot \Delta\) is the \hl{complete reason} for the decision. 
\end{definition}

The next three results parallel Boolean ones in~\cite{darwiche2021quantifying}.
They are followed by two novel results.


\begin{proposition}\label{prop:d-quantify-b}
We have
\(\forall x_i \cdot \top = \top\) and \(\forall x_i \cdot \bot = \bot\);
\(\forall x_i \cdot x_i = x_i\) and \(\forall x_i \cdot \n{x}_i = \bot\);
\(\forall x_i \cdot x_j = \bot\) and \(\forall x_i \cdot \n{x}_j = x_i\) when \(j \neq i\);
\(\forall x_i \cdot y_j = y_j\) and \(\forall x_i \cdot \n{y}_j = \n{y}_j\) when \(X \neq Y\).
\end{proposition}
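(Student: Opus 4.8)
The plan is to prove each of the listed identities by a one-line unfolding of Definition~\ref{def:d-quantify} followed by simplification with the conditioning rules of Section~\ref{sec:discrete} together with the standard propositional identities for $\top,\bot$, idempotence, and absorption ($\phi \wedge (\phi \vee \psi) \equiv \phi$). The conditioning facts I will use are: $\top \cd x_j \equiv \top$ and $\bot \cd x_j \equiv \bot$ for every $j$; $x_i \cd x_i \equiv \top$ and $x_i \cd x_j \equiv \bot$ for $j \neq i$; conditioning commutes with $\neg$ since it is defined by substitution, so $(\neg\alpha)\cd x_j \equiv \neg(\alpha \cd x_j)$; and any literal over a variable other than $X$ is unaffected by conditioning on a state of $X$.

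First I would clear the cases where $\forall x_i \cdot \Delta$ reduces to a constant. For $\Delta = \top$ every conditioning yields $\top$, so the whole conjunction is $\top$. For $\Delta = \bot$ and for $\Delta = \n{x}_i$, and also for $\Delta = x_j$ with $j \neq i$, the leading conjunct $\Delta \cd x_i$ already simplifies to $\bot$ (respectively $\bot \cd x_i \equiv \bot$, $\neg(x_i\cd x_i)\equiv\neg\top\equiv\bot$, and $x_j \cd x_i \equiv \bot$), so $\forall x_i \cdot \Delta \equiv \bot$. This disposes of $\forall x_i\cdot\top$, $\forall x_i\cdot\bot$, $\forall x_i\cdot\n{x}_i$, and $\forall x_i\cdot x_j$ for $j\neq i$.

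Next I would handle the two cases whose answer is a literal rather than a constant. For $\Delta = x_i$: the leading conjunct is $x_i \cd x_i \equiv \top$, and for each $j \neq i$ the $j$-th conjunct is $x_i \vee (x_i \cd x_j) \equiv x_i \vee \bot \equiv x_i$, so the conjunction collapses to $x_i$ by idempotence. For $\Delta = \n{x}_j$ with $j \neq i$: the leading conjunct is $\neg(x_j\cd x_i) \equiv \neg\bot \equiv \top$; the conjunct indexed by $j$ is $x_i \vee \neg(x_j \cd x_j) \equiv x_i \vee \bot \equiv x_i$; and every conjunct indexed by some $k \notin\{i,j\}$ is $x_i \vee \neg(x_j \cd x_k) \equiv x_i \vee \top \equiv \top$, so again the conjunction is $x_i$. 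Finally, for the cross-variable literals ($X \neq Y$): no conditioning on a state of $X$ touches $y_j$ (resp.\ $\n{y}_j$), so $\forall x_i \cdot y_j \equiv y_j \wedge \bigwedge_{k\neq i}(x_i \vee y_j) \equiv y_j$ by absorption, and likewise $\forall x_i \cdot \n{y}_j \equiv \n{y}_j$.

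There is no genuine obstacle — the proof is pure calculation. The only points needing a moment's care are tracking which single conjunct of $\bigwedge_{j\neq i}(x_i \vee \Delta\cd x_j)$ survives the simplification (in the $\n{x}_j$ case it is exactly the conjunct whose index equals the subscript of the literal in $\Delta$), and the degenerate situation where $X$ has a single state, in which case $\bigwedge_{j\neq i}$ is the empty conjunction $\top$; there $x_i \equiv \top$ and $\n{x}_i \equiv \bot$, so the stated identities still hold.
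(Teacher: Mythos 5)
Your proof is correct and is exactly the direct unfolding of Definition~\ref{def:d-quantify} that the paper intends (its own proof is the one-liner ``follows directly from the definition''). All eight case computations check out, and your attention to the degenerate single-state situation is a nice extra that the paper does not bother with.
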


The next result shows when \(\forall x_i\) can be distributed.

\begin{proposition}\label{prop:distribute-and-or}
For discrete formulas \(\alpha\), \(\beta\) and state \(x_i\) of variable \(X\), we have
\(\forall x_i \cdot (\alpha \wedge \beta)  = (\forall x_i \cdot \alpha) \wedge (\forall x_i \cdot \beta)\).
Moreover, if variable \(X\) does not occur in both \(\alpha\) and \(\beta\), then
 \(\forall x_i \cdot (\alpha \vee \beta) = (\forall x_i \cdot \alpha) \vee (\forall x_i \cdot \beta)\).
\end{proposition}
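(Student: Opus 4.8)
The plan is to reduce everything to two elementary facts: that conditioning on a state commutes with the Boolean connectives, and ordinary distributivity/absorption of $\wedge$ and $\vee$. Throughout I read ``$=$'' as logical equivalence, as already done implicitly in Proposition~\ref{prop:d-quantify-b}. First I would record that for any positive term $\gamma$ and discrete formulas $\alpha,\beta$, conditioning is a syntactic substitution, so $(\alpha\wedge\beta)\cd\gamma = (\alpha\cd\gamma)\wedge(\beta\cd\gamma)$, $(\alpha\vee\beta)\cd\gamma = (\alpha\cd\gamma)\vee(\beta\cd\gamma)$, and $(\neg\alpha)\cd\gamma = \neg(\alpha\cd\gamma)$; this is immediate from the definition of $\Delta\cd\gamma$ as replacing states by $\top,\bot$. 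I would also record the easy observation that if $X$ does not occur in $\Delta$ then $\forall x_i\cdot\Delta = \Delta$: here $\Delta\cd x_j = \Delta$ for every $j$, so by Definition~\ref{def:d-quantify} $\forall x_i\cdot\Delta = \Delta\wedge\bigwedge_{j\neq i}(x_i\vee\Delta)$, which collapses to $\Delta$ by absorption, since $\Delta\wedge(x_i\vee\Delta)=\Delta$.

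For the conjunction case I would expand the definition and push conditioning inside:
\[
\forall x_i\cdot(\alpha\wedge\beta)=(\alpha\cd x_i)\wedge(\beta\cd x_i)\wedge\bigwedge_{j\neq i}\bigl(x_i\vee((\alpha\cd x_j)\wedge(\beta\cd x_j))\bigr).
\]
Applying $x_i\vee(A\wedge B)=(x_i\vee A)\wedge(x_i\vee B)$ to each conjunct and then regrouping the (associative, commutative) conjunction into its $\alpha$-part and $\beta$-part yields
\[
\bigl[(\alpha\cd x_i)\wedge\bigwedge_{j\neq i}(x_i\vee\alpha\cd x_j)\bigr]\wedge\bigl[(\beta\cd x_i)\wedge\bigwedge_{j\neq i}(x_i\vee\beta\cd x_j)\bigr],
\]
which is exactly $(\forall x_i\cdot\alpha)\wedge(\forall x_i\cdot\beta)$.

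For the disjunction case, assume without loss of generality that $X$ does not occur in $\beta$, so $\beta\cd x_j=\beta$ for all $j$ and, by the observation above, $\forall x_i\cdot\beta=\beta$. Expanding and pushing conditioning inside gives
\[
\forall x_i\cdot(\alpha\vee\beta)=\bigl((\alpha\cd x_i)\vee\beta\bigr)\wedge\bigwedge_{j\neq i}\bigl(x_i\vee(\alpha\cd x_j)\vee\beta\bigr).
\]
On the other hand, distributing $(\,\cdot\vee\beta)$ over the conjunction $\forall x_i\cdot\alpha = (\alpha\cd x_i)\wedge\bigwedge_{j\neq i}(x_i\vee\alpha\cd x_j)$ produces precisely the same formula, so $\forall x_i\cdot(\alpha\vee\beta)=(\forall x_i\cdot\alpha)\vee\beta=(\forall x_i\cdot\alpha)\vee(\forall x_i\cdot\beta)$.

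There is no serious obstacle here; the point worth stressing is that the side condition in the $\vee$ part is genuinely necessary. With a Boolean variable $X$ (states $x,\n{x}$) one has $\forall x\cdot(x\vee\n{x})=\forall x\cdot\top=\top$, whereas $(\forall x\cdot x)\vee(\forall x\cdot\n{x})=x\vee\bot=x$, so distributivity fails when $X$ occurs on both sides of the $\vee$. The only mild care needed in writing this up is to keep the index ranges of the big conjunctions straight when regrouping, and to be explicit that every manipulation is up to logical equivalence rather than syntactic identity.
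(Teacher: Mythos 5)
Your proof is correct and follows essentially the same route as the paper's: expand Definition~\ref{def:d-quantify}, push conditioning through the connectives, and regroup using distributivity, with the disjunction case reduced to the observation that $\forall x_i \cdot \beta = \beta$ when $X$ does not occur in $\beta$. The only cosmetic difference is that you obtain the $\vee$ case by distributing $(\cdot\vee\beta)$ over the conjunction defining $\forall x_i\cdot\alpha$ in one step, where the paper reaches the same formula via a short chain of absorption identities.
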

 
Given Propositions~\ref{prop:d-quantify-b} and \ref{prop:distribute-and-or},
we can universally quantify states out of \(\vee\)-decomposable NNFs in linear time
while preserving \(\vee\)-decomposability in the resulting NNF.

\begin{proposition}\label{prop:d-quantify-compute}
Let \(\Delta\) be a \(\vee\)-decomposable NNF and \(\gamma\) be a set of states.
Then \(\forall \gamma \cdot \Delta\) can be obtained from \(\Delta\) as follows.
For each state \(x_i \in \gamma\), replace the occurrences of literals \(\n{x}_i\), \(x_j\) and \(\n{x}_j\), \(j \neq i\),
in \(\Delta\) with \(\bot\), \(\bot\) and \(x_i\), respectively.
\end{proposition}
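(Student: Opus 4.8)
The plan is to first settle the single-state case \(\gamma=\{x_i\}\) by structural induction on the NNF \(\Delta\), and then obtain the general case by induction on \(|\gamma|\). Write \(\Delta[x_i]\) for the formula produced by the substitution described in the statement (replace \(\n{x}_i\) and \(x_j,\ j\neq i,\) by \(\bot\), and \(\n{x}_j,\ j\neq i,\) by \(x_i\)), and \(\Delta[\gamma]\) for the simultaneous version over all states in \(\gamma\). Two facts will be used repeatedly: (i) every subformula of a \(\vee\)-decomposable NNF is again a \(\vee\)-decomposable NNF; and (ii) \(\Delta[x_i]\) is a \(\vee\)-decomposable NNF whenever \(\Delta\) is, because the substitution only rewrites literals of \(X\) into \(\bot\) or into the literal \(x_i\), and hence never enlarges the variable set of any subformula (a disjunct that acquires a fresh \(x_i\) already mentioned \(X\) through the \(\n{x}_j\) it replaced).

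First I would prove \(\forall x_i\cdot\Delta = \Delta[x_i]\) by structural induction. The base cases, where \(\Delta\) is a constant or a literal, are exactly the entries of Proposition~\ref{prop:d-quantify-b}. For \(\Delta=\alpha\wedge\beta\), Proposition~\ref{prop:distribute-and-or} gives \(\forall x_i\cdot(\alpha\wedge\beta)=(\forall x_i\cdot\alpha)\wedge(\forall x_i\cdot\beta)\), which by the induction hypothesis equals \(\alpha[x_i]\wedge\beta[x_i]=(\alpha\wedge\beta)[x_i]\). For \(\Delta=\alpha\vee\beta\), \(\vee\)-decomposability says \(\alpha\) and \(\beta\) share no variables, so \(X\) occurs in at most one of them; the second part of Proposition~\ref{prop:distribute-and-or} then applies and yields \(\forall x_i\cdot(\alpha\vee\beta)=(\forall x_i\cdot\alpha)\vee(\forall x_i\cdot\beta)=\alpha[x_i]\vee\beta[x_i]=(\alpha\vee\beta)[x_i]\) by the induction hypothesis (legitimate because \(\alpha,\beta\) are \(\vee\)-decomposable NNFs by fact (i)). An \(n\)-ary disjunction is handled by associating and iterating the binary case.

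For a general \(\gamma\) --- which for our purposes contains one state per variable, as instances do --- I would induct on \(|\gamma|\). The case \(|\gamma|=0\) is immediate. Otherwise write \(\gamma=\gamma'\cup\{x_i\}\) with the variable \(X\) of \(x_i\) not occurring among the states of \(\gamma'\); by commutativity of quantification \(\forall\gamma\cdot\Delta=\forall x_i\cdot(\forall\gamma'\cdot\Delta)\). The induction hypothesis gives \(\forall\gamma'\cdot\Delta=\Delta[\gamma']\), and \(\Delta[\gamma']\) is a \(\vee\)-decomposable NNF by fact (ii), so the single-state case gives \(\forall x_i\cdot\Delta[\gamma']=\Delta[\gamma'][x_i]\). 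It then remains to observe \(\Delta[\gamma'][x_i]=\Delta[\gamma]\): the \(\gamma'\)-substitutions touch, and introduce, only literals of variables occurring in \(\gamma'\), hence leave every occurrence of a literal of \(X\) exactly as it stands in \(\Delta\); so the \(\gamma'\)-phase and the \(x_i\)-phase act on disjoint sets of literal occurrences and their composition coincides with the simultaneous substitution \(\Delta[\gamma]\).

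I expect the only delicate points to be the disjunctive step and this last bookkeeping. The disjunctive step is where \(\vee\)-decomposability is indispensable: without it one simply cannot move \(\forall x_i\) past a disjunction whose two sides both mention \(X\). It is worth noting that the hypothesis of Proposition~\ref{prop:distribute-and-or} (\(X\) not occurring in both disjuncts) also covers the subcase where \(X\) occurs in neither. The bookkeeping point is that the \(x_i\)-substitution may create new occurrences of the literal \(x_i\) (out of \(\n{x}_j\)), but these are literals of \(X\), a variable absent from \(\gamma'\), so they are disturbed by no other substitution; this is precisely why the ``replace for each \(x_i\in\gamma\)'' phrasing in the statement is unambiguous regardless of the order chosen.
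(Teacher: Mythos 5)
Your proof is correct and follows essentially the same route as the paper, which simply cites Propositions~\ref{prop:d-quantify-b} and~\ref{prop:distribute-and-or} together with commutativity of quantification; you have merely made the structural induction and the substitution bookkeeping explicit. The only point worth noting is that the statement (and hence your argument) implicitly assumes \(\gamma\) contains at most one state per variable, which you correctly flag and which holds in all the paper's uses.
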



Consider the class formula \(\Delta = \n{x}_1(x_2+\n{y}_1)(\n{y}_1+z_1)\) over ternary variables \(X\), \(Y\), \(Z\)
and instance \(\delta = x_2, y_2, z_1\) which is decided positively by \(\Delta\). The complete
reason for this decision is \(\forall \delta \cdot \Delta\). Since \(\Delta\) is 
\(\vee\)-decomposable, Proposition~\ref{prop:d-quantify-compute} gives
\(
\forall x_2, y_2, z_1 \cdot \Delta = (x_2)(x_2+y_2)(y_2+z_1) = x_2(y_2+z_1).
\)
Hence, this instance was decided positively because it has characteristic \(x_2\) and one of  
the characteristics \(y_2\) and \(z_1\).

We next identify conditions that allow the distribution of \(\forall x_i\) over disjuncts
that share variables.

\begin{proposition}\label{prop:distribute-or-ind}
Consider positive NNFs \(\alpha,\) \(\beta\) and state \(x_i\) of variable \(X\).
If \(x_i\) does not occur in \(\alpha, \beta\), or 
\(x_j\) does not occur in \(\alpha, \beta\) for all \(j \neq i\), 
then \(\forall x_i \cdot (\alpha \vee \beta) = (\forall x_i \cdot \alpha) \vee (\forall x_i \cdot \beta)\).
\end{proposition}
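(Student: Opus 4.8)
The plan is to prove the two entailments $(\forall x_i \cdot \alpha) \vee (\forall x_i \cdot \beta) \models \forall x_i \cdot (\alpha \vee \beta)$ and $\forall x_i \cdot (\alpha \vee \beta) \models (\forall x_i \cdot \alpha) \vee (\forall x_i \cdot \beta)$ separately. The first needs no assumption on $X$: conditioning distributes over $\vee$, so $\alpha \cd x_m \models (\alpha \vee \beta) \cd x_m$ and $\beta \cd x_m \models (\alpha \vee \beta) \cd x_m$ for every state $x_m$ of $X$, and since Definition~\ref{def:d-quantify} builds $\forall x_i \cdot \gamma$ out of the components $\gamma \cd x_m$ using only $\wedge$ and $\vee$ in positive positions, replacing those components by weaker ones preserves entailment; hence $\forall x_i \cdot \alpha \models \forall x_i \cdot (\alpha \vee \beta)$ and likewise for $\beta$. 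The content is the converse, and this is where one of the two alternatives enters.

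For the converse I would avoid reasoning about individual worlds and instead, under each alternative, extract a closed form for $\forall x_i \cdot \gamma$ directly from $\forall x_i \cdot \gamma = (\gamma \cd x_i) \wedge \bigwedge_{j \neq i}(x_i \vee \gamma \cd x_j)$, and then check that both sides of the claimed identity evaluate to the same formula. This is legitimate because each of $\alpha$, $\beta$, $\alpha \vee \beta$ is again a positive NNF that inherits whichever alternative holds of $\alpha$ and $\beta$ (a state $x_j$ occurs in $\alpha \vee \beta$ iff it occurs in $\alpha$ or in $\beta$), and because conditioning --- and also ``replacing $x_i$ by a constant'' --- distributes over $\vee$. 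So it suffices to simplify $\forall x_i \cdot \gamma$ for a positive NNF $\gamma$ meeting one alternative and then substitute $\gamma = \alpha, \beta, \alpha \vee \beta$.

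If $x_i$ does not occur in $\gamma$, then $\gamma \cd x_i$ replaces every state of $X$ by $\bot$ (the substitution $x_i \mapsto \top$ being vacuous), whereas $\gamma \cd x_j$ ($j \neq i$) does the same except that it replaces $x_j$ by $\top$; since $\gamma$ is positive, raising the value of $x_j$ from $\bot$ to $\top$ can only add models, so $\gamma \cd x_i \models \gamma \cd x_j$, hence $\gamma \cd x_i \models x_i \vee \gamma \cd x_j$, and the conjunction collapses to $\forall x_i \cdot \gamma = \gamma \cd x_i$. Then $\forall x_i \cdot (\alpha \vee \beta) = (\alpha \vee \beta) \cd x_i = (\alpha \cd x_i) \vee (\beta \cd x_i) = (\forall x_i \cdot \alpha) \vee (\forall x_i \cdot \beta)$. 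If instead no $x_j$ with $j \neq i$ occurs in $\gamma$, then for every $j \neq i$ the formula $\gamma \cd x_j$ is the one formula $\gamma_0$ obtained by replacing $x_i$ with $\bot$, while $\gamma \cd x_i$ is $\gamma_1$, obtained by replacing $x_i$ with $\top$; positivity gives $\gamma_0 \models \gamma_1$, so (when $X$ has at least two states) $\forall x_i \cdot \gamma = \gamma_1 \wedge (x_i \vee \gamma_0) = (\gamma_1 \wedge x_i) \vee \gamma_0$. Substituting and expanding, both $\forall x_i \cdot (\alpha \vee \beta)$ and $(\forall x_i \cdot \alpha) \vee (\forall x_i \cdot \beta)$ reduce to $((\alpha_1 \vee \beta_1) \wedge x_i) \vee \alpha_0 \vee \beta_0$, so they are equal. (When $X$ has a single state, $\forall x_i \cdot \gamma = \gamma \cd x_i$ and the identity is immediate by distributivity of conditioning over $\vee$.)

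I expect the only delicate points to be bookkeeping: correctly reading off $\gamma \cd x_i$ and $\gamma \cd x_j$ once one knows which states of $X$ may occur in $\gamma$, and stating cleanly the monotonicity of a positive NNF under substitution of states by constants that underlies both $\gamma \cd x_i \models \gamma \cd x_j$ and $\gamma_0 \models \gamma_1$. Once the collapsed form of $\forall x_i \cdot \gamma$ is available, verifying the identity is a one-line Boolean computation in each alternative, so there is no genuine combinatorial obstacle --- which fits the intuition that $\forall x_i$ distributes over a disjunction precisely when the quantified variable touches the disjuncts only through the single state $x_i$ (second alternative) or not at all (first alternative).
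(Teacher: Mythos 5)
Your proposal is correct and follows essentially the same route as the paper: the paper factors your two closed-form computations into Lemma~\ref{lem:independence}, showing \(\forall x_i \cdot \gamma = \gamma \cd x_i\) when \(x_i\) is absent and \(\forall x_i \cdot \gamma = \gamma\) when all \(x_j\), \(j \neq i\), are absent, and then applies it to \(\alpha\), \(\beta\) and \(\alpha \vee \beta\) exactly as you do. The only cosmetic difference is that in the second case you stop at the intermediate form \((\gamma_1 \wedge x_i) \vee \gamma_0\) and verify the identity by expansion, whereas the paper simplifies one step further to \(\forall x_i \cdot \gamma = \gamma\), which makes the distribution immediate.
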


For a Boolean variable \(X\) with states \(x\) and \(\n{x}\), 
Proposition~\ref{prop:distribute-or-ind} says that we can distribute \(\forall x\)
over disjuncts \(\alpha\) and \(\beta\) even if they mention 
literal \(\n{x}\) (but do not mention \(x\)).
This is a novel result compared to~\cite{darwiche2021quantifying}.

Next is another novel condition that licenses the distribution of \(\forall x_i\) over disjuncts, 
which we use to derive closed forms for the complete reasons of decision trees and graphs.
\shrink{
\begin{proposition}\label{prop:wtop}
For NNF \(\alpha\) and variable \(X\), we have
\[\forall x_i \cdot (\alpha \vee \bigvee_{x_k \in S} x_k) = (\forall x_i \cdot \alpha) \vee (\forall x_i \cdot \bigvee_{x_k \in S} x_k)\]
if \(X\) occurs in \(\alpha\) only in disjunctions of the form \(\bigvee_{x_k \in S'} x_k\) where \(S' \supseteq S\).
\end{proposition}
}
\begin{proposition}\label{prop:wtop}
Let \(\alpha\) be an NNF, \(S\) be a set of states for variable \(X\) and \(\beta = \bigvee_{x_k \in S} x_k\).
If variable \(X\) occurs in \(\alpha\) only in disjunctions of the form \(\bigvee_{x_k \in S'} x_k\) where \(S' \supseteq S\)
are states of variable \(X\), then
\(\forall x_i \cdot (\alpha \vee \beta) = (\forall x_i \cdot \alpha) \vee (\forall x_i \cdot \beta)\).
\end{proposition}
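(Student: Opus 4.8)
The plan is to prove the two implications of the equivalence separately; only one of them needs the structural hypothesis on~$\alpha$. For the direction $(\forall x_i\cdot\alpha)\vee(\forall x_i\cdot\beta)\models\forall x_i\cdot(\alpha\vee\beta)$, I would first record that universal state quantification is monotone with respect to~$\models$: if $\mu\models\nu$ then $\mu\cd x_j\models\nu\cd x_j$ for every state $x_j$ (conditioning preserves implication), so, comparing the two instances of Definition~\ref{def:d-quantify} conjunct by conjunct, $\forall x_i\cdot\mu\models\forall x_i\cdot\nu$. Since $\alpha\models\alpha\vee\beta$ and $\beta\models\alpha\vee\beta$, this direction follows at once, with no appeal to the hypothesis.

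The converse is where the hypothesis earns its keep, and I would isolate one lemma for it. Write $\alpha^\top$ for the formula obtained from $\alpha$ by replacing every $X$-disjunction $\bigvee_{x_k\in S'}x_k$ occurring in it by $\top$ (so $\alpha^\top$ mentions no state of $X$). The lemma is: $\alpha\cd x_j\models\alpha^\top$ for every state $x_j$, with $\alpha\cd x_j=\alpha^\top$ whenever $x_j\in S$. This is pure NNF monotonicity: because $\alpha$ is an NNF and each such disjunction occurs in it \emph{unnegated}, replacing these subformulas by fresh positive literals shows $\alpha$ is monotone increasing in their truth values; conditioning on $x_j$ sends $\bigvee_{x_k\in S'}x_k$ to $\top$ if $x_j\in S'$ and to $\bot$ otherwise, which is dominated by setting all of them to $\top$, the two coinciding exactly when $x_j\in S\subseteq S'$. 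I would also note the elementary facts $\beta\cd x_j=\top$ for $x_j\in S$ and $\beta\cd x_j=\bot$ for $x_j\notin S$.

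The remaining direction I would then argue semantically. Let $\rho$ be a model of $\forall x_i\cdot(\alpha\vee\beta)$, so by Definition~\ref{def:d-quantify} $\rho\models(\alpha\cd x_i)\vee(\beta\cd x_i)$ and $\rho\models x_i\vee(\alpha\cd x_j)\vee(\beta\cd x_j)$ for all $j\neq i$; the goal is $\rho\models\forall x_i\cdot\alpha$ or $\rho\models\forall x_i\cdot\beta$. If $\rho\models x_i$, every ``$x_i\vee\cdots$'' conjunct of $\forall x_i\cdot\alpha$ and of $\forall x_i\cdot\beta$ holds trivially, and $\rho$ satisfies $\alpha\cd x_i$ or $\beta\cd x_i$, so $\rho$ satisfies one of the two quantified formulas. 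If $\rho\not\models x_i$, then either $S$ is the whole set of states of $X$, whence $\beta$ is valid, $\forall x_i\cdot\beta=\top$, and $\rho\models\forall x_i\cdot\beta$; or $S$ omits some state $x_{j_0}$, and then using $\beta\cd x_{j_0}=\bot$ together with the appropriate defining conjunct of $\forall x_i\cdot(\alpha\vee\beta)$ (the $j_0$-th one if $j_0\neq i$, the first one if $j_0=i$) and $\rho\not\models x_i$, we obtain $\rho\models\alpha\cd x_{j_0}$, hence $\rho\models\alpha^\top$ by the lemma. In this last case I finish by checking that $\rho\models\alpha\cd x_j$ for \emph{every} state $x_j$ of $X$: for $x_j\in S$ because $\alpha\cd x_j=\alpha^\top$; for $x_j\notin S$ with $j\neq i$ from the $j$-th defining conjunct (as $\beta\cd x_j=\bot$ and $\rho\not\models x_i$); and for $j=i$ with $x_i\notin S$ from the first defining conjunct. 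This gives $\rho\models\forall x_i\cdot\alpha$. Combining the two directions yields the claimed equality.

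The step I expect to be the main obstacle is the last one: matching the conjuncts of $\forall x_i\cdot(\alpha\vee\beta)$ against those of $\forall x_i\cdot\alpha$ and $\forall x_i\cdot\beta$ while correctly carving out the degenerate configurations ($S$ equal to all states of $X$, or $S$ missing only $x_i$), which is precisely where one is tempted to distribute $\forall x_i$ over $\vee$ without justification. Routing the non-trivial case through ``first establish $\rho\models\alpha^\top$, then reconstruct all of $\forall x_i\cdot\alpha$'' is what keeps the bookkeeping under control. The NNF-monotonicity lemma itself is routine, once one checks that the disjunctions in question really do occur positively, which is guaranteed by $\alpha$ being an NNF.
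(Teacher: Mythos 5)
Your proof is correct, but it takes a genuinely different route from the paper's. The paper argues algebraically: it first establishes the entailment \(\alpha \cd x_p \models \alpha \cd x_q\) for every \(p\) and every \(x_q \in S\) (your lemma \(\alpha \cd x_j \models \alpha^\top = \alpha \cd x_q\) is exactly this fact, packaged through the auxiliary formula \(\alpha^\top\)), turns it into the identity \(\bigwedge_{x_j \not\in S} \alpha \cd x_j = \bigwedge_j \alpha \cd x_j\) (Equation~\ref{eq:wtop}), and then, splitting on \(x_i \in S\) versus \(x_i \not\in S\), expands \emph{both} sides of the claimed equality via Definition~\ref{def:d-quantify} into the same closed form (\(x_i \vee \bigwedge_j \alpha \cd x_j\), resp.\ \((x_i \wedge \alpha) \vee \bigwedge_j \alpha \cd x_j\)). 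You instead split the equality into two entailments: the inclusion \((\forall x_i \cdot \alpha) \vee (\forall x_i \cdot \beta) \models \forall x_i \cdot (\alpha \vee \beta)\) comes for free from monotonicity of \(\forall x_i\) under \(\models\) (a general fact the paper never isolates), and the converse is proved model-by-model, using a witness state \(x_{j_0} \not\in S\) to force \(\rho \models \alpha^\top\) and then reassembling every conjunct of \(\forall x_i \cdot \alpha\); your bookkeeping there is sound, including the check that \(\rho \models \alpha \cd x_i\) is available whether or not \(x_i \in S\). What the paper's computation buys is the explicit simplified forms of \(\forall x_i \cdot (\alpha \vee \beta)\), which it reuses when deriving the closed-form complete reasons; what yours buys is a clean separation of which direction actually uses the structural hypothesis, a reusable monotonicity lemma, and explicit treatment of the degenerate case where \(S\) exhausts the states of \(X\) (so that \(\beta\) is valid and \(\forall x_i \cdot \beta = \top\)) --- a case the paper's chain of identities glosses over, since Equation~\ref{eq:wtop} needs at least one state outside \(S\) to anchor the entailment.
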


Consider variables \(X\) (\(x_1, \ldots, x_4\)) and \(Y\) (\(y_1, y_2\))
and the formulas \(\alpha = y_1 (x_1 + x_2 + x_4)\) and \(\beta = x_1 + x_2\). 
We can invoke Proposition~\ref{prop:wtop} to distribute \(\forall x_1\) using
\(S = \{x_1,x_2\}\) and \(S' = \{x_1,x_2,x_4\}\).
Hence, \(\forall x_1 \cdot (\alpha \vee \beta)
= \forall x_1 \cdot (y_1 (x_1 + x_2 + x_4)) \vee  \forall x_1 \cdot (x_1 + x_2)
= y_1 x_1 + x_1 = x_1\).
Propositions~\ref{prop:distribute-and-or} and~\ref{prop:distribute-or-ind} 
do not license this distribution of \(\forall x_i\) though.

\shrink{
\adnan{The following analysis is not complete as it does not over \(\forall \n{x}\).}
For binary formulas, Proposition~\ref{prop:wtop} reduces to the following.
If \(\alpha\) is independent of \(\n{x}\), 
then \(\forall x \cdot (x \vee \alpha) = (\forall x \cdot x) \vee (\forall x \cdot \alpha) = x \vee  (\forall x \cdot \alpha) = x \vee \alpha\).
Moreover, if \(\alpha\) is independent of \(x\), 
then \(\forall x \cdot (\n{x} \vee \alpha) = (\forall x \cdot \n{x}) \vee (\forall x \cdot \alpha) = \forall x \cdot \alpha = \alpha \cd x\).
Note that  
\(\forall x \cdot (x \vee \alpha) = x \vee \alpha\) and 
\(\forall x \cdot (\n{x} \vee \alpha) = \alpha \cd x\) even when \(\alpha\) is not independent of \(x\) and \(\alpha\) is not independent of \(\n{x}\).
However, to obtain these equalities using distribution (as opposed to Definition~\ref{def:d-quantify}), we need the
stated conditions. Another way to derive these equalities is to observe that \(x \vee \alpha = x \vee (\alpha \cd \n{x})\)
and \(\n{x} \vee \alpha = \n{x} \vee (\alpha \cd x)\). We can then use distribution without the stated conditions
since \(X\) does not occur in \(\alpha \cd \n{x}\) or \(\alpha \cd x\).
}

\section{The Complete Reasons for Decision Graphs}
\label{sec:cr-cf}

 \begin{figure}[tb]
        \centering
        \scalebox{0.80}{
        \begin{tikzpicture}[
        roundnode/.style={circle ,draw=black, thick},
        squarednode/.style={rectangle, draw=black, thick},
        ]
        \node[squarednode]      (SAT)                              {\Large SAT};
        \node[squarednode]        (GPA)       [below=of SAT, xshift = -1cm] {\Large GPA};
        \node[squarednode]        (Essay)       [below=of SAT, xshift = 1cm, yshift = -1.5cm] {\Large Essay};
        \node[squarednode]        (Interview)       [below=of GPA, yshift = -2cm] {\Large Interview};
        \node[roundnode, inner sep=1pt, align=center, text width = 13mm]        (Reject1)   [below=of GPA, xshift = -2.5cm, yshift = -1cm] {\Large Reject};
        \node[roundnode, inner sep=1pt, align=center, text width = 13mm]        (Reject2)   [below=of Interview, xshift = -2cm, yshift = -0.5cm] {\Large Reject};
        \node[roundnode, inner sep=1pt, align=center, text width = 13mm]        (Accept)   [below=of Interview, xshift = 3cm, yshift = -0.5cm] {\Large Accept};
        
        \draw[-latex, thick] (SAT.240) -- node [anchor = center, xshift = -8mm, yshift = 1mm] {$<1450$} (GPA.north);
        \draw[-latex, thick] (SAT.300) -- node [anchor = center, xshift = 7mm, yshift = 1mm] {$\geq 1450$} (Essay.north);
        \draw[-latex, thick] (GPA.300) --  node [anchor = center, xshift = 4mm, yshift = 1.5mm] {high} (Essay.180);
        \draw[-latex, thick] (GPA.south) -- node [anchor = center, xshift = -6.5mm, yshift = 1mm] {medium} (Interview.north);
        \draw[-latex, thick] (GPA.240) --  node [anchor = center, xshift = -5mm, yshift = 2mm] {low} (Reject1.north);
        \draw[-latex, thick] (Essay.240) -- node [anchor = center, xshift = -4mm, yshift = 1mm] {fail} (Interview.45);
        \draw[-latex, thick] (Essay.270) -- node [anchor = center, xshift = 5mm] {pass} (Accept.80);
        \draw[-latex, thick] (Interview.240) -- node [anchor = center, xshift = -7mm] {fail} (Reject2.north);
        \draw[-latex, thick] (Interview.300) -- node [anchor = center, xshift = -9mm] {pass} (Accept.100);
        \end{tikzpicture}
        }
        \caption{A classifier in the form of a decision graph. \label{fig:admissions-dg}}
    \end{figure}
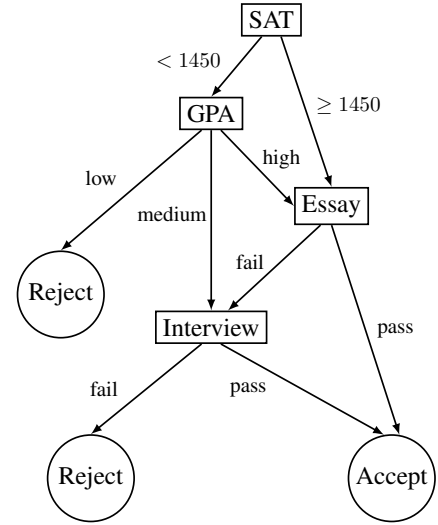

We next provide closed forms for the complete reasons of decision graphs, which subsume
decision trees, in the form of monotone, \(\vee\)-decomposable NNFs. This
will later facilitate the computation of their prime implicants 
and implicates (sufficient and necessary reasons).  
We first treat multi-class decision graphs with nominal features and then
treat decision graphs with numeric features; see Figures~\ref{fig:admissions-dg} and~\ref{fig:dt-dg}.

Each leaf node in a decision graph is labeled with some class~\(c\). Moreover,
each internal node \(T\) in the graph has outgoing edges \(\DE(X,S_1,T_1), \ldots, \DE(X,S_n,T_n)\), \(n \geq 2\).
We say in this case that node \(T\) {\em tests} variable \(X\). The children of node \(T\) are \(T_1, \ldots, T_n\)
and \(S_1, \ldots, S_n\) is a partition of {\em some} states of variable \(X\). 
A decision graph will be represented by its root node.
Hence, each node in the graph represents a smaller decision graph.
We allow variables to be tested more than once on a path from the root to a leaf but under
the following condition, which we call the {\em weak test-once property.} 
Consider path \(\ldots, T\DE(X,S_j,T_j), \ldots, T'\DE(X,R_k,T_k), \ldots\) from the root to leaf
and suppose that nodes \(T\) and \(T'\) test variable \(X\). If no nodes between \(T\)
and \(T'\) on the path test variable \(X\), then \(\{R_k\}_k\) must be a partition of states \(S_j\). Moreover,
if \(T\) is the first node that tests \(X\) on the path, then \(\{S_j\}_j\)
must be a partition of {\em all} states for \(X\).
For binary variables, the weak test-once property reduces to the standard test-once 
property: A variable can be tested at most once on any path from the root to a leaf. 
The weak test-once property is critical for treating numeric features.
As we show later, one can easily discretize continuous variables based on the thresholds used at decision nodes,
which leads to decision graphs that satisfy the weaker test-once property but not the standard one. 

A decision graph classifies an instance \(\delta\) as follows. Suppose \(\delta[X]\) is the
state of variable \(X\) in instance \(\delta\). We start at the graph root and repeat the following.
When we are at node \(T\) that has outgoing edges \(\DE(X,S_1,T_1), \ldots, \DE(X,S_n,T_n)\), we follow the
(unique) edge \(\DE(X,S_i,T_i)\) which satisfies \(\delta[X] \in S_i\). This process leads us to a unique leaf node. 
The label \(c\) of this leaf node is then the class assigned to instance \(\delta\) by the decision 
graph (that is, instance \(\delta\) belongs to class \(c\)).

We next provide a closed-form NNF that captures the instances belonging to some class \(c\) in a decision graph.

\begin{definition}\label{def:dt-nnf}
The NNF for a decision graph \(T\) and class \(c\) is denoted \(\Delta^c[T]\) and defined inductively as follows:
\begin{equation*}
\footnotesize
\Delta^c[T] =
\left\{
\begin{array}{ll}
\top & \mbox{if \(T\) has class \(c\)} \\
\bot & \mbox{if \(T\) has class \(c'  \neq c\)} \\
\bigwedge_j (\Delta^c[T_j] \vee \bigvee_{x_i \not \in S_j} x_i)
& \mbox{if \(T\) has edges \(\DE(X,S_j,T_j)\)}
\end{array}
\right.
\label{eq:dt-nnf} 
\end{equation*}
\end{definition}

\begin{proposition}\label{prop:dt-nnfs}
For decision graph \(T\), class \(c\) and instance \(\delta\), we have
\(\delta \models \Delta^c[T]\) iff \(T\) assigns class \(c\) to instance \(\delta\). 
\end{proposition}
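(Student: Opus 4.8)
The plan is a structural induction on the decision graph \(T\), measured by its height, i.e., the length of a longest directed path from \(T\) down to a leaf; this parameter is well founded because a decision graph is acyclic and each child \(T_j\) of an internal node \(T\) has strictly smaller height. Two elementary facts will be used repeatedly: \(\delta \models \alpha\) holds exactly when \(\alpha \cd \delta\) evaluates to true, and for an instance \(\delta\) and a state \(x_k\) of a variable \(X\), the literal \(x_k\) is satisfied by \(\delta\) iff \(\delta[X] = x_k\) and is falsified by \(\delta\) otherwise, since an instance fixes exactly one state per variable.

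For the base case, \(T\) is a leaf with some class \(c'\). Then \(\Delta^c[T]\) is \(\top\) if \(c' = c\) and \(\bot\) if \(c' \neq c\), so every instance is a model in the first case and none is in the second; since a leaf with class \(c'\) assigns \(c'\) to every instance, this gives \(\delta \models \Delta^c[T]\) iff \(c = c'\) iff \(T\) assigns class \(c\) to \(\delta\).

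For the inductive step, let \(T\) be internal, testing variable \(X\), with edges \(\DE(X,S_1,T_1),\ldots,\DE(X,S_n,T_n)\), so that (renaming the bound variable in Definition~\ref{def:dt-nnf}) \(\Delta^c[T] = \bigwedge_j (\Delta^c[T_j] \vee \bigvee_{x_k \not\in S_j} x_k)\). When \(\delta\) is classified by \(T\) the process follows the unique edge \(\DE(X,S_i,T_i)\) with \(\delta[X] \in S_i\), and \(T\) then assigns class \(c\) to \(\delta\) iff \(T_i\) does. I evaluate the conjuncts of \(\Delta^c[T]\) under \(\delta\). For \(j \neq i\): since \(S_1,\ldots,S_n\) partition a set of states of \(X\) and \(\delta[X] \in S_i\), we get \(\delta[X] \not\in S_j\), so the literal \(\delta[X]\) is a disjunct of \(\bigvee_{x_k \not\in S_j} x_k\) satisfied by \(\delta\), and this conjunct holds outright. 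For \(j = i\): since \(\delta[X] \in S_i\), \(\delta\) falsifies every literal \(x_k\) with \(x_k \not\in S_i\), whence \(\delta \models \Delta^c[T_i] \vee \bigvee_{x_k \not\in S_i} x_k\) iff \(\delta \models \Delta^c[T_i]\). Combining, \(\delta \models \Delta^c[T]\) iff \(\delta \models \Delta^c[T_i]\), which by the induction hypothesis for \(T_i\) holds iff \(T_i\) assigns class \(c\) to \(\delta\), i.e., iff \(T\) assigns class \(c\) to \(\delta\).

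I expect the clause manipulation to be entirely routine; the one point that needs care is the assertion, used in the inductive step, that whenever \(\delta\) is classified by an internal node \(T\) testing \(X\) there is a \emph{unique} \(i\) with \(\delta[X] \in S_i\). Uniqueness is immediate from \(\{S_j\}_j\) being a partition; existence is clear at the first node testing \(X\) along \(\delta\)'s classification path, where \(\{S_j\}_j\) partitions all states of \(X\), and at any later \(X\)-test it follows from the weak test-once property, which forces the outgoing edge-state-sets to partition the set \(S_j\) that \(\delta\) traversed at the previous \(X\)-test, and that set contains \(\delta[X]\). This bookkeeping is the place where the structural hypotheses on decision graphs are genuinely used; everything else is the straightforward induction above, which incidentally also certifies that the family \(\{\Delta^c[T]\}_c\) is mutually exclusive and exhaustive over instances, since each leaf carries exactly one class.
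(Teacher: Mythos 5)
Your proof is correct, but it takes a different route from the paper's. The paper does not induct on \(\Delta^c[T]\) directly: it introduces the dual formula \(\Lambda^c[T] = \bigvee_j (\Lambda^c[T_j] \wedge \bigvee_{x_i \in S_j} x_i)\) (the standard trace-style translation of a decision graph with the \(\top\)/\(\bot\) leaves swapped), asserts that \(\Lambda^c[T]\) characterizes the instances of classes \(c' \neq c\), and then observes that \(\Delta^c[T]\) is exactly the De Morgan negation of \(\Lambda^c[T]\), so it characterizes class \(c\). You instead verify the conjunctive form directly by structural induction, showing that under \(\delta\) every conjunct for a non-followed edge is discharged by the literal \(\delta[X]\) and the conjunct for the followed edge reduces to \(\Delta^c[T_i]\). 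What your version buys is self-containment and an explicit treatment of the one genuinely delicate point — that the weak test-once property guarantees a unique outgoing edge with \(\delta[X] \in S_i\) exists at every \(X\)-test along the classification path, which the paper leaves implicit inside the phrase ``standard translation.'' What the paper's version buys is brevity and a reusable artifact: the dual pair \((\Lambda^c[T], \Delta^c[T])\) makes the positive/negative-class symmetry explicit, which is the same negation-by-De-Morgan device the paper later reuses for SDDs. The two arguments carry the same mathematical content (your per-conjunct case split is precisely the dual of checking which disjunct of \(\Lambda^c[T]\) the traced path activates), so either is acceptable; yours is arguably the more rigorous writeup.
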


This NNF is positive and can be constructed in linear time but 
is not \(\vee\)-decomposable: The disjuncts in \(\bigvee_{x_j \not \in S_i} x_j\) 
share variables and  variable \(X\) will appear in \(\Delta^c[T_j]\) if tested
again in graph \(T_j\). Yet, this NNF is tractable for universal quantification as revealed
in the proof of the next result, which provides closed-form complete reasons for decision graphs.
 
\begin{proposition}\label{prop:cr-cf}
Let \(T\) be a decision graph, \(\delta\) be an instance in class \(c\) and \(\delta[X]\) be the state of variable \(X\) in instance \(\delta\).
The complete reason \(\forall \delta \cdot \Delta^c[T]\) is given by the NNF:
\begin{equation}
\footnotesize
\Gamma^c[T] = 
\left\{
\begin{array}{ll}
\top & \mbox{if \(T\) has class \(c\)}  \\
\bot & \mbox{if \(T\) has class \(c' \neq c\)} \\
\bigwedge_{j} (\Gamma^c[T_j] \vee \l_j) & 
\mbox{if \(T\) has edges \(\DE(X,S_j,T_j)\)}
\end{array}
\right.
\label{eq:cr-cf} 
\end{equation}
where \(\l_j = \delta[X]\) if  \(\delta[X] \in S_k\) for some \(k \neq j\), else \(\l_j = \bot\).
\end{proposition}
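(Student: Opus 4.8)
I would prove this by structural induction on the decision graph \(T\). The base cases, where \(T\) is a leaf, are immediate from Proposition~\ref{prop:d-quantify-b} since \(\forall\delta\cdot\top=\top\) and \(\forall\delta\cdot\bot=\bot\). For the inductive step, assume \(T\) tests variable \(X\) through edges \(\DE(X,S_j,T_j)\), write \(x_m=\delta[X]\) and \(\beta_j=\bigvee_{x_i\notin S_j}x_i\), so that \(\Delta^c[T]=\bigwedge_j(\Delta^c[T_j]\vee\beta_j)\). Since universal quantification is commutative, I would first quantify the single state \(x_m\) out of \(\Delta^c[T]\) and then the remaining states of \(\delta\). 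By Proposition~\ref{prop:distribute-and-or}, \(\forall x_m\) distributes over the outer conjunction, so the work reduces to computing \(\forall x_m\cdot(\Delta^c[T_j]\vee\beta_j)\) for each \(j\).

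The crux is to distribute \(\forall x_m\) over the disjunction \(\Delta^c[T_j]\vee\beta_j\) using Proposition~\ref{prop:wtop}, whose hypothesis I would verify as follows: by the weak test-once property, every test of \(X\) inside \(T_j\) uses a partition refining \(S_j\), so a short sub-induction on \(T_j\) shows that \(X\) occurs in \(\Delta^c[T_j]\) only inside disjunctions \(\bigvee_{x_k\in S'}x_k\) with \(S'\supseteq\{x_i : x_i\notin S_j\}\). Proposition~\ref{prop:wtop} with \(S=\{x_i : x_i\notin S_j\}\) then yields \(\forall x_m\cdot(\Delta^c[T_j]\vee\beta_j)=(\forall x_m\cdot\Delta^c[T_j])\vee(\forall x_m\cdot\beta_j)\). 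A direct computation from Definition~\ref{def:d-quantify} — using that \(S_j\) is a non-empty proper subset of the states of \(X\) — gives \(\forall x_m\cdot\beta_j=x_m\) when \(x_m\notin S_j\) and \(\forall x_m\cdot\beta_j=\bot\) when \(x_m\in S_j\); that is, \(\forall x_m\cdot\beta_j=\l_j\). I would then push the remaining quantifications (of the states \(\delta[Y]\), \(Y\neq X\)) inward: each such state lies in a variable not occurring in \(\l_j\), so by Propositions~\ref{prop:distribute-and-or} and~\ref{prop:d-quantify-b} it passes the disjunct \(\vee\,\l_j\) untouched and acts only on \(\Delta^c[T_j]\). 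This leaves \(\forall\delta\cdot\Delta^c[T]=\bigwedge_j\big((\forall\delta\cdot\Delta^c[T_j])\vee\l_j\big)\), and it remains to replace \(\forall\delta\cdot\Delta^c[T_j]\) by \(\Gamma^c[T_j]\).

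The main obstacle is that this replacement is \emph{not} valid as a naive induction hypothesis: for a child \(T_j\) with \(x_m\notin S_j\) — one lying off the branch \(\delta\) actually follows — quantifying \(\Delta^c[T_j]\) leaves behind literals absent from \(\Gamma^c[T_j]\), because \(\Delta^c[T_j]\) carries ``stale'' mentions of states already excluded higher up. The fix is to strengthen the claim: track, for each subgraph, the set \(\Sigma\) of states of each variable still possible (all states at the root; on descending through \(\DE(X,S_j,T_j)\) the entry for \(X\) becomes \(S_j\)), and prove the invariant that \(\forall\delta\cdot\Delta^c[T]\) and \(\Gamma^c[T]\) agree \emph{modulo} \(D=\bigvee\{\delta[Y] : \delta[Y]\notin\Sigma[Y]\}\), i.e.\ \((\forall\delta\cdot\Delta^c[T])\vee D=\Gamma^c[T]\vee D\). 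The weak test-once property guarantees that \(\bigcup_j S_j\) equals \(\Sigma[X]\) at \(T\), so descending to \(T_j\) adds \(x_m\) to \(D\) exactly when \(x_m\notin S_j\) and leaves \(D\) unchanged otherwise; in the identity \(\forall\delta\cdot\Delta^c[T]=\bigwedge_j((\forall\delta\cdot\Delta^c[T_j])\vee\l_j)\) the new disjunct \(\l_j=x_m\) is precisely what absorbs the discrepancy, and the recursion closes. The proposition is then the case \(T=\) root, \(\Sigma=\) all states, \(D=\bot\).

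(An alternative that avoids the context bookkeeping: distribute \(\Delta^c[T]\) into its equivalent CNF — one clause per root-to-leaf path reaching a leaf of class \(c'\neq c\), the clause collecting \(\bigvee_{x_i\notin S}x_i\) over the sets \(S\) labelling edges on that path — observe that \(\forall\delta\cdot\) respects logical equivalence, compute \(\forall\delta\) of each clause (routine, since after grouping its literals by variable the clause is \(\vee\)-decomposable), and match the result against the same expansion of \(\Gamma^c[T]\), using that the sets chosen for any variable along a single path are nested by the weak test-once property.)
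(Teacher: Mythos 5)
Your proposal is correct and follows essentially the same route as the paper's proof: distribute $\forall\delta$ over the conjunction via Proposition~\ref{prop:distribute-and-or} and over the disjunctions $\Delta^c[T_j]\vee\bigvee_{x_i\notin S_j}x_i$ via Proposition~\ref{prop:wtop} (verifying its hypothesis from the weak test-once property exactly as the paper does), compute $\forall x_m$ of the state-disjunctions, and then repair the discrepancy for ``stale'' states excluded higher up. Your strengthened invariant $(\forall\delta\cdot\Delta^c[T])\vee D=\Gamma^c[T]\vee D$ is just a cleaner packaging of the paper's final step, which absorbs the leftover literal $x_l$ into the enclosing disjunct using Lemma~\ref{lem:state-or}.
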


 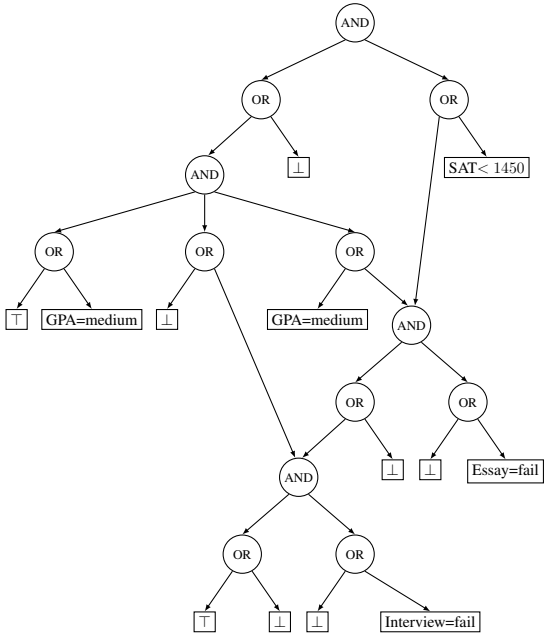
\begin{figure}[tb]
        \centering
        \scalebox{0.5}{
        \begin{tikzpicture}[
        roundnode/.style={circle ,draw=black, thick, inner sep=1pt, align=center, text width = 9mm},
        squarednode/.style={rectangle, draw=black, thick},
        ]
        \node[roundnode] (AND1) {AND};
        \node[roundnode] (OR1_1) [below = of AND1, xshift = -1.5cm] {OR};
        \node[roundnode] (OR1_2) [below = of AND1, xshift = 1.5cm] {OR};
        \draw[-latex, thick] (AND1.240) -- (OR1_1.90);
        \draw[-latex, thick] (AND1.300) -- (OR1_2.90);
        \node[squarednode] (T1_11) [below = of OR1_1, xshift = -1cm] {\Large $\top$};
        \node[squarednode] (T1_12) [below = of OR1_1, xshift = 1cm] {\Large $\bot$};
        \node[squarednode] (T1_21) [below = of OR1_2, xshift = -1cm] {\Large $\bot$};
        \node[squarednode] (T1_22) [below = of OR1_2, xshift = 2cm] {\Large Interview=fail};
        \draw[-latex, thick] (OR1_1.240) -- (T1_11.90);
        \draw[-latex, thick] (OR1_1.300) -- (T1_12.90);
        \draw[-latex, thick] (OR1_2.240) -- (T1_21.90);
        \draw[-latex, thick] (OR1_2.300) -- (T1_22.90);

        \node[roundnode] (AND2) [above = of AND1, xshift = 3cm, yshift = 2cm] {AND};
        \node[roundnode] (OR2_1) [below = of AND2, xshift = -1.5cm] {OR};
        \node[roundnode] (OR2_2) [below = of AND2, xshift = 1.5cm] {OR};
        \draw[-latex, thick] (AND2.240) -- (OR2_1.90);
        \draw[-latex, thick] (AND2.300) -- (OR2_2.90);
        \node[squarednode] (T2_12) [below = of OR2_1, xshift = 1cm] {\Large $\bot$};
        \node[squarednode] (T2_21) [below = of OR2_2, xshift = -1cm] {\Large $\bot$};
        \node[squarednode] (T2_22) [below = of OR2_2, xshift = 1cm] {\Large Essay=fail};
        \draw[-latex, thick] (OR2_1.240) -- (AND1.80);
        \draw[-latex, thick] (OR2_1.300) -- (T2_12.90);
        \draw[-latex, thick] (OR2_2.240) -- (T2_21.90);
        \draw[-latex, thick] (OR2_2.300) -- (T2_22.90);
        
        \node[roundnode] (AND3) [above = of AND1, xshift = -2.5cm, yshift = 6cm] {AND};
        \node[roundnode] (OR3_1) [below = of AND3, xshift = -4cm] {OR};
        \node[roundnode] (OR3_2) [below = of AND3] {OR};
        \node[roundnode] (OR3_3) [below = of AND3, xshift = 4cm] {OR};
        \draw[-latex, thick] (AND3.240) -- (OR3_1.90);
        \draw[-latex, thick] (AND3.270) -- (OR3_2.90);
        \draw[-latex, thick] (AND3.300) -- (OR3_3.90);
        \node[squarednode] (T3_11) [below = of OR3_1, xshift = -1cm] {\Large $\top$};
        \node[squarednode] (T3_12) [below = of OR3_1, xshift = 1cm] {\Large GPA=medium};
        \node[squarednode] (T3_21) [below = of OR3_2, xshift = -1cm] {\Large $\bot$};
        \node[squarednode] (T3_31) [below = of OR3_3, xshift = -1cm] {\Large GPA=medium};
        \draw[-latex, thick] (OR3_1.240) -- (T3_11.90);
        \draw[-latex, thick] (OR3_1.300) -- (T3_12.90);
        \draw[-latex, thick] (OR3_2.240) -- (T3_21.90);
        \draw[-latex, thick] (OR3_2.300) -- (AND1.100);
        \draw[-latex, thick] (OR3_3.240) -- (T3_31.90);
        \draw[-latex, thick] (OR3_3.300) -- (AND2.100);
        
        \node[roundnode] (AND4) [above = of AND3, xshift = 4cm, yshift = 2cm] {AND};
        \node[roundnode] (OR4_1) [below = of AND4, xshift = -2.5cm] {OR};
        \node[roundnode] (OR4_2) [below = of AND4, xshift = 2.5cm] {OR};
        \draw[-latex, thick] (AND4.240) -- (OR4_1.90);
        \draw[-latex, thick] (AND4.300) -- (OR4_2.90);
        \node[squarednode] (T4_12) [below = of OR4_1, xshift = 1cm] {\Large $\bot$};
        \node[squarednode] (T4_22) [below = of OR4_2, xshift = 1cm] {\Large SAT$<1450$};
        \draw[-latex, thick] (OR4_1.240) -- (AND3.80);
        \draw[-latex, thick] (OR4_1.300) -- (T4_12.90);
        \draw[-latex, thick] (OR4_2.240) -- (AND2.80);
        \draw[-latex, thick] (OR4_2.300) -- (T4_22.90);
        \end{tikzpicture}
        }
      \caption{A complete reason constructed by Proposition~\ref{prop:cr-cf}
      for the decision graph in Figure~\ref{fig:admissions-dg} and instance
      SAT~$<1450$, GPA=medium, Essay=fail, Interview=fail. 
      This complete reason is in the form of a monotone, \(\vee\)-decomposable NNF. 
      \label{fig:admissions-cr}}
    \end{figure}

Consider the decision graph in Figure~\ref{fig:admissions-dg} and an applicant who
scored \(< 1450\) on the SAT, had a medium GPA and did not pass their essay or interview.
This applicant is rejected by the classifier and the complete reason for the decision, 
as constructed by Proposition~\ref{prop:cr-cf}, is shown in Figure~\ref{fig:admissions-cr}.

\begin{proposition}\label{prop:dt-cr}
Let \(T\) be a decision graph and \(\delta\) be an instance in class \(c\).
The complete reason \(\forall \delta \cdot \Delta^c[T]\) in Equation~\ref{eq:cr-cf} is 
an NNF that is monotone and \(\vee\)-decomposable.
\end{proposition}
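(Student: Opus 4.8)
The plan is to verify the three claims about $\Gamma^c[T]=\forall\delta\cdot\Delta^c[T]$ from Equation~\ref{eq:cr-cf} in order of increasing difficulty: that it is an NNF, that it is monotone, and that it is $\vee$-decomposable. The first is immediate from Equation~\ref{eq:cr-cf}: the construction only forms conjunctions and disjunctions of $\top$, $\bot$ and states $\l_j=\delta[X]$, never negating a subformula and never introducing a negative literal, so $\Gamma^c[T]$ is a positive NNF. For monotonicity I would argue, by a trivial structural induction on $T$, that the only literal of a variable $X$ that can occur anywhere in $\Gamma^c[T]$ is the single state $\delta[X]$: the only place a literal of $X$ enters the formula is the disjunct $\l_j$ at a node testing $X$, and there $\l_j$ is either the constant $\bot$ or exactly $\delta[X]$. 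Since $\delta[X]$ is fixed, $\Gamma^c[T]$ contains no two distinct states of the same variable, so being positive it is monotone.

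For $\vee$-decomposability I would induct on the decision graph (processing nodes in reverse topological order). At a leaf, $\Gamma^c[T]\in\{\top,\bot\}$ has no disjunctions. At a node $T$ testing $X$ with edges $\DE(X,S_j,T_j)$, every disjunction of $\Gamma^c[T]=\bigwedge_j(\Gamma^c[T_j]\vee\l_j)$ is either a disjunction occurring inside some $\Gamma^c[T_j]$, which is $\vee$-decomposable by the induction hypothesis, or one of the new disjunctions $\Gamma^c[T_j]\vee\l_j$. For the latter, if $\l_j=\bot$ then the disjunct $\bot$ mentions no variable and we are done; otherwise $\l_j=\delta[X]$, which by definition of $\l_j$ means $\delta[X]\in S_k$ for some $k\neq j$, hence $\delta[X]\notin S_j$ since $S_1,\dots,S_n$ partition a set of states of $X$. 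Thus the whole proof reduces to the following key claim: if $\delta[X]\notin S_j$, then variable $X$ does not occur in $\Gamma^c[T_j]$. Granting this, $\Gamma^c[T_j]$ and $\l_j$ are over disjoint variable sets, so $\Gamma^c[T_j]\vee\l_j$ is $\vee$-decomposable, which closes the induction (and also reconfirms monotonicity since no conflicting state of $X$ is ever put alongside $\delta[X]$).

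To prove the key claim I would use the weak test-once property. By the monotonicity argument, the only way $X$ could occur in $\Gamma^c[T_j]$ is through an $\l$-disjunct equal to $\delta[X]$ attached at some node $U$ in the subgraph rooted at $T_j$ that tests $X$; so it suffices to show that every such $U$, with edges $\DE(X,R_k,U_k)$, satisfies $\delta[X]\notin\bigcup_k R_k$, for then each of its $\l$-disjuncts is $\bot$. Given such a $U$, pick a root-to-leaf path that uses the edge $\DE(X,S_j,T_j)$ and then passes through $U$; this exists because every node is reachable from the root and reaches a leaf. Applying the weak test-once property iteratively to the successive tests of $X$ along this path, starting from the test at $T$ that uses the partition $S_1,\dots,S_n$ and taking branch $S_j$, each subsequent test of $X$ uses a partition of the label set of the previous test, so the label set of every later test of $X$ on the path, in particular $\bigcup_k R_k$ at $U$, is contained in $S_j$. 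Since $\delta[X]\notin S_j$, we get $\delta[X]\notin\bigcup_k R_k$, proving the claim. The main obstacle is precisely this last argument: one must be careful because the decision graph is a DAG, so $U$ may be reachable by many paths and $X$ may be tested repeatedly, while the weak test-once property is stated path-by-path rather than globally; the fix is to note that $\l$ at $U$ depends only on $U$ and $\delta$, so exhibiting one suitable root-to-leaf path through $\DE(X,S_j,T_j)$ and $U$ is enough to force that $\l$ to be $\bot$.
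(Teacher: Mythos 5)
Your proof is correct and follows essentially the same route as the paper's: literals enter \(\Gamma^c[T]\) only as \(\l_j \in \{\bot, \delta[X]\}\) (which gives NNF-ness and monotonicity immediately), and \(\vee\)-decomposability reduces, via the weak test-once property, to the key claim that \(\delta[X]\) cannot reappear in \(\Gamma^c[T_j]\) once \(\delta[X] \notin S_j\). The paper's version is terser---it applies the weak test-once property only to the next test of \(X\) below \(T_j\) and leaves implicit both the iteration of that property along paths and the DAG/multiple-paths subtlety, which you spell out correctly.
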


Even though we are working
with decision graphs that include discrete variables and multiple classes, we get complete
reasons in the form of monotone NNFs, which are effectively Boolean NNFs.
This will simplify the computation of necessary and sufficient reasons in later sections,
as it allows us to avoid certain complications that can arise when binarizing discrete variables;
see, e.g.,~\cite{corr/abs-2007-01493}.

\subsection{Numeric Features}

\begin{figure}[tb]
 \centering
 \includegraphics[height=0.34\textwidth]{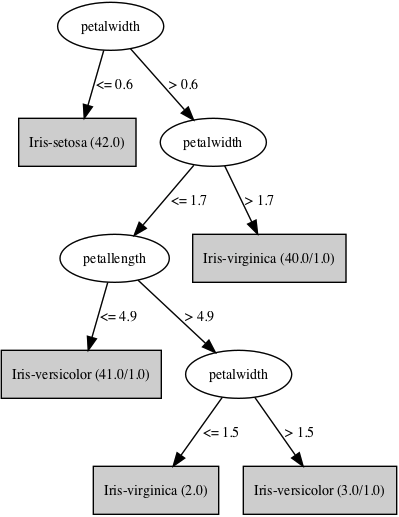} \quad
  \includegraphics[height=0.34\textwidth]{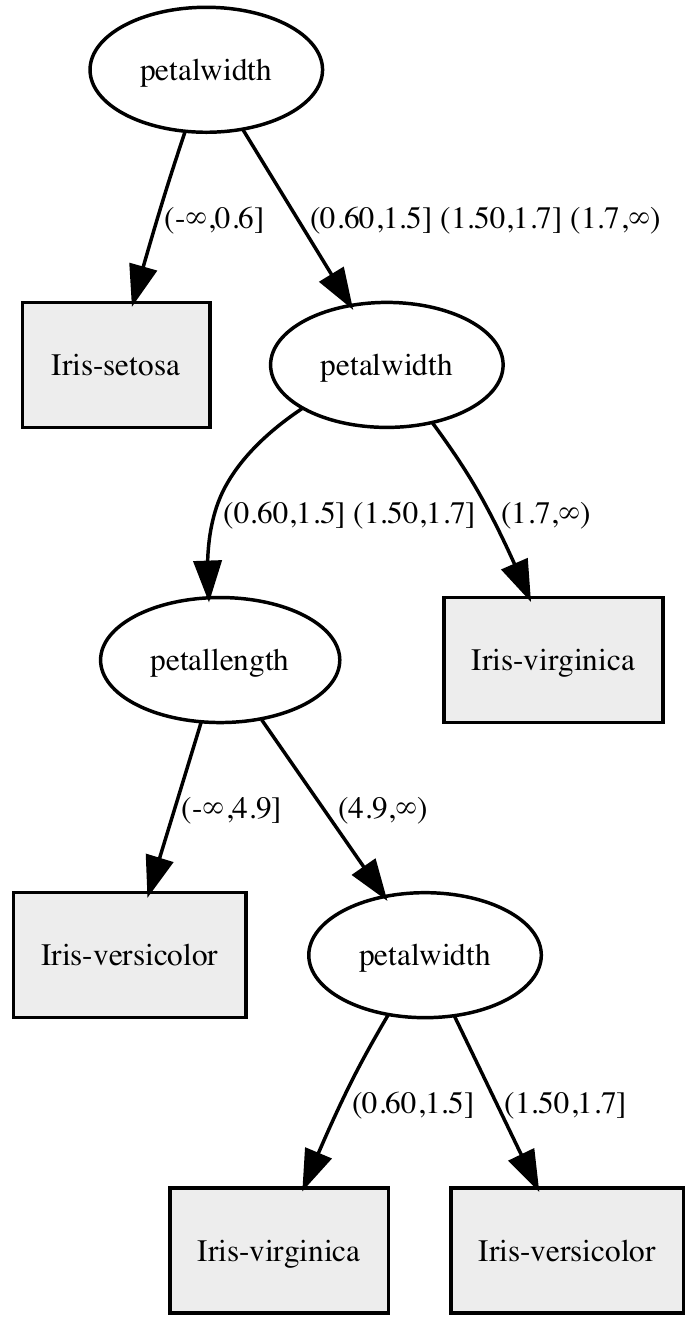}
\caption{A decision tree with continuous variables learned using weka (left) 
and its discretization (right). \label{fig:dt-dg}}
\end{figure}

Suppose we have a continuous variable \(X\) that is being tested at node \(T\) in a decision graph. The test will have
the form \(X \leq t_i\), where \(t_i\) is a threshold in \((-\infty,\infty)\). Node \(T\) will then have two outgoing edges,
one is followed when \(X \leq t_i\) (high edge) and the other is followed when \(X > t_i\) (low edge);
see Figure~\ref{fig:dt-dg}. 
Suppose now that \(t_1, \ldots, t_n\) is the set of all thresholds for variable \(X\) in the decision graph 
and assume that these thresholds are in increasing order. We can then treat variable \(X\)
as a discrete variable with the following \(n+1\) states: \((-\infty,t_1], (t_1,t_2], \ldots, (t_{n-1},t_n], (t_n,\infty)\). 
If variable \(X\) is being tested first at node \(T\), 
we label the high edge of node \(T\) with states \((-\infty,t_1], (t_1,t_2], \ldots, (t_{i-1},t_i]\)
and its low edge with states \((t_i,t_{i+1}], \ldots, (t_{n-1},t_n], (t_n,\infty)\). 
Consider Figure~\ref{fig:dt-dg} (left). 
Variable ``petalwidth" (\(W\)) has three thresholds \(0.6, 1.5, 1.7\), leading to four discrete states 
\(S_W = (-\infty, 0.6]\), \((0.6, 1.5]\), \((1.5, 1.7]\), \((1.7,\infty)\).
Variable ``petallength'' (\(L\)) has one threshold \(4.9\), leading to
two discrete states \(S_L = (-\infty, 4.9]\), \((4.9, \infty)\). 
Variable \(W\) is tested three times in the decision tree.
The first test (\(W \leq 0.6\)) splits states \(S_W\) into
\(S_1 = (-\infty, 0.6]\) for the high edge and \(S_2 = (0.6, 1.5], (1.5, 1.7], (1.7,\infty)\) for the low edge. 
The second test (\(W \leq 1.7\)) splits \(S_2\) into 
\(S_{21} = (0.6, 1.5], (1.5, 1.7]\) for the high edge and \(S_{22} = (1.7,\infty)\) for the low edge.
The third and final test (\(W \leq 1.5\)) splits states \(S_{21}\) into \((0.6, 1.5]\) and \((1.5, 1.7]\). 
The resulting decision tree with discrete variables
{\em does not} satisfy the test-once property but does satisfy the weak test-once property as shown in
Figure~\ref{fig:dt-dg}(right). 

Consider now instance \(\delta_1: W = 0.8, L = 5.3\) which is classified as ``Iris-virginica'' by the decision tree
with continuous variables (\(T_1\)). We can view this instance as the discrete instance \(\delta_2: W = (0.6,1.5], L = (4.9,\infty)\)
since \(0.8 \in (0.6,1.5]\) and \(5.3 \in (4.9,\infty)\).
The decision tree with discrete variables (\(T_2\)) will also classify instance \(\delta_2\) as ``Iris-virginica.''
A continuous instance and its corresponding 
discrete instance will be classified identically by decision trees \(T_1\) and \(T_2\) because
 \(T_1\) cannot discriminate continuous values that
belong to the same  interval. Finally, to generate the complete reason for instance \(\delta_1\), we compute 
\(\forall \delta_2 \cdot \Delta^c[T_2]\) using Proposition~\ref{prop:cr-cf} where \(c\) is class ``Iris-virginica.''

\subsection{Further Extensions}

The closed-form complete reason in Proposition~\ref{prop:cr-cf} applies directly to {\em Free Binary Decision Diagrams (FBDDs)}~\cite{tc/GergovM94} 
and {\em Ordered Binary Decision Diagrams (OBDDs)}~\cite{tc/Bryant86} 
as they are special cases of decision graphs. FBDDs use binary variables and binary classes (\(\top\) and \(\bot\)).
OBDDs are a subset of FBDDs which test variables in the same order along any
path from the root to a leaf. 
We can similarly obtain closed forms for the complete reasons of {\em Sentential Decision Diagrams (SDDs)}~\cite{ijcai/Darwiche11}, 
which test on formulas (sentences) instead of variables. 
This is possible since given an SDD for \(\Delta\) we can obtain an SDD for \(\neg \Delta\) in linear time.
An SDD \(\Delta\) is an \(\wedge\)-decomposable NNF that represents instances for  class \(\top\).
The SDD for \(\neg \Delta\) is also an \(\wedge\)-decomposable NNF but represents
instances for class \(\bot\). If we negate \(\Delta\) and \(\neg \Delta\) using deMorgan's law, we obtain 
\(\vee\)-decomposable NNFs for classes \(\bot\) and \(\top\), respectively. 
This allows us to obtain a closed-form, monotone, \(\vee\)-decomposable complete reason for any instance
using universal quantification.
\cite{darwiche2021quantifying} showed that an SDD can be universally quantified in linear time. 
Earlier, \cite{ecai/DarwicheH20} showed that Decision-DNNFs~\cite{jair/HuangD07}
can be universally quantified in linear time as well.\footnote{\cite{ecai/DarwicheH20} introduced two linear-time operations on Decision-DNNFs:
{\em consensus} and {\em filtering.} These operations implement
universal literal quantification as shown in~\cite{darwiche2021quantifying}. Decision-DNNFs are \(\wedge\)-decomposable
NNFs in which disjunctions have the form \((x \wedge \alpha) \vee (\n{x} \wedge \beta)\).}
Decision-DNNFs cannot be negated efficiently so they do not permit 
closed-form complete reasons unless we have Decision-DNNFs for classes \(\top\) and \(\bot\).
While decision tree classifiers are normally learned from data, classifiers such as OBDDs and SDDs 
are compiled from other classifiers like Bayesian/neural networks and random forests; 
see, e.g.,~\cite{aaai/ShihCD19,kr/ShiSDC20,corr/abs-2007-01493}.
The relative succinctness of these representations of classifiers has been well studied.
FBDDs are a subset of Decision-DNNFs and there is a quasipolynomial simulation of Decision-DNNFs by equivalent FBDDs~\cite{uai/BeameLRS13}.
SDDs and FBDDs are not comparable~\cite{uai/BeameL15,mst/BolligB19} so SDDs and Decision-DNNFs are not comparable either.
SDDs are exponentially more succinct than OBDDs~\cite{aaai/Bova16}.

\section{Necessary and Sufficient Reasons}
\label{sec:nr-sr}

As mentioned earlier, the prime implicants of a complete reason can be interpreted as {\em sufficient reasons} for the decision.
\shrink{and correspond to PI explanations~\cite{ijcai/ShihCD18} and abductive explanations~\cite{IgnatievNM19a}.} 
We next show that the prime implicates of a complete reason can be interpreted as {\em necessary reasons} for the decision
and correspond to contrastive explanations~\cite{aiia/IgnatievNA020}.
We first provide further insights into complete reasons which will help in justifying this interpretation.

\begin{definition}\label{def:congruent}
Instances \(\delta_1\) and \(\delta_2\) are \hl{congruent} iff \(\delta_1 \cap \delta_2 \models \Delta\) for some class formula \(\Delta\).
We also say in this case that the decisions on instances \(\delta_1\) and \(\delta_2\) are congruent. 
\end{definition}
If instances \(\delta_1\) and \(\delta_2\) are congruent, they must belong
to the same class since \(\delta_1 \models \Delta\) and \(\delta_2 \models \Delta\) so they are decided similarly.
Moreover, their common characteristics \(\delta_1\cap\delta_2\) are sufficient to justify the decision.
That is, the decisions on them are equal and have a common justification.

\begin{proposition}\label{prop:congruent} 
Let \(\forall \delta \cdot \Delta\) be the complete reason for the decision on instance \(\delta\).
Then instance \(\delta^\star\) is congruent to instance \(\delta\) iff \(\delta^\star \models \forall \delta \cdot \Delta\). 
\end{proposition}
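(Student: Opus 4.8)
The plan is to pin down the models of the complete reason \(\forall \delta \cdot \Delta\) semantically and then recognize the resulting condition as exactly \(\delta^\star \cap \delta \models \Delta\), which I will show is equivalent to congruence. First I would establish the world-level meaning of quantifying a single state: starting from Definition~\ref{def:d-quantify}, a case split on the value \(\w[X]\) shows that \(\w \models \forall x_i \cdot \Delta\) iff \(\w' \models \Delta\) for every world \(\w'\) that agrees with \(\w\) on every variable other than \(X\) and moreover has \(\w'[X] = \w[X]\) whenever \(\w[X] = x_i\). Indeed, if \(\w[X] = x_i\) then the disjuncts \(x_i \vee (\Delta \cd x_j)\) all hold and \(\w \models \Delta \cd x_i\) iff \(\w \models \Delta\), leaving the single witness \(\w' = \w\); if \(\w[X] = x_c\) with \(c \neq i\) then \(\w \not\models x_i\), so \(\w\) must satisfy \(\Delta \cd x_j\) for all \(j \neq i\) as well as \(\Delta \cd x_i\), and since no \(\Delta \cd x_k\) mentions \(X\) this says precisely that \(\w[X \to x_k] \models \Delta\) for every state \(x_k\) of \(X\) --- i.e.\ \(\w'\) is free on \(X\). (This is where the discrete case differs from the Boolean one: a disagreeing world must satisfy \(\Delta\) under every replacement, not just one flip; but the net effect is unchanged.)

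Next I would iterate this, using commutativity of quantification (stated after Definition~\ref{def:d-quantify}), by induction on the number of quantified variables, to obtain: \(\w \models \forall \delta \cdot \Delta\) iff \(\w' \models \Delta\) for every world \(\w'\) that agrees with \(\w\) on the set \(\{X : \w[X] = \delta[X]\}\). In the inductive step one peels off \(\forall x_i\) with \(x_i = \delta[X]\); the single-state characterization leaves \(X\) free exactly when \(\w\) disagrees with \(\delta\) on \(X\), which is exactly the update that turns the agreement set without \(X\) into the one with \(X\), and the two layers of universal quantification over worlds compose without interaction because \(X\) is untouched by the remaining quantifiers.

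To conclude, take \(\w = \delta^\star\) (an instance is a world). Requiring a witness \(\w'\) to agree with \(\delta^\star\) on \(\{X : \delta^\star[X] = \delta[X]\}\) is, by definition, requiring \(\w' \models \delta^\star \cap \delta\); so the previous paragraph gives \(\delta^\star \models \forall \delta \cdot \Delta\) iff \(\delta^\star \cap \delta \models \Delta\). It remains to identify this with congruence: one direction is immediate from Definition~\ref{def:congruent} by taking the class formula to be \(\Delta\); for the converse, if \(\delta^\star \cap \delta \models \Delta'\) for some class formula \(\Delta'\), then \(\delta \models \delta^\star \cap \delta \models \Delta'\) (as \(\delta^\star \cap \delta \subseteq \delta\)), so \(\delta\) satisfies both \(\Delta'\) and the class formula \(\Delta\) that decides it positively (Definition~\ref{def:cr}), and mutual exclusivity of the class formulas forces \(\Delta' = \Delta\).

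The main obstacle is getting the world-level characterization of iterated state quantification exactly right --- in particular, tracking which variables a witness world \(\w'\) is pinned on versus free on, and verifying that the extra constraints the discrete definition imposes on disagreeing worlds do not alter this picture. Once that characterization is in hand, the identification first with \(\delta^\star \cap \delta \models \Delta\) and then with congruence is essentially bookkeeping.
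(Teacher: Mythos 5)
Your proof is correct, but it takes a genuinely different route from the paper's. The paper's argument is short because it imports a characterization from prior work: the complete reason \(\forall \delta \cdot \Delta\) equals the DNF \(\tau_1 + \ldots + \tau_n\) of the prime implicants of \(\Delta\) contained in \(\delta\), from which both directions follow in a couple of lines (congruence gives an implicant \(\delta \cap \delta^\star \subseteq \delta\), hence some \(\tau_i \subseteq \delta \cap \delta^\star\); conversely any satisfied \(\tau_i\) sits inside \(\delta \cap \delta^\star\) and witnesses congruence). You instead work directly from Definition~\ref{def:d-quantify}: you derive a possible-worlds semantics for a single quantified state (the witness world is pinned on \(X\) exactly when \(\w\) agrees with \(x_i\), free otherwise), compose it by induction over the states of \(\delta\), and land on the clean intermediate equivalence \(\delta^\star \models \forall \delta \cdot \Delta \iff \delta \cap \delta^\star \models \Delta\), which you then match to congruence. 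What your approach buys is self-containedness: the DNF characterization the paper cites was established for Boolean literal quantification, and its transfer to the discrete state quantification of Definition~\ref{def:d-quantify} is taken on faith in the paper's proof, whereas your argument verifies the discrete case from scratch. You are also more careful than the paper on one small point: Definition~\ref{def:congruent} only requires \(\delta \cap \delta^\star \models \Delta'\) for \emph{some} class formula, and your appeal to mutual exclusivity to force \(\Delta' = \Delta\) is the step the paper leaves implicit. The cost is a more delicate induction; the one subtlety to state explicitly is the one you already flag, namely that the pinning condition for the inner variable \(Y\) depends only on \(\w[Y]\), which the outer quantifier over \(X \neq Y\) never alters, so the layers compose.
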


\shrink{
\begin{corollary}\label{coro:congruent}
Two decisions are congruent iff their complete reasons are consistent with one another.
\end{corollary}
\begin{proof}[\bf Proof of Proposition~\ref{}]
\end{proof}
}

Hence, the complete reason \(\forall \delta \cdot \Delta\) captures all, and only, instances 
that are congruent to instance \(\delta\). The complete reason does not capture all
instances that are decided similarly to \(\delta\) since some of these instances may be decided 
that way for a different reason (the decisions are not congruent).

Consider the class formula \(\Delta = \n{x}_1(x_2+\n{y}_1)(\n{y}_1+z_1)\) over ternary variables \(X\), \(Y\) and \(Z\).
The instance \(\delta = x_2 y_2 z_1\) is decided positively by this formula (\(\delta \models \Delta\)) 
and the complete reason for this decision is \(\forall x_2, y_2, z_1 \cdot \Delta = x_2(y_2+z_1)\). 
There are four other instances that satisfy this complete reason, 
\(x_2 y_2 z_2\), \(x_2 y_2 z_3\), \(x_2 y_1 z_1\) and \(x_2 y_3 z_1\). 
All are decided positively by \(\Delta\) and the states each share with instance \(\delta\) justify the decision.
Instance \(x_3 y_2 z_1\) is also decided positively by \(\Delta\) but for a different reason:
the states \(y_2 z_1\) it shares with instance \(\delta\) do not justify the decision, \(y_2 z_1 \not \models \Delta\). 
Hence, this instance is not captured by the complete reason for \(\delta\).

\subsection{Implicants and Implicates as Reasons}

We next review the interpretation of prime implicants as sufficient reasons and discuss the 
interpretation of prime implicates as necessary reasons for a decision. We will represent these notions by sets
of literals, which are interpreted as conjunctions for prime implicants (terms) and
as disjunctions for prime implicates (clauses). 

\begin{proposition}\label{prop:subsets}
The prime implicants and prime implicates of a complete reason \(\forall \delta \cdot \Delta\) are 
subsets of instance \(\delta\). 
\end{proposition}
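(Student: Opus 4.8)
The plan is to show that a complete reason $\forall\delta\cdot\Delta$ is, in effect, a \emph{monotone} propositional formula over the atoms $\{\delta[X] : X\text{ a variable}\}$, and then to invoke the classical fact that a monotone function has only positive prime implicants and only positive prime implicates; both facts together force prime implicants and implicates to be sets of literals drawn from $\delta$.

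First I would argue that $\Gamma := \forall\delta\cdot\Delta$ mentions, for each variable $X$, at most the single positive literal $\delta[X]$. This follows by induction on $|\delta|$ from Definition~\ref{def:d-quantify}: in $\forall x_i\cdot\Delta = (\Delta\cd x_i)\wedge\bigwedge_{j\ne i}(x_i\vee\Delta\cd x_j)$ each conditioned copy $\Delta\cd x_j$ is free of $X$, so the only occurrences of $X$ are the positive literals $x_i=\delta[X]$ added explicitly; quantifying the remaining states of $\delta$ only conditions their variables away and introduces their own $\delta$-literals, never reintroducing $X$, and by commutativity the order is immaterial. (For decision graphs one may instead quote Proposition~\ref{prop:cr-cf}, whose only non-constant leaves are literals $\delta[X]$; similarly for SDDs.) Consequently the truth value of $\Gamma$ at a world $\w$ depends only on which variables $X$ satisfy $\w[X]=\delta[X]$; let $f$ be the induced Boolean function on the corresponding bit vectors, so that $\delta$ maps to the all-ones vector and a positive clause/term $\bigvee_{X\in S}\delta[X]$ or $\bigwedge_{X\in S}\delta[X]$ is literally a subset of $\delta$.

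Next I would show $f$ is monotone (non-decreasing): switching any variable $X$ from a value $\ne\delta[X]$ to $\delta[X]$ never destroys a model of $\Gamma$. This is again read off Definition~\ref{def:d-quantify}: restricting $\forall x_i\cdot\Delta$ to $X=x_i$ gives $\Delta\cd x_i$, whereas restricting it to $X=x_k$ for any $k\ne i$ gives $\bigwedge_j\Delta\cd x_j$, and the latter implies the former --- so the ``$X$ matches $\delta$'' restriction is the logically weaker one, which is exactly monotonicity of $\Gamma$ in the bit for $X$. Monotonicity in that bit survives the later quantification of any other variable $Y$, since that step only conditions on states of $Y$ and forms conjunctions/disjunctions with a literal that does not mention $X$; iterating over all variables gives monotonicity of $f$ in every coordinate. (For decision graphs this can be shortcut by quoting Proposition~\ref{prop:dt-cr}: the complete reason is a monotone NNF, and a positive NNF over the atoms $\{\delta[X]\}$ is precisely a monotone Boolean function.)

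Finally I would invoke, with the standard one-line arguments, that a monotone $f$ has only positive prime implicants and implicates: if a prime implicant (minimal term $\tau\models\Gamma$) contained $\bar{\delta[X]}$ then $\tau\setminus\{\bar{\delta[X]}\}$ would still be an implicant (a model with $\w[X]\ne\delta[X]$ already satisfies $\tau$; a model with $\w[X]=\delta[X]$ yields, after switching $X$ to some value $\ne\delta[X]$, a model of $\tau$ and hence of $\Gamma$, and monotonicity lifts satisfaction of $\Gamma$ back to $\w$), contradicting minimality; dually, a prime implicate $\lambda$ containing $\bar{\delta[X]}$ would remain an implicate after deleting $\bar{\delta[X]}$ (a model of $\Gamma$ falsifying $\lambda\setminus\{\bar{\delta[X]}\}$ must satisfy $\bar{\delta[X]}$, and switching the offending variable to $\delta[X]$ gives, by monotonicity, another model of $\Gamma$ that still falsifies $\lambda\setminus\{\bar{\delta[X]}\}$ and now also falsifies $\bar{\delta[X]}$, contradicting $\Gamma\models\lambda$), again contradicting minimality. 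Hence every prime implicant has the form $\bigwedge_{X\in S}\delta[X]$ and every prime implicate the form $\bigvee_{X\in S}\delta[X]$, both subsets of $\delta$. The delicate parts are the first step --- pinning down that the complete reason involves only the states of $\delta$, so that ``prime implicant/implicate'' is naturally read over exactly those atoms --- and the bookkeeping in the second step that monotonicity in one variable's atom is not spoiled by quantifying out the others; the last step is routine once this monotone-formula picture is in hand.
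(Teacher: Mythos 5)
Your proof is essentially correct but takes a genuinely different route from the paper's. The paper's proof of Proposition~\ref{prop:subsets} is a two-liner that leans on a structural characterization imported from prior work: the complete reason $\forall\delta\cdot\Delta$ can be written as the DNF $\tau_1+\ldots+\tau_n$ of exactly those prime implicants $\tau_i$ of $\Delta$ with $\tau_i\subseteq\delta$; the implicant half is then immediate, and the implicate half follows by distributing this monotone DNF into a CNF and removing subsumed clauses. You instead rebuild the needed structure from scratch out of Definition~\ref{def:d-quantify}: you show that $\forall\delta\cdot\Delta$ semantically depends only on the ``match bits'' $[\w[X]=\delta[X]]$ and is monotone in each of them (your observation that restricting $\forall x_i\cdot\Delta$ to $X=x_i$ yields $\Delta\cd x_i$ while restricting to $X=x_k$, $k\neq i$, yields $\bigwedge_j\Delta\cd x_j\models\Delta\cd x_i$ is exactly right), and then invoke the classical fact that monotone functions have only positive prime implicants and implicates. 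Your argument is more self-contained --- it does not presuppose the DNF characterization --- and as a by-product it establishes the semantic monotonicity of arbitrary complete reasons, which the paper only states syntactically for the decision-graph closed form (Proposition~\ref{prop:dt-cr}). The price is that you must do the quantifier bookkeeping yourself, which the paper avoids by citation.

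One incompleteness to repair: in the final step you only rule out the literal $\overline{\delta[X]}$ from a prime implicant or implicate. In the discrete setting a term or clause may also contain a positive literal $x_j$ with $x_j\neq\delta[X]$, or a negative literal $\n{x}_j$ with $x_j\neq\delta[X]$, and neither is a ``subset of $\delta$.'' These cases do not break your approach --- any such literal constrains only which non-matching state $X$ takes, so by your match-bit reduction (plus monotonicity when the deletion admits models with $X=\delta[X]$) it can be dropped while preserving implicant/implicate status, contradicting primality --- but they must be argued, since for variables with three or more states they are not subsumed by the $\overline{\delta[X]}$ case you wrote out.
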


A prime implicant \(\sigma\) of the complete reason \(\forall \delta \cdot \Delta\) can be viewed as a sufficient reason for 
the underlying decision as it is a minimal subset of instance \(\delta\) that is guaranteed to sustain the decision, congruently.
If we change any part of the instance but for \(\sigma\), the decision will stick and for a common reason
since the new and old instances are congruent.
Consider the complete reason in Figure~\ref{fig:admissions-cr} which corresponds to a reject decision on the
instance SAT~$<1450$, GPA=medium, Essay=fail, Interview=fail. 
There are two prime implicants for this complete reason: 
\(\{\text{SAT~$<1450$, GPA=medium, Interview=fail}\}\) and 
\(\{\text{Essay=fail, Interview=fail}\}\). 
Each of these prime implicants is a minimal subset of the instance that is sufficient to trigger the reject decision.

A prime implicate \(\sigma\) of the complete reason \(\forall \delta \cdot \Delta\) can be viewed as a necessary reason for 
the underlying decision as it is a minimal subset of the instance that is essential for sustaining a congruent decision. 
If we change all states in \(\sigma\), the decision on the new instance will be different or will be made for a different 
reason since the new and old instances will not be congruent (we provide a stronger semantics later). 
Consider again the complete reason in Figure~\ref{fig:admissions-cr} and the corresponding instance and reject decision.
There are three prime implicates for this complete reason:
\(\{\text{Interview=fail}\}\), 
\(\{\text{SAT~$<1450$, Essay=fail}\}\) and 
\(\{\text{GPA=medium, Essay=fail}\}\).
Changing Interview to pass will change the decision. Changing SAT to~$\geq 1450$ and Essay to pass will also change
the decision. Since GPA is a ternary variable, there are two ways to change its value.
If we change GPA and Essay to high and pass, respectively, the decision will change. But if we change these features
to low and pass, respectively, the decision will not change but the new instance (SAT~$<1450$, GPA=low, Essay=pass, Interview=fail) 
will not be congruent with the original instance (SAT~$<1450$, GPA=medium, Essay=fail, Interview=fail). 
That is, the common characteristics of these instances 
\(\{\text{SAT~$<1450$, Interview=fail}\}\) cannot on their own justify the reject decision.

For yet another example, consider again class formula \(\Delta = \n{x}_1(x_2+\n{y}_1)(\n{y}_1+z_1)\) over ternary variables \(X\), \(Y\) and \(Z\).
The complete reason for positive instance \(\delta = x_2 y_2 z_1\) is
\(\Gamma = \forall \delta \cdot \Delta = x_2(y_2+z_1)\). 
The prime implicants of \(\Gamma\) are \(x_2 y_2\) and \(x_2 z_1\), which are the sufficient reasons for the decision.
If we change instance \(\delta\) while keeping one of these reasons intact, the decision sticks.
The prime implicates of \(\Gamma\) are \(x_2\) and \(y_2+z_1\), which are the necessary reasons for the decision.
If we violate one of these reasons, the decision will be different or made for a different reason.
Changing instance \(\delta\) to \(x_2 y_1 z_3\) violates the necessary reason \(y_2+z_1\), which leads to a negative decision. 
Changing the instance to  \(\delta^\star = x_3 y_2 z_1\) violates the necessary reason \(x_2\). 
The decision remains positive though but for a different reason than why \(\delta\) is positive. 
That is, the common characteristics \(\delta \cap \delta^\star = y_2 z_1\) do not justify the decision
on these instances, \(y_2 z_1 \not \models \Delta\).

\subsection{More on Necessity}

We next show that necessary reasons correspond to {\em basic} contrastive explanations as
formalized in~\cite{aiia/IgnatievNA020} using the following definition (modulo notation).

\begin{definition}\label{def:contrastive}
Let \(\delta\) be an instance decided positively by class formula \(\Delta\).
A \hl{contrastive explanation} of this decision is a minimal subset \(\gamma\) of instance \(\delta\)
such that \(\delta \setminus \gamma \not \models \Delta\).
\end{definition}
That is, it is possible to change the decision on instance \(\delta\) by {\em only} changing the states in \(\gamma\).
Moreover, we must change {\em all} states in \(\gamma\) for the decision to change. 

\begin{proposition} \label{prop:contrastive}
Let \(\delta\) be an instance decided positively by class formula \(\Delta\).
Then \(\gamma\) is a prime implicate of the complete reason \(\forall \delta \cdot \Delta\)
iff \(\gamma\) is a contrastive explanation.
\end{proposition}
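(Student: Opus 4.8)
The plan is to isolate a single ``bridge'' lemma that translates between implicates of the complete reason and the failure of the class formula on subsets of \(\delta\), and then to obtain the stated equivalence by matching \emph{primality} with \emph{minimality}, using Proposition~\ref{prop:subsets} to keep everything inside \(\delta\). Throughout I would read a subset \(\gamma\subseteq\delta\) as a term when it occurs inside \(\delta\setminus\gamma\), and as the clause \(\bigvee_{x_i\in\gamma}x_i\) when we ask whether \(\forall\delta\cdot\Delta\models\gamma\); note that a world \(\w\) satisfies this clause exactly when the state set of \(\w\) intersects \(\gamma\), and that from \(\delta\models\Delta\) we have \(\delta\models\forall\delta\cdot\Delta\).

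\textbf{Bridge lemma.} For every \(\gamma\subseteq\delta\): \(\forall\delta\cdot\Delta\models\gamma\) (as a clause) iff \(\delta\setminus\gamma\not\models\Delta\). For the ``\(\Leftarrow\)'' direction I would argue the contrapositive: if \(\forall\delta\cdot\Delta\not\models\gamma\), pick a world \(\w\models\forall\delta\cdot\Delta\) whose state set is disjoint from \(\gamma\). Since \(\gamma\subseteq\delta\), every state common to \(\delta\) and \(\w\) lies in \(\delta\setminus\gamma\), i.e.\ \(\delta\cap\w\subseteq\delta\setminus\gamma\) as literal sets. Proposition~\ref{prop:congruent} (with mutual exclusivity of the class formulas, which forces the witnessing class formula to be \(\Delta\) itself) gives \(\delta\cap\w\models\Delta\); since \(\delta\setminus\gamma\) is the stronger term, \(\delta\setminus\gamma\models\Delta\). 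For ``\(\Rightarrow\)'', again contrapositively, assume \(\delta\setminus\gamma\models\Delta\) and build a witness world \(\w\): extend the term \(\delta\setminus\gamma\) by assigning, to every variable occurring in \(\gamma\), some state different from the one it has in \(\delta\) (possible since each such variable has at least two states). Then \(\delta\cap\w=\delta\setminus\gamma\), so \(\delta\cap\w\models\Delta\), and Proposition~\ref{prop:congruent} yields \(\w\models\forall\delta\cdot\Delta\); as the state set of \(\w\) is disjoint from \(\gamma\), we get \(\forall\delta\cdot\Delta\not\models\gamma\).

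With the lemma in hand, let \(\mathcal F=\{\gamma\subseteq\delta:\delta\setminus\gamma\not\models\Delta\}\). By the lemma, \(\mathcal F\) is precisely the family of implicates of \(\forall\delta\cdot\Delta\) that happen to be subsets of \(\delta\). By Definition~\ref{def:contrastive}, the contrastive explanations are exactly the \(\subseteq\)-minimal members of \(\mathcal F\). It then suffices to show the prime implicates of \(\forall\delta\cdot\Delta\) are also exactly the \(\subseteq\)-minimal members of \(\mathcal F\). A prime implicate \(\gamma\) is a subset of \(\delta\) by Proposition~\ref{prop:subsets}, hence \(\gamma\in\mathcal F\); and if some \(\gamma'\in\mathcal F\) had \(\gamma'\subsetneq\gamma\), then by the lemma \(\gamma'\) would be an implicate strictly inside \(\gamma\), contradicting primality, so \(\gamma\) is minimal in \(\mathcal F\). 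Conversely, a \(\subseteq\)-minimal \(\gamma\in\mathcal F\) is an implicate; if it were not prime, some implicate \(\gamma^\star\subsetneq\gamma\) exists, but \(\gamma^\star\subseteq\gamma\subseteq\delta\) forces \(\gamma^\star\in\mathcal F\), contradicting minimality. Equating the two descriptions of the minimal members of \(\mathcal F\) proves the proposition.

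I expect the main obstacle to be the ``\(\Rightarrow\)'' half of the bridge lemma: one must actually exhibit the congruent counterexample world \(\w\) and verify that \(\delta\cap\w\) equals \(\delta\setminus\gamma\) exactly — this is where the ``at least two states per variable'' hypothesis is used and where care is needed that simultaneously re-assigning several variables of \(\gamma\) does not accidentally re-introduce a state of \(\gamma\), and that \(\w\) agrees with \(\delta\) on \emph{all} remaining variables. Once Propositions~\ref{prop:congruent} and~\ref{prop:subsets} are invoked, the passage from the lemma to the prime-implicate/contrastive correspondence is routine bookkeeping about \(\subseteq\)-minimality.
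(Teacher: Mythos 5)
Your proposal is correct and follows essentially the same route as the paper: the paper's proof also hinges on exactly your bridge lemma (its Lemma~\ref{lem:contrastive}, that for \(\gamma\subseteq\delta\), \(\forall\delta\cdot\Delta\models\gamma\) iff \(\delta\setminus\gamma\not\models\Delta\)), proved with the same congruent/non-congruent witness instances via Proposition~\ref{prop:congruent}, and then matches primality with minimality using Proposition~\ref{prop:subsets}. The only differences are cosmetic (you argue both directions of the lemma contrapositively where the paper argues one directly and one by contradiction, and you make explicit the appeal to mutual exclusivity of class formulas that the paper leaves implicit).
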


This correspondence is perhaps not too
surprising given the duality between abductive and contrastive explanations~\cite{aiia/IgnatievNA020}
and the classical duality between prime implicants and prime implicates. 
However, it does provide further insights into contrastive explanations: changing the
states of a contrastive explanation leads to a non-congruent decision.
It also provides further insights on the necessity of prime implicates:
while violating a necessary reason will only lead to an instance
that is not congruent (decided differently or for a different reason), there must exist at least one violation 
of each necessary reason which is guaranteed to change the decision. 
This follows directly from Definition~\ref{def:contrastive}. If the variables of a necessary
reason are all binary, there is only one way to violate the reason (by negating each
variable in the reason). In this case, violating the necessary reason is
guaranteed to change the decision. 

For an example, let us revisit 
the complete reason in Figure~\ref{fig:admissions-cr} and the corresponding instance and reject decision.
This decision has three necessary reasons: 
\(\{\text{Interview=fail}\}\), 
\(\{\text{SAT~$<1450$, Essay=fail}\}\) and 
\(\{\text{GPA=medium, Essay=fail}\}\).
There is only one way to violate each of the first two reasons, and each violation leads to reversing the
decision as we saw earlier. There are two ways to violate the third necessary reason. One of these violations
(GPA=high, Essay=pass) reverses the decision but the other violation (GPA=low, Essay=pass) 
keeps the reject decision intact (but for a different reason).

In summary, a necessary reason (contrastive explanation) identifies a minimal subset of the instance 
which is guaranteed to change the decision if that subset is altered {\em properly.} 
The minimality condition ensures that we must alter every variable in a necessary reason to change the decision,
but it does not specify how to alter it (except for binary variables). We will revisit this distinction
when we discuss counterfactual explanations~\cite{kr/AudemardKM20}.

\subsection{Targeting a Particular Class}

\begin{figure}[tb]
	\centering
	
	\scalebox{0.6}{
		\begin{tikzpicture}[
			roundnode/.style={circle ,draw=black, thick},
			squarednode/.style={rectangle, draw=black, thick},
			]
			\node[squarednode] (X) {\Large $X$};
			\node[squarednode] (Y) [below = of X, xshift = 1cm] {\Large $Y$};
			\node[roundnode] (C1) [below = of X, xshift = -1cm] {\Large $c_1$};
			\node[roundnode] (C2) [below = of Y, xshift = -1cm] {\Large $c_2$};
			\node[roundnode] (C3) [below = of Y, xshift = 1cm] {\Large $c_3$};
			
			\draw[-latex, thick] (X.240) -- node [anchor = center, xshift = -2mm, yshift = 2mm] {\Large$x_1$} (C1.90);
			\draw[-latex, thick] (X.300) -- node [anchor = center, xshift = 2mm, yshift = 2mm] {\Large$x_2$} (Y.90);
			\draw[-latex, thick] (Y.240) -- node [anchor = center, xshift = -2mm, yshift = 2mm] {\Large$y_1$} (C2.90);
			\draw[-latex, thick] (Y.300) -- node [anchor = center, xshift = 2mm, yshift = 2mm] {\Large $y_2$} (C3.90);
		\end{tikzpicture}
	}
	\quad
	\scalebox{0.37}{
		\begin{tikzpicture}[
			roundnode/.style={circle ,draw=black, thick, inner sep=1pt, align=center, text width = 9mm},
			squarednode/.style={rectangle, draw=black, thick},
			]
			\node[roundnode] (AND1) {AND};
			\node[roundnode] (OR1_1) [below = of AND1, xshift = -1.5cm] {OR};
			\node[roundnode] (OR1_2) [below = of AND1, xshift = 1.5cm] {OR};
			\draw[-latex, thick] (AND1.240) -- (OR1_1.90);
			\draw[-latex, thick] (AND1.300) -- (OR1_2.90);
			\node[squarednode] (T1_11) [below = of OR1_1, xshift = -1cm] {\huge$\top$};
			\node[squarednode] (T1_12) [below = of OR1_1, xshift = 1cm] {\huge$\bot$};
			\node[squarednode] (T1_21) [below = of OR1_2, xshift = -1cm] {\huge$\bot$};
			\node[squarednode] (T1_22) [below = of OR1_2, xshift = 1cm] {\Huge$y_1$};
			\draw[-latex, thick] (OR1_1.240) -- (T1_11.90);
			\draw[-latex, thick] (OR1_1.300) -- (T1_12.90);
			\draw[-latex, thick] (OR1_2.240) -- (T1_21.90);
			\draw[-latex, thick] (OR1_2.300) -- (T1_22.90);

			\node[roundnode] (AND2) [above = of AND1, xshift = -1cm, yshift = 2cm] {AND};
			\node[roundnode] (OR2_1) [below = of AND2, xshift = -1.5cm] {OR};
			\node[roundnode] (OR2_2) [below = of AND2, xshift = 1.5cm] {OR};
			\draw[-latex, thick] (AND2.240) -- (OR2_1.90);
			\draw[-latex, thick] (AND2.300) -- (OR2_2.90);
			\node[rectangle, draw=black, ultra thick] (T2_11) [below = of OR2_1, xshift = -1cm] {\huge$\bot$};
			\node[squarednode] (T2_12) [below = of OR2_1, xshift = 1cm] {\Huge$x_2$};
			\node[squarednode] (T2_22) [below = of OR2_2, xshift = 1cm] {\huge $\bot$};
			\draw[-latex, thick] (OR2_1.240) -- (T2_11.90);
			\draw[-latex, thick] (OR2_1.300) -- (T2_12.90);
			\draw[-latex, thick] (OR2_2.240) -- (AND1.90);
			\draw[-latex, thick] (OR2_2.300) -- (T2_22.90);
		\end{tikzpicture}
	}
	\quad
	\scalebox{0.37}{
		\begin{tikzpicture}[
			roundnode/.style={circle ,draw=black, thick, inner sep=1pt, align=center, text width = 9mm},
			squarednode/.style={rectangle, draw=black, thick},
			]
			\node[roundnode] (AND1) {AND};
			\node[roundnode] (OR1_1) [below = of AND1, xshift = -1.5cm] {OR};
			\node[roundnode] (OR1_2) [below = of AND1, xshift = 1.5cm] {OR};
			\draw[-latex, thick] (AND1.240) -- (OR1_1.90);
			\draw[-latex, thick] (AND1.300) -- (OR1_2.90);
			\node[squarednode] (T1_11) [below = of OR1_1, xshift = -1cm] {\huge $\top$};
			\node[squarednode] (T1_12) [below = of OR1_1, xshift = 1cm] {\huge$\bot$};
			\node[squarednode] (T1_21) [below = of OR1_2, xshift = -1cm] {\huge$\bot$};
			\node[squarednode] (T1_22) [below = of OR1_2, xshift = 1cm] {\Huge$y_1$};
			\draw[-latex, thick] (OR1_1.240) -- (T1_11.90);
			\draw[-latex, thick] (OR1_1.300) -- (T1_12.90);
			\draw[-latex, thick] (OR1_2.240) -- (T1_21.90);
			\draw[-latex, thick] (OR1_2.300) -- (T1_22.90);

			\node[roundnode] (AND2) [above = of AND1, xshift = -1cm, yshift = 2cm] {AND};
			\node[roundnode] (OR2_1) [below = of AND2, xshift = -1.5cm] {OR};
			\node[roundnode] (OR2_2) [below = of AND2, xshift = 1.5cm] {OR};
			\draw[-latex, thick] (AND2.240) -- (OR2_1.90);
			\draw[-latex, thick] (AND2.300) -- (OR2_2.90);
			\node[rectangle, draw=black, ultra thick] (T2_11) [below = of OR2_1, xshift = -1cm] {\huge$\top$};
			\node[squarednode] (T2_12) [below = of OR2_1, xshift = 1cm] {\Huge$x_2$};
			\node[squarednode] (T2_22) [below = of OR2_2, xshift = 1cm] {\huge$\bot$};
			\draw[-latex, thick] (OR2_1.240) -- (T2_11.90);
			\draw[-latex, thick] (OR2_1.300) -- (T2_12.90);
			\draw[-latex, thick] (OR2_2.240) -- (AND1.90);
			\draw[-latex, thick] (OR2_2.300) -- (T2_22.90);
		\end{tikzpicture}
	}
	
	\caption{(a) decision tree (b) complete reason for ``why \(x_2 y_1\) is \(c_2\)'' (c) complete reason for ``why \(x_2 y_1\) is not \(c_3\).''
		\label{fig:target}
	}
\end{figure}
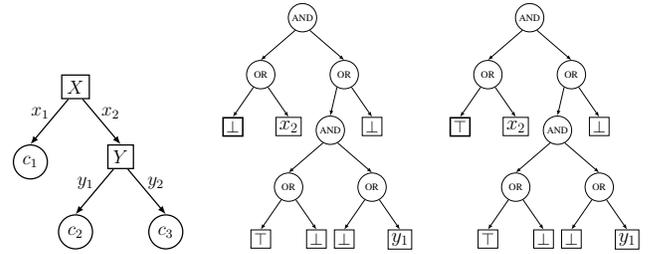

Beyond basic contrastive explanations, 
\cite{aiia/IgnatievNA020} discussed {\em targeted} contrastive explanations which aim to change
the instance class from \(c\) to some class \(c^\star\); see~\cite{Lipton1990-LIPCE,ai/Miller19}.
This notion is particularly relevant to multi-class classifiers as it reduces to basic 
contrastive explanations when the classifier has only two classes. 
Targeted contrastive explanations
can be obtained using the complete reason for {\em why} the instance was classified as 
{\em not}~\(c^\star\) (that is, a class other than \(c^\star\)). 
This complete reason can be obtained using a slight modification of Equation~\ref{eq:cr-cf} where we modify the first two conditions as follows:
\begin{equation}
\footnotesize
\Gamma^c[T] = 
\left\{
\begin{array}{ll}
\top & \mbox{if \(T\) has class \(c' \neq c^\star\)}  \\
\bot & \mbox{if \(T\) has class \(c^\star\)} \\
\bigwedge_{j} (\Gamma^c[T_j] \vee \l_j) & 
\mbox{if \(T\) has edges \(\DE(X,S_j,T_j)\)}
\end{array}
\right.
\label{eq:cr-cf-target} 
\end{equation}
The  prime implicates for this complete reason (i.e., necessary reasons) will then identify minimal subsets of the instance that 
lead to the targeted class \(c^\star,\) if altered properly. 

Consider the decision tree in Figure~\ref{fig:target}(a) which has two binary features \(X\) and \(Y\) and 
three classes \(c_1, c_2, c_3.\) The instance \(x_2 y_1\) is classified as \(c_2\). The complete reason for this
decision, as computed by Equation~\ref{eq:cr-cf}, is shown in Figure~\ref{fig:target}(b) and
has two necessary reasons \(x_2\) and \(y_1\). If we violate the first necessary reason (\(x_2 \rightarrow x_1\)), the class changes to \(c_1\).
If we violate the second necessary reason (\(y_1 \rightarrow y_2\)), the class changes to \(c_3\). 
Suppose now we wish to change the class \(c_2\) of this instance particularly to \(c_3\). The complete reason for ``why not \(c_3\),'' 
as computed by Equation~\ref{eq:cr-cf-target}, is shown in Figure~\ref{fig:target}(c) and
has only one necessary reason, \(y_1\). Violating this necessary reason is guaranteed to change the class to \(c_3\).

\cite{kr/AudemardKM20} discussed the complexity of computing the related notion of {\em counterfactual explanations} which are defined as follows.
Given an instance~\(\delta\) in class \(c\), find an instance \(\delta^\star\) in a different class \(c^\star\) that is as close as possible 
to instance \(\delta\) with respect to the hamming distance. In other words, instance \(\delta^\star\) must maximize the number of characteristics it shares with instance~\(\delta\). 
Consider now the characteristics \(\gamma\) of instance \(\delta\) that do not appear in instance \(\delta^\star\) (\(\gamma=\delta\setminus\delta^\star\)). 
Changing these characteristics to \(\gamma^\star = \delta^\star\setminus\delta\) will change the class from \(c\) to \(c^\star\).
Hence, characteristics \(\gamma\) are a {\em length-minimal} subset of instance \(\delta\) which, if changed properly, will guarantee a change from class \(c\) to class \(c^\star\).
Every characteristic of \(\gamma\) must be changed to ensure this class change, otherwise \(\delta^\star\) would not be a counterfactual explanation.
Moreover, when the features are binary, there is only one way to change the characteristics \(\gamma\) so the class will change from \(c\) to \(c^\star\);
that is, by flipping every characteristic in \(\gamma\) to yield \(\gamma^\star\).
In this case, counterfactual explanations are in one-to-one correspondence with the shortest necessary reasons which we discuss next.

\section{Computing Shortest Explanations}
\label{sec:algorithms}

A complete reason may have too many prime implicants and implicates. We will therefore
provide algorithms for computing the {\em shortest} implicants and 
implicates (which must be prime) for monotone, \(\vee\)-decomposable NNFs.
As discussed earlier, we can in linear time obtain complete reasons in this form
for decision graphs and SDDs, which include FBDDs, OBDDs and decision
trees as special cases. 

\begin{algorithm}[tb]
\footnotesize
\caption{Shortest Necessary Reasons (SNRs)
\label{alg:snr}}
\begin{algorithmic}[1]
\Require monotone and \(\vee\)-decomposable NNF \(\Delta\) with no constants
\Ensure all shortest implicates of NNF \(\Delta\)
\Function{snr}{$\Delta$} \Comment{\Call{Cache}{} initialized to \Call{nil}{}}
\If{\Call{cache}{$\Delta$} $\neq$ \Call{nil}{}}
	\Return \Call{cache}{$\Delta$}
\shrink{
\ElsIf{$\Delta = \bot$}
    \(snr \gets \{\{\}\}\)
\ElsIf{$\Delta = \top$}
    \(snr \gets \{\}\)
}
\ElsIf{$\Delta = x_i$} \(snr \gets \{\{x_i\}\}\)
\ElsIf{$\Delta = \alpha_1 \vee \ldots \vee \alpha_n$}
    \State \(snr \gets \Call{snr}{\alpha_1} \times \ldots \times \Call{snr}{\alpha_n}\) \label{snr:ln:or}
\Else{$\:\: \Delta = \alpha_1 \wedge \ldots \wedge \alpha_n$}
    \State \(snr \gets \bigcup_{ \Call{iml}{\alpha_i}= \Call{iml}{\Delta}} \Call{snr}{\alpha_i}\) \label{snr:ln:and}
\EndIf
\State $\Call{cache}{\Delta} \gets snr$
\State \Return $snr$ 
\EndFunction
\end{algorithmic}
\end{algorithm}

\begin{algorithm}[tb]
\footnotesize
\caption{Shortest Sufficient Reasons (SSRs)
\label{alg:ssr}}
\begin{algorithmic}[1]
\Require monotone and \(\vee\)-decomposable NNF \(\Delta\) with no constants
\Ensure all shortest implicants of NNF \(\Delta\)
\Function{ssr}{$\Delta$} \Comment{\Call{Cache}{} initialized to \Call{nil}{}}
\State \(k \gets 0\)
\Repeat
	\State \(ssr = \Call{imp}{\Delta,k,\{\}}\)
	\State \(k \gets k + 1\)
\Until{\(ssr \neq \{\}\)}
\State \Return \(ssr\)
\EndFunction
\Function{imp}{$\Delta$, $k$, $\sigma$} \Comment{computes \(k\)-implicants for \(\Delta | \sigma\)}
\If{ \(\Call{cache}{\Delta,k,\sigma} \neq \Call{nil}{}\)}  \label{ssr:ln:l-cache}
	\Return \(\Call{cache}{\Delta,k,\sigma}\)
\EndIf
\shrink{
\If{$\Delta = \bot$}
    \Return \{\}
\EndIf
\If{$\Delta = \top$}
    \Return \{\{\}\}
\EndIf
}
\State \(\Sigma \gets \{\}\)
\If{$\Delta = x$} 
    \If{\(x \in \sigma\)} \(\Sigma \gets \{\{\}\}\) \Comment{\(\Delta | \sigma\) valid}
    \ElsIf{\(k \geq 1\)} \(\Sigma \gets \{\{x\}\}\) 
    \EndIf
\ElsIf{$\Delta = \alpha_1 \vee \ldots \vee \alpha_n$}
	\For{\(\Sigma_i \gets \Call{imp}{\alpha_i,k,\sigma}\)}
		\If{\(\Sigma_i \neq \{\{\}\}\)}
			\(\Sigma \gets \Sigma \cup \Sigma_i\) \Comment{\(\alpha_i | \sigma\) not valid}
		\Else
			\(\:\: \Sigma \gets \{\{\}\}\); \textbf{break} \Comment{\(\Delta | \sigma\) valid}
		\EndIf
	\EndFor
\Else{$\:\:\Delta = \alpha_1 \wedge \ldots \wedge \alpha_n$}
    \For{\(\sigma_1 \in \Call{imp}{\alpha_1,k,\sigma}\)}
    	\For{\(\sigma_2 \in \Call{imp}{\alpha_2 \wedge \ldots \wedge \alpha_n, k-|\sigma_1|, \sigma \cup \sigma_1}\)}
    		\State \(\Sigma \gets \Call{remove\_subsumed}{\Sigma \cup \{\sigma_1 \cup \sigma_{2}\}}\)
    	\EndFor
    \EndFor
\EndIf
\State \(\Call{cache}{\Delta,k,\sigma} \gets \Sigma\) \label{ssr:ln:i-cache}
\State \Return \(\Sigma\)
\EndFunction
\end{algorithmic}
\end{algorithm}


{\bf Shortest Necessary Reasons.}
This will be an output polynomial algorithm that is based on three (conceptual) passes on the 
complete reason which we describe next.

\begin{definition}\label{def:iml}
The \hl{implicate minimum length (\iml)} of a valid  formula is \(\infty\).
For non-valid formulas, it is the minimum length attained by any implicate of the formula.
\end{definition}

The first pass computes the implicate minimum length.

\begin{proposition}\label{prop:iml}
The \iml\ of a monotone, \(\vee\)-decomposable NNF is computed as follows:
\(\iml(\top) = \infty\),
\(\iml(\bot) = 0\),
\(\iml(x_i) = 1\), 
\(\iml(\alpha \vee \beta) = \iml(\alpha) + \iml(\beta)\) and
\(\iml(\alpha \wedge \beta) = \min(\iml(\alpha),\iml(\beta))\).
\end{proposition}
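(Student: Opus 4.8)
The plan is to prove the five equations simultaneously by structural induction on the monotone, $\vee$-decomposable NNF $\Delta$; by associativity it suffices to treat binary $\wedge$ and $\vee$. The base cases are immediate: $\top$ is valid, so $\iml(\top)=\infty$ directly by Definition~\ref{def:iml}; $\bot$ is non-valid and the empty clause (length $0$) is among its implicates, so $\iml(\bot)=0$; and a single state $x_i$ is satisfiable but non-valid (every variable has at least two states, a one-valued variable being trivial), so its shortest implicate is not the empty clause while $\{x_i\}$ is an implicate, giving $\iml(x_i)=1$. When a conjunct or disjunct is a constant in an inductive step, the claimed identity degenerates to $\top\wedge\gamma\equiv\gamma$, $\bot\wedge\gamma\equiv\bot$, $\top\vee\gamma\equiv\top$, $\bot\vee\gamma\equiv\gamma$, and is absorbed by the conventions $0+c=c$, $\infty+c=\infty$, $\min(0,c)=0$, $\min(\infty,c)=c$; so below I take $\alpha,\beta$ non-constant.

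For $\Delta=\alpha\vee\beta$, $\vee$-decomposability gives $\mathrm{vars}(\alpha)\cap\mathrm{vars}(\beta)=\emptyset$. The key lemma is that, writing $\delta_\alpha$ and $\delta_\beta$ for the sub-clauses of a clause $\delta$ on $\mathrm{vars}(\alpha)$ and $\mathrm{vars}(\beta)$, we have $\alpha\vee\beta\models\delta$ iff $\alpha\models\delta_\alpha$ and $\beta\models\delta_\beta$. The substantive direction is a model-surgery argument: from a model of $\alpha$ falsifying $\delta_\alpha$, reassign every variable of $\delta$ outside $\mathrm{vars}(\alpha)$ to a state that falsifies its literal in $\delta$ (possible since variables have at least two states), obtaining a model of $\alpha\vee\beta$ falsifying $\delta$. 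Since $|\delta|=|\delta_\alpha|+|\delta_\beta|+(\text{literals on other variables})$, the minimum-length implicate drops the last summand and minimizes the first two independently; the two minimizers lie on disjoint variable sets and glue into a single clause, so $\iml(\alpha\vee\beta)=\iml(\alpha)+\iml(\beta)$ (the valid case is handled by noting that no clause over variables with at least two states is valid, so a valid $\alpha$ admits no sub-clause and $\alpha\vee\beta$ is valid with $\iml(\alpha\vee\beta)=\infty$).

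For $\Delta=\alpha\wedge\beta$ the conjuncts may share variables, but monotonicity of $\Delta$ guarantees that for any variable $X$ at most one state of $X$ occurs in all of $\Delta$, so $\alpha$ and $\beta$ never disagree on $X$. The inequality $\iml(\alpha\wedge\beta)\le\min(\iml(\alpha),\iml(\beta))$ is trivial, since every implicate of $\alpha$ (or of $\beta$) is an implicate of $\alpha\wedge\beta$. For the reverse, let $\delta$ be a shortest implicate of $\alpha\wedge\beta$ (if $\alpha\wedge\beta$ is valid both sides are $\infty$); if $|\delta|<\iml(\alpha)$ and $|\delta|<\iml(\beta)$ then $\alpha\not\models\delta$ and $\beta\not\models\delta$, so there are worlds $\omega_\alpha\models\alpha$ and $\omega_\beta\models\beta$, each falsifying $\delta$. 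I then build a single world $\omega$ with $\omega\models\alpha\wedge\beta$ and $\omega\not\models\delta$, contradicting that $\delta$ is an implicate and hence forcing $|\delta|\ge\min(\iml(\alpha),\iml(\beta))$.

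Constructing this $\omega$ is the step I expect to be the main obstacle, and it is where positivity/monotonicity is indispensable. For each variable $X$: if the (unique) state $x^\star$ of $X$ occurring in $\alpha\wedge\beta$ is defined and differs from $\delta$'s literal on $X$, or $X$ does not occur in $\delta$, set $\omega(X)=x^\star$; otherwise set $\omega(X)$ to any state falsifying $\delta$'s literal on $X$. Then $\omega$ falsifies every literal of $\delta$ by construction, and every positive literal of $\alpha$ satisfied by $\omega_\alpha$ is also satisfied by $\omega$ — in the ``otherwise'' case $\delta$'s literal on $X$ is exactly $x^\star$, which $\omega_\alpha$ already had to falsify since $\omega_\alpha\not\models\delta$ — so $\omega\models\alpha$ because $\alpha$ is a positive NNF (satisfying more state-literals can only help), and symmetrically $\omega\models\beta$. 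This closes the conjunction case and the induction. (Conceptually, the whole statement is the classical prime-implicant/prime-implicate transversal duality specialized to monotone formulas; one could alternatively argue via that characterization, but the model-surgery route above is self-contained.) The only remaining care is bookkeeping: tracking the $\top/\bot$ sub-cases and the ``$\infty$'' arithmetic so that every clause of the recurrence, including the degenerate ones, is verified.
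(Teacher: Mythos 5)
Your overall strategy---a self-contained structural induction with model-surgery arguments, rather than the paper's appeal to the classical fact that distributing a monotone NNF into a CNF yields its shortest implicates---is legitimate, and your base cases and your treatment of $\vee$ (the restriction lemma $\alpha\vee\beta\models\delta$ iff $\alpha\models\delta_\alpha$ and $\beta\models\delta_\beta$, plus additivity of lengths over disjoint variable sets) are correct. But there is a genuine gap in the $\wedge$ case: your construction of $\omega$ mishandles negative literals in $\delta$. A clause in this paper may contain negative literals $\n{x}_j$, and implicates of monotone discrete NNFs really do contain them: for a three-valued variable $X$, the clause $\n{x}_2$ is a prime (and shortest) implicate of the monotone formula $x_1$. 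Now suppose the hypothetical short implicate $\delta$ of $\alpha\wedge\beta$ contains $\n{x}_j$ where $x_j$ is \emph{not} the unique state $x^\star$ of $X$ occurring in $\alpha\wedge\beta$. Your rule files this under the ``differs from $\delta$'s literal'' branch and sets $\omega(X)=x^\star$; but $x^\star\neq x_j$ \emph{satisfies} $\n{x}_j$ rather than falsifying it, so the claim ``$\omega$ falsifies every literal of $\delta$ by construction'' fails, $\omega$ may well be a model of $\delta$, and no contradiction is obtained. Your own justification of the ``otherwise'' branch (``$\delta$'s literal on $X$ is exactly $x^\star$'') shows you were tacitly assuming $\delta$ is a positive clause, which you may not assume: the argument must rule out \emph{every} clause shorter than $\min(\iml(\alpha),\iml(\beta))$, negative literals included. (For binary variables the issue disappears, since there $\n{x}_j$ is just the other state; it is only for variables with three or more states that the branch goes wrong.)

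The repair is short. When $\delta$'s literal on $X$ is $\n{x}_j$, set $\omega(X)=x_j$: since $\omega_\alpha$ and $\omega_\beta$ both falsify $\delta$, they both already assign $X$ to $x_j$, so $\omega$ agrees with each of them on $X$ and satisfies on $X$ exactly the states of $\alpha$ (resp.\ $\beta$) that they do; your monotonicity argument then goes through unchanged. Alternatively, first prove that any implicate of a monotone NNF containing $\n{x}_j$ with $x_j\neq x^\star$ remains an implicate of no greater length after replacing $\n{x}_j$ by $x^\star$ (or after dropping it when $X$ does not occur in the formula), and then run your construction on positive clauses only. With either patch the induction closes. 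It is worth noting that your route, once fixed, is more explicit than the paper's, which disposes of both recursive cases in two lines by asserting that the syntactic CNF distribution of a monotone NNF contains all shortest implicates---an assertion that itself quietly glosses over exactly these negative-literal implicates in the multi-valued setting.
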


The second pass prunes the NNF using the \iml\ of nodes.

\begin{proposition}\label{prop:prune}
Let \(\EXP(\Delta)\) be the NNF obtained from monotone, \(\vee\)-decomposable NNF \(\Delta\) 
by dropping \(\alpha_i\) from conjunctions \(\alpha = \alpha_1 \wedge \ldots \wedge \alpha_n\)
if \(\iml(\alpha_i) > \iml(\alpha)\).
Then \(\EXP(\Delta)\) is a monotone, \(\vee\)-decomposable NNF and
its prime implicates are the shortest implicates of \(\Delta\).
\end{proposition}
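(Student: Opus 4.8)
The plan is to prove the statement by induction on the structure of the monotone, $\vee$-decomposable NNF $\Delta$, strengthening the inductive hypothesis so that it also tracks lengths. Concretely, I will show that for every such $\Delta$ (base cases $\Delta$ a state or a constant being immediate): (i) $\EXP(\Delta)$ is again monotone and $\vee$-decomposable; (ii) $\iml(\EXP(\Delta)) = \iml(\Delta)$; and (iii) every prime implicate of $\EXP(\Delta)$ has length exactly $\iml(\Delta)$, and the set of prime implicates of $\EXP(\Delta)$ equals the set of implicates of $\Delta$ of length $\iml(\Delta)$, i.e.\ the shortest implicates of $\Delta$ (which are automatically prime). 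Part (i) is essentially syntactic: $\EXP$ only deletes conjuncts from conjunctions, never rewriting a disjunction or introducing a literal, so positivity and the ``at most one state per variable'' condition are inherited, and deleting a conjunct's sub-DAG can only shrink the variable set of an enclosing disjunct, preserving $\vee$-decomposability. I will also note that deleting only above-minimum conjuncts leaves all $\iml$-values unchanged (by the $\wedge$-rule of Proposition~\ref{prop:iml}), so it is immaterial whether the thresholds are evaluated on $\Delta$ or re-evaluated after pruning; constants, if present, are handled by the conventions $\iml(\top)=\infty$, $\iml(\bot)=0$ and are otherwise a degenerate case.

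The induction rests on two structural facts about implicates. (a) If $\Delta = \alpha_1 \vee \cdots \vee \alpha_n$ is $\vee$-decomposable, so the $\alpha_i$ have pairwise disjoint variable sets, then the prime implicates of $\Delta$ are exactly the unions $c_1 \cup \cdots \cup c_n$ with each $c_i$ a prime implicate of $\alpha_i$; hence $\iml(\Delta) = \sum_i \iml(\alpha_i)$ (consistent with Proposition~\ref{prop:iml}) and the shortest implicates of $\Delta$ are the unions of shortest implicates of the $\alpha_i$. This uses only that $\Delta \models c$ iff $\alpha_i \models c$ for all $i$, plus the fact that an implicate of $\alpha_i$ may be trimmed to its literals on $\alpha_i$'s variables (other literals are useless since those variables are unconstrained by $\alpha_i$ and have at least two states). (b) If $\Delta = \alpha_1 \wedge \cdots \wedge \alpha_n$ with each $\alpha_i$ monotone, then every prime implicate of $\Delta$ is a prime implicate of some $\alpha_i$; hence the shortest implicates of $\Delta$ are exactly the shortest implicates of the conjuncts $\alpha_i$ attaining $\iml(\alpha_i) = \min_j \iml(\alpha_j) = \iml(\Delta)$. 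The engine for (b) is a semantic characterization special to monotone NNFs: since each variable occurs with at most one state, for any clause $c$ we have $\Delta \models c$ iff the formula obtained from $\Delta$ by replacing every occurrence of a literal of $c$ by $\bot$ (and simplifying) is unsatisfiable; moreover, because a monotone NNF contains no pair of conflicting literals, a conjunction of monotone NNFs is unsatisfiable iff one of its conjuncts is (otherwise, setting all surviving literals true simultaneously satisfies all conjuncts at once). Combining both observations, $\Delta \models c$ forces some $\alpha_i$ to entail the subclause of $c$ consisting of $\alpha_i$'s literals, and minimality of $c$ then collapses $c$ onto that subclause.

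With (a) and (b) in hand, the induction is routine. Disjunction step: $\EXP$ passes through the root disjunction, so $\EXP(\Delta) = \EXP(\alpha_1) \vee \cdots \vee \EXP(\alpha_n)$, still $\vee$-decomposable; applying (a) to both $\Delta$ and $\EXP(\Delta)$ and the inductive hypothesis (prime implicates of $\EXP(\alpha_i)$ equal the shortest implicates of $\alpha_i$, all of length $\iml(\alpha_i)$), the products match and $\iml(\EXP(\Delta)) = \sum_i \iml(\alpha_i) = \iml(\Delta)$. Conjunction step: by definition $\EXP(\Delta) = \bigwedge_{i \in I} \EXP(\alpha_i)$ with $I = \{i : \iml(\alpha_i) = \iml(\Delta)\}$ nonempty (some conjunct attains the minimum, and being non-constant it is not dropped); by the inductive hypothesis each $\EXP(\alpha_i)$, $i \in I$, has $\iml = \iml(\Delta)$ with all prime implicates of length $\iml(\Delta)$, so $\iml(\EXP(\Delta)) = \iml(\Delta)$ and, by (b) applied to $\EXP(\Delta)$, its prime implicates are exactly those of the $\EXP(\alpha_i)$, $i \in I$ --- none of which can fail to be prime in $\EXP(\Delta)$, since a proper subclause would be shorter than $\iml(\EXP(\Delta)) = \iml(\Delta)$. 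By the inductive hypothesis this set equals $\bigcup_{i \in I} \{\text{shortest implicates of } \alpha_i\}$, which by (b) applied to $\Delta$ is precisely the set of shortest implicates of $\Delta$.

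The main obstacle is fact (b): showing that prime implicates of a monotone conjunction come from a single conjunct. The delicate points are the $\bot$-substitution characterization of entailed clauses (which genuinely uses positivity of the NNF together with the hypothesis that every discrete variable has at least two states, so that a literal can always be falsified), its interaction with clause literals whose variable either does not occur in $\Delta$ or occurs there with a different state, and the ``no conflicting literals, hence simultaneous satisfiability'' argument for conjunctions. Everything else --- the product structure for $\vee$-decomposable disjunctions, the syntactic preservation claims, and matching the length accounting against Proposition~\ref{prop:iml} --- is straightforward.
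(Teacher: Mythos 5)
Your proposal is correct and takes essentially the same route as the paper: the paper likewise notes that $\EXP$ only deletes conjuncts (preserving monotonicity and $\vee$-decomposability) and establishes the second part by structural induction using the prime-implicate recursion for monotone, $\vee$-decomposable NNFs --- union at conjunctions, Cartesian product at variable-disjoint disjunctions --- which is exactly your facts (a) and (b). You simply spell out in full the semantic justifications (the join-of-models argument for monotone conjunctions and the trimming argument for disjoint disjuncts) that the paper leaves implicit in its appeal to the CNF conversion from Proposition~\ref{prop:iml}.
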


The third pass computes the prime implicates of NNF \(\EXP(\Delta)\) 
in output polynomial time.
Algorithm~\ref{alg:snr} implements the second and third passes assuming the first,
linear-time pass has been performed. It represents an implicate by a set of literals 
and uses the Cartesian product operation on sets of implicates: 
\(S_1 \times S_2 = \{\sigma_1 \cup \sigma_2 \mid \sigma_1 \in S_1, \sigma_2 \in S_2\}\). 
Algorithm~\ref{alg:snr} applies the second pass implicitly by excluding conjuncts on Line~\ref{snr:ln:and}.
This is the standard procedure for computing the prime implicates
of a monotone NNF, but with no subsumption checking which is critical for its complexity.
In the standard procedure, one must ensure that the implicates computed
on Lines~\ref{snr:ln:or} and~\ref{snr:ln:and} are reduced: no implicate \(\sigma_1\) subsumes 
another \(\sigma_2\) (\(\sigma_1 \subseteq \sigma_2\)).\footnote{One can compute the prime
implicates of a monotone NNF by simply converting it to a CNF and then removing subsumed clauses.
Similarly, one can compute the prime implicants of a monotone NNF by converting it to a DNF
and removing subsumed terms. See, e.g.,~\cite{BooleanFunctions}.}
\shrink{
This check takes quadratic time so it can be quite expensive. Moreover, the intermediate
sets of propagated implicates may be much larger than the final set of prime implicates. 
}
Since NNF \(\EXP(\Delta)\) is 
\(\vee\)-decomposable, the disjuncts \(\alpha_1, \ldots, \alpha_n\) on Line~\ref{snr:ln:or}
do not share variables. Hence, if every \(\Call{snr}{\alpha_i}\) is reduced, their Cartesian product is 
reduced. Moreover, due to pruning in the second pass, the
implicates \(\Call{snr}{\alpha_i}\) computed on Line~\ref{snr:ln:and} all have the same length so no subsumption is possible.

\begin{table*}[t]
\centering
\scriptsize
\begin{tabular}{@{}lrrrrrrrrlrlrlrlrl@{}}\toprule
\multicolumn{2}{c}{benchmark} & 
\multicolumn{6}{c}{decision tree/graph properties} &
\multicolumn{2}{c}{\SR} &
\multicolumn{2}{c}{\SSR} &
\multicolumn{2}{c}{\NR} &
\multicolumn{2}{c}{\SNR} 
\\ 
\cmidrule(lr){1-2}
\cmidrule(lr){3-8}
\cmidrule(lr){9-10}
\cmidrule(lr){11-12}
\cmidrule(lr){13-14}
\cmidrule(lr){15-16}
{name} & {examples} & 
{nodes} & {num} & {nom}  & {classes} & {card} & {acc} &
{count} & {time}  &
{count} & {time} &
{count} & {time} &
{count} & {time}  
\\ 
\midrule

adult & 48842 &
726 & 6 & 7 & 2 & 24 & 86.0  &
2.8 & 0.0005 &
1.4 & 0.0007 &
5.6 & 0.0004 &
2.9 & 0.0003
\\
  
compas & 5278 &
55 & 5 & 3 & 2 & 8 & 71.2  &
1.7 & 0.0002 &
1.1 & 0.0003 &
2.9 & 0.0002 &
1.8 & 0.0001
\\
      
fash-mnist  & 70000 &
6681 & 734 & 0 & 10 & 29 & 80.8 &
1851.6 & 3.7925\(^{\mathbf{657}}\) &
7.5 & 0.0348\(^{\mathbf{8}}\) &
104.8 & 0.0123 &
12.4 & 0.0008
\\

gisette & 7000 & 
231 & 111 & 0 & 2 & 3 & 93.9 &
3288.4 & 6.2361\(^{\mathbf{486}}\) &
95.5 & 0.0545 &
31.8 & 0.0013 &
7.4 & 0.0004
\\ 

isolet & 7797 &
645 & 201 & 0 & 26 & 7 & 83.9 &
8.9 & 0.0008 &
1.5 & 0.0026 &
17.7 & 0.0006 &
10.1 & 0.0003
\\              

la1s.wc & 3204 &
457 & 201 & 0 & 6 & 3 & 73.2 &
461.1 & 1.8227\(^{\mathbf{36}}\) &
7.7 & 0.0368 &
42.1 & 0.0038 &
19.3 & 0.0004
\\              

mnist-784 & 70000 &
4365 & 477 & 0 & 10 & 18 & 88.3 &
1833.3 & 4.6191\(^{\mathbf{622}}\) &
5.5 & 0.0235\(^{\mathbf{1}}\) &
103.9 & 0.0099 &
11.5 & 0.0008
\\         

nomao & 34465 &
932 & 61 & 25 & 2 & 17 & 95.3 &
1640.8 & 4.9576\(^{\mathbf{156}}\) &
2.4 & 0.0180 &
38.6 & 0.0018 &
4.3 & 0.0005
\\

ohscal.wc & 11162 &
1761 & 582 & 0 & 10 & 6 & 70.9 &
787.1 & 2.0714\(^{\mathbf{225}}\) &
86.2 & 0.3018\(^{\mathbf{21}}\) &
59.9 & 0.0090 &
23.7 & 0.0006
\\       

spambase & 4601&
189 & 37 & 0 & 2 & 8 & 91.5 & 
35.8 & 0.0029 &
4.6 & 0.0060 &
16.9 & 0.0007 &
4.1 & 0.0003
\\           


andes & --- &
5454 & --- & 24 & 2 & 2 & --- &
78.8 & 0.1660 &
2.6 & 0.0295 &
58.0 & 0.0300 &
4.5 & 0.0055
\\                    
  
emdec6g30 & --- &
4154 &  --- & 30 & 2 & 2 & --- &
11.6 & 0.0362 &
1.8 & 0.0445 &
13.6 & 0.0184 &
2.9 & 0.0048
\\       
  
math-skills & --- &
3693629 & --- & 46 & 2 & 2 & --- &
9.6 & 0.2895\(^{\mathbf{26}}\) &
7.6 & 0.0261 &
9.9 & 0.4271\(^{\mathbf{18}}\) &
3.4 & 0.0290
\\         
  
mooring & --- &
14468 & --- & 22 & 2 & 2 & --- &
145.4 & 2.3991\(^{\mathbf{14}}\) &
20.9 & 3.9951\(^{\mathbf{7}}\) &
216.3 & 0.7678\(^{\mathbf{1}}\) &
16.2 & 0.0189
\\

tcc4e38 & --- &
22508 & --- & 38 & 2 & 2 & --- &
14.4 & 0.1370 &
2.6 & 0.0155 &
10.4 & 0.0632 &
2.0 & 0.0103
\\
          
\bottomrule
\end{tabular}
\caption{Evaluating Algorithms~\ref{alg:snr} \&~\ref{alg:ssr}. Times in secs. First ten entries are decision trees. Last five entries are decision graphs.
\label{tab:dt-dg}}
\end{table*}

\begin{proposition}\label{prop:snr-complexity}
Let \(\Delta\) be a monotone, \(\vee\)-decomposable NNF with \(M\) shortest implicates, \(N\) nodes and \(E\) edges. 
The time complexity of \(\Call{snr}{\Delta}\) in Algorithm~\ref{alg:snr} is \(O(M \cdot E)\)
and its space complexity is \(O(M \cdot N)\).
\end{proposition}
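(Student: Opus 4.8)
The plan is to analyze Algorithm~\ref{alg:snr} node by node. Because \Call{snr}{} consults its cache before doing any work and writes the result back afterwards, each distinct node of $\Delta$ is \emph{expanded} at most once, and every other call is an $O(1)$ cache hit; there are $O(E)$ calls in total, one per edge traversed. So I would charge the expansion work at each node to that node's outgoing edges and sum. The crux is a structural lemma: for every node $v$ that the algorithm actually visits, the set it returns is small, namely $M_v := |\Call{snr}{v}| \le M$, where $M$ is the number of shortest implicates of the root $\Delta$.

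To prove the lemma I would first note that Line~\ref{snr:ln:and} recurses only into conjuncts $\alpha_i$ with $\iml(\alpha_i)=\iml(\Delta)$ (the $\iml$ values coming from the linear-time first pass), while Line~\ref{snr:ln:or} recurses into all disjuncts; hence the visited nodes are exactly those reachable from the root along the edges that survive the pruning of Proposition~\ref{prop:prune}, and the chain of recursive calls down to any visited $v$ is such a surviving path. I would then show each surviving parent--child step cannot increase the returned-set size. For a $\wedge$-node $u=\alpha_1\wedge\cdots\wedge\alpha_n$ and a surviving conjunct $\alpha_i$, the algorithm sets $\Call{snr}{u}=\bigcup_{j:\iml(\alpha_j)=\iml(u)}\Call{snr}{\alpha_j}\supseteq\Call{snr}{\alpha_i}$, so $M_{\alpha_i}\le M_u$ immediately. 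For a $\vee$-node $u=\alpha_1\vee\cdots\vee\alpha_n$, the algorithm sets $\Call{snr}{u}=\Call{snr}{\alpha_1}\times\cdots\times\Call{snr}{\alpha_n}$; since $\Delta$ and therefore every subformula is $\vee$-decomposable, the $\alpha_j$ share no variables, so a union $\sigma_1\cup\cdots\cup\sigma_n$ determines each $\sigma_j$ (it is the restriction of the union to the variables of $\alpha_j$). Thus the Cartesian product does not collapse, each $\Call{snr}{\alpha_j}$ is nonempty (by structural induction on the constant-free input), and $M_u=\prod_j M_{\alpha_j}\ge M_{\alpha_i}$. Composing along the surviving root-to-$v$ path gives $M_v\le M$.

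For the time bound I would bound the expansion cost of each visited node using the lemma. A leaf costs $O(1)$. A $\vee$-node $u$ of out-degree $d_u$ is handled by forming the Cartesian product incrementally ($\Call{snr}{\alpha_1}$, then times $\Call{snr}{\alpha_2}$, and so on): after each of the $d_u-1$ steps the partial product has at most $M_u\le M$ elements, and each element is produced by an $O(1)$ disjoint union (disjointness again from $\vee$-decomposability), so the node costs $O(d_u\cdot M)$. A $\wedge$-node unions at most $d_u$ cached sets, each of size $\le M$, deduplicating as it goes so the returned count stays $\le M$, again $O(d_u\cdot M)$ (treating an implicate as an atomic object for hashing and comparison). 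Adding the $O(E)$ cache-hit overhead, the total is $\sum_v O(d_v\cdot M)+O(E)=O(M\cdot\sum_v d_v)+O(E)=O(M\cdot E)$, using $\sum_v d_v=E$ and $M\ge 1$ (as $\Delta$ is non-valid it has an implicate). For space, the only persistent storage is the cache: one entry per node, holding the $M_v\le M$ implicates of $\Call{snr}{v}$; charging $O(1)$ per stored implicate (equivalently, representing an implicate built at a $\vee$-node by pointers into its children's cached implicates rather than copying), this is $O(M)$ per node and $O(M\cdot N)$ overall.

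I expect the structural lemma $M_v\le M$ to be the only real difficulty, and within it the $\vee$-node step: it genuinely relies on $\vee$-decomposability, since without it the Cartesian product can collapse and, worse, the per-node counts could blow up across levels, breaking the $O(M\cdot E)$ time and $O(M\cdot N)$ space bounds. Pinning down that the visited nodes are precisely the non-pruned ones (so that the local size inequalities chain from $v$ up to the root) is the other point that needs care; the remaining accounting---disjoint-union cost, deduplication keeping counts at $M$, and the edge-charging sum---is routine.
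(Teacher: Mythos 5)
Your proof is correct and follows essentially the same route as the paper's: the key invariant $|\Call{snr}{v}|\le M$ at every visited node, obtained from the exact product of cardinalities at $\vee$-nodes and the superset relation at $\wedge$-nodes, then summing per-node costs over edges. You in fact justify two points the paper leaves implicit (the Cartesian product cannot collapse because $\vee$-decomposability makes the components recoverable, and the factor sets are nonempty since the input is constant-free), so nothing is missing.
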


We obtain a tighter complexity if we apply Algorithm~\ref{alg:snr}
to the closed-form complete reasons of decision trees given by Proposition~\ref{prop:cr-cf},
due to the following bound on the number of prime implicates  (a superset of shortest implicates).

\begin{proposition}\label{prop:dt-nr}
For a decision tree, the complete reason for an instance in class \(c\) has \(\leq L\) prime implicates,
where \(L\) is the number of leaves in the tree labeled with a class \(c' \neq c\).
\end{proposition}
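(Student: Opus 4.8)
The plan is to prove, by structural induction on the decision tree $T$, the stronger statement that the NNF $\Gamma^c[T]$ from Proposition~\ref{prop:cr-cf} has at most $n(T)$ prime implicates, where $n(T)$ denotes the number of leaves of $T$ labeled with a class other than $c$. Since $\delta$ is in class $c$, the complete reason is exactly $\Gamma^c[T]$ (Definition~\ref{def:cr}, Proposition~\ref{prop:cr-cf}) and $L = n(T)$, so this yields the proposition. Throughout I would use the standard recursive procedure for the prime implicates of a monotone NNF described in connection with Algorithm~\ref{alg:snr}; writing $\mathrm{PI}(\cdot)$ for the set of prime implicates, this gives $\mathrm{PI}(x_i) = \{\{x_i\}\}$, $\mathrm{PI}(\bot) = \{\{\}\}$ (the empty clause), $\mathrm{PI}(\top) = \{\}$, $\mathrm{PI}(\alpha \vee \beta) = \{D_\alpha \cup D_\beta : D_\alpha \in \mathrm{PI}(\alpha),\, D_\beta \in \mathrm{PI}(\beta)\}$ when $\alpha \vee \beta$ is $\vee$-decomposable (the product being automatically reduced, as noted in Section~\ref{sec:algorithms}), and $\mathrm{PI}(\alpha_1 \wedge \cdots \wedge \alpha_m)$ equal to the non-subsumed clauses among $\mathrm{PI}(\alpha_1) \cup \cdots \cup \mathrm{PI}(\alpha_m)$. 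The two numerical consequences I need are $|\mathrm{PI}(\alpha \vee \beta)| = |\mathrm{PI}(\alpha)| \cdot |\mathrm{PI}(\beta)|$ for $\vee$-decomposable disjunctions and $|\mathrm{PI}(\alpha_1 \wedge \cdots \wedge \alpha_m)| \le \sum_{j} |\mathrm{PI}(\alpha_j)|$.

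For the base case $T$ is a leaf: if $T$ has class $c$ then $\Gamma^c[T] = \top$, which has $0$ prime implicates, and $n(T) = 0$; if $T$ has class $c' \ne c$ then $\Gamma^c[T] = \bot$, which has exactly one prime implicate (the empty clause), and $n(T) = 1$. For the inductive step, suppose $T$ tests variable $X$ and has children $T_1, \ldots, T_m$, so by Proposition~\ref{prop:cr-cf} we have $\Gamma^c[T] = \bigwedge_j (\Gamma^c[T_j] \vee \l_j)$ where each $\l_j$ is either $\bot$ or a single state of $X$. By Proposition~\ref{prop:dt-cr}, $\Gamma^c[T]$ is monotone and $\vee$-decomposable, and these properties pass to its sub-NNFs; in particular each disjunction $\Gamma^c[T_j] \vee \l_j$ has variable-disjoint disjuncts, whence $|\mathrm{PI}(\Gamma^c[T_j] \vee \l_j)| = |\mathrm{PI}(\Gamma^c[T_j])| \cdot |\mathrm{PI}(\l_j)| = |\mathrm{PI}(\Gamma^c[T_j])|$, since $|\mathrm{PI}(\l_j)| = 1$ whether $\l_j = \bot$ or $\l_j$ is a single literal. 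Applying the conjunction bound and then the induction hypothesis to each $T_j$ gives $|\mathrm{PI}(\Gamma^c[T])| \le \sum_j |\mathrm{PI}(\Gamma^c[T_j] \vee \l_j)| = \sum_j |\mathrm{PI}(\Gamma^c[T_j])| \le \sum_j n(T_j)$. Finally, because $T$ is a tree, its leaves labeled with a class other than $c$ are partitioned among the subtrees $T_1, \ldots, T_m$, so $\sum_j n(T_j) = n(T)$, which completes the induction.

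I expect the points requiring care to be bookkeeping rather than conceptual: the conventions for $\mathrm{PI}(\top)$, $\mathrm{PI}(\bot)$, and $\mathrm{PI}(\Gamma^c[T_j] \vee \bot)$; the fact that the recursion of Proposition~\ref{prop:cr-cf} is well defined for every subtree with the fixed instance $\delta$ even when $\delta$ is not routed into that subtree, so the induction hypothesis is always available; and, most importantly, the appeal to Proposition~\ref{prop:dt-cr}, since it is $\vee$-decomposability (not merely positivity) that makes the disjunction count exactly multiplicative rather than subadditive, and this is in turn where the weak test-once property enters the argument indirectly.

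As an alternative that avoids the induction, one can unroll Definition~\ref{def:dt-nnf} to see that $\Delta^c[T]$ is equivalent to the CNF $\bigwedge_u (\text{negation of the path leading to } u)$ over leaves $u$ labeled with a class other than $c$; distributing $\forall \delta$ over this conjunction by Proposition~\ref{prop:distribute-and-or} and simplifying each quantified path-negation to a single nonempty clause (using that, along $u$'s path, only the tightest test of each variable matters) exhibits $\Gamma^c[T]$ as a monotone CNF with at most $L$ clauses; since for a monotone NNF the prime implicates are the non-subsumed clauses of its CNF, this yields at most $L$ prime implicates directly.
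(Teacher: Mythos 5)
Your proof is correct and follows essentially the same route as the paper's: structural induction on the tree, the observation that $\l_j$ contributes exactly one (possibly empty) clause to each prime implicate of $\Gamma^c[T_j]\vee\l_j$ so the disjunction preserves the count, the subadditive bound at the conjunction, and the additivity of off-class leaf counts over subtrees. The alternative CNF-unrolling sketch at the end is a nice extra, but the main argument matches the paper's proof step for step.
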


The complete reason for a decision tree \(T\) has \(O(|T|)\) nodes and edges,
where \(|T|\) is the decision tree size (see Proposition~\ref{prop:cr-cf}). 
The time and space complexity of Algorithm~\ref{alg:snr} is then
\(O(|T| \cdot L)\) for decision trees. 

\cite{corr/abs-2106-01350} showed that the number of contrastive explanations is linear in
the decision tree size.
Proposition~\ref{prop:dt-nr} tightens this result by providing a more specific bound.
For a decision \(T\) with binary variables and binary classes, \cite{Audemard2021OnTE}
showed that the set of all contrastive explanations can be computed in time polynomial in \(|T|+n\), 
where \(n\) is the number of variables.
Algorithm~\ref{alg:snr} comes with a tighter complexity for the computation of shortest contrastive
explanations for decision trees and applies to multi-class decision trees with discrete features.
Another related complexity result is that counterfactual explanations, as discussed earlier,
can be enumerated with polynomial delay if the classifier satisfies some conditions
as stated in~\cite{kr/AudemardKM20}.

We finally observe that if an NNF is monotone and \(\wedge\)-decomposable, then one
can develop a dual of Algorithm~\ref{alg:snr} for computing the shortest prime implicants of the NNF.


{\bf Shortest Sufficient Reasons}
We next present Algorithm~\ref{alg:ssr} for computing the shortest implicants of monotone, \(\vee\)-decomposable NNFs
which is a hard task. 
For decision trees, the problem of deciding whether there exists a sufficient reason 
of length \(\leq k\) is NP-complete~\cite{nips/BarceloM0S20}. 
Since decision trees have closed-form complete reasons that are monotone and \(\vee\)-decomposable,
computing the shortest implicants for this class of NNFs is hard. 
\cite{Audemard2021OnTE} showed that the number of shortest sufficient reasons for decision trees can be exponential 
and provided an incremental algorithm for computing the shortest sufficient reasons for decision trees 
with binary variables and binary classes, based on a reduction to the \textsc{Partial MaxSat} problem.
Algorithm~\ref{alg:ssr} has a broader scope, does not require a reduction and is based on two key techniques.

The first technique is to compute all unsubsumed implicants of length \(\leq k\), called \(k\)-implicants, starting with \(k=0\).
If no implicants are found, \(k\) is incremented and the process is repeated. 
The second technique relates to computing the \(k\)-implicants of a conjunction \(\alpha \wedge \beta\).
If we have the \(k\)-implicants \(S\) for \(\alpha\) and the \(k\)-implicants \(R\) for \(\beta\),
we can compute the Cartesian product \(S \times R\) and keep unsubsumed implicants of
length \(\leq k\). Algorithm~\ref{alg:ssr} does something more refined.
It first computes the \(k\)-implicants \(S\) for \(\alpha\). For each implicant \(\sigma \in S\),
it then computes and accumulates the \(k'\)-implicants for \(\beta | \sigma\) where \(k' = k - |\sigma|\). 
These techniques control the number of generated \(k\)-implicants at each NNF node (smaller \(k\) leads to fewer \(k\)-implicants). 
Our implemented caching scheme on Lines~\ref{ssr:ln:l-cache} \&~\ref{ssr:ln:i-cache}
exploits the following properties. 
If the \(k\)-implicants for \(\Delta | \sigma\) are \(\{\{\}\}\), then these are also its \(j\)-implicants for all \(j\).
Further, if we cached the \(k\)-implicants for \(\Delta | \sigma\), then we can use them to retrieve its
\(j\)-implicants for any \(j \leq k\) by selecting implicants of length \(\leq j\). 
We empirically evaluate Algorithms~\ref{alg:snr} \&~\ref{alg:ssr} next.

\shrink{
We finally note that one can compute a single prime implicant of a monotone, \(\vee\)-decomposable
NNF in polytime using the greedy procedure in~\cite{corr/abs-2106-01350}. This procedure
starts with an instance that satisfies the NNF and repeatedly drops states from the instance while 
ensuring that the partial instance continues to imply the NNF. The procedure has a polytime complexity 
as long as the implication test can be done efficiently. This  test can be done in linear time
if the NNF is monotone or \(\vee\)-decomposable (Lemma~\ref{lem:D-M} in appendix).  
}


{\bf Empirical Evaluation.}
Table~\ref{tab:dt-dg} depicts an empirical evaluation 
on decision trees learned from OpenML datasets \cite{OpenML2013} and binary decision graphs 
compiled from Bayesian network classifiers~\cite{pgm/ShihCD18,aaai/ShihCD19}. The decision trees were 
learned by {\sc weka}~\cite{weka} using python-weka-wrapper3 available at \url{pypi.org}. 
We used {\sc weka}'s J48 classifier with default settings, which learns pruned C4.5 decision trees
with numeric and nominal features~\cite{Quinlan1993}. 
Each dataset was split using {\sc weka} into training (\(85\%\)) and testing (\(15\%\)) data. 
We aimed for OpenML datasets with more than \(100\) features since many smaller datasets we 
tried were very easy, but we kept a few smaller ones since they are commonly reported on (adult, compas, spambase). 
Some of the learned decision trees had significantly fewer variables than the corresponding datasets (e.g., gisette 
has \(5000\) features but the learned decision tree has \(111\)).
The decision graphs we used are the reportedly largest ones compiled by~\cite{pgm/ShihCD18,aaai/ShihCD19}.
For each decision tree, we computed reasons for decisions on \(1000\) instances sampled from testing
data (or all testing data if smaller than \(1000\)). We tried random instances but they were much easier. 
For each decision graph, we computed complete reasons for \(1000\) random instances (there is no corresponding data). 
The total number of instances for the fifteen benchmarks was \(13963\). 
We did not report the time for computing a complete reason as this
is a closed form with linear size (Equation~\ref{eq:cr-cf}).

We compared four algorithms: 
\SNR\ (Algorithm~\ref{alg:snr}), 
\NR\ (standard algorithm for computing prime implicates of a monotone NNF but with no subsumption 
checking at \(\vee\)-nodes since the input NNF is \(\vee\)-decomposable),\footnote{More precisely,
\NR\ is Algorithm~\ref{alg:snr} with two exceptions. First, the NNF is not pruned on Line~\ref{snr:ln:and} so the union is over all \(\alpha_i\). Second,
subsumption checking is applied after Line~\ref{snr:ln:and} to ensure that all computed implicates are subset-minimal.}
\SSR\ (Algorithm~\ref{alg:ssr}) and
\SR\ (dual of \NR).
Each instance had a timeout of \(60\) seconds.
In Table~\ref{tab:dt-dg}, nodes, num, nom, classes; card and acc stand for
number of nodes, numeric features, nominal features, classes; maximum cardinality of variables 
and accuracy. Count and time are averages over instances that both \SR/\SSR\ (\NR/\SNR) finished.
The bolded exponent of time is the number of instances that timed out (not reported if zero).
The supplementary material contains further statistics such as stdev, mean and max.
We used a Python implementation on a dual Intel(R) Xeon E5-2670 CPUs running at 2.60GHz and 256GB RAM.
As revealed by Table~\ref{tab:dt-dg}, \SSR\ is quite effective. Its average running time is normally in milliseconds,
it timed out on only \(37\) instances and can
be two orders of magnitude faster than \SR\ which timed out on \(2222\) instances.
\SNR\ is also much faster than \NR\ but the latter is also very effective on decision trees (see 
Proposition~\ref{prop:dt-nr}) but timed out on \(19\) decision graph instances.
All algorithms are quite effective on the easier benchmarks. 

\section{Conclusion}
We studied the computation of complete reasons for multi-class classifiers with nominal and numeric features. 
We derived closed forms for the complete reasons of decision trees and graphs in the form of monotone, \(\vee\)-decomposable
NNFs and showed how similar forms can be derived for SDDs. We further established a correspondence 
between the prime implicates of complete reasons and contrastive explanations. We then presented an output polynomial
algorithm for enumerating the shortest implicates (shortest necessary reasons) for complete reasons in the above form.
We also presented a simple algorithm for enumerating the shortest implicants (shortest sufficient reasons) which appears to be
effective based on an empirical evaluation over fifteen datasets. 

\section{Acknowledgements}
This work has been partially supported by NSF grant \#ISS-1910317 and ONR grant \#N00014-18-1-2561.

\bibliography{biblio}


\appendix

\onecolumn

\section{Proofs}

\begin{lemma}\label{lem:D-M}
Conditioning an NNF preserves the properties of \(\vee\)-decomposability, \(\wedge\)-decomposability and monotonicity.
Moreover, if an NNF is \(\wedge\)-decomposable or monotone, then its satisfiability can be decided in linear time.
Finally, if an NNF is \(\vee\)-decomposable or monotone, then its validity can be decided in linear time.
\end{lemma}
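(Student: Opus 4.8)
The plan is to prove the three assertions separately, each by a single bottom-up pass over the NNF's DAG (equivalently, by structural induction), and to read the linear running times off that pass.

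For the closure-under-conditioning claim, I would first observe that conditioning on a positive term \(\gamma\) touches only the leaves: it replaces each literal occurrence by the constant \(\top\) or \(\bot\) according to whether \(\gamma\) makes that literal true or false (for a state \(x_i\in\gamma\): \(x_i\) and \(\n{x}_j\) with \(j\neq i\) become \(\top\), while \(x_j\) and \(\n{x}_i\) with \(j\neq i\) become \(\bot\)), leaving every \(\wedge\)/\(\vee\) node and the DAG shape intact. Since constants are NNFs, the result is still an NNF and no simplification is required. Two consequences then give everything: (i) the set of literals occurring in \(\Delta\cd\gamma\) is contained in that of \(\Delta\), so if \(\Delta\) contains only positive literals and no two distinct states of a variable then neither does \(\Delta\cd\gamma\) — monotonicity is preserved; and (ii) \(\mathrm{vars}(\alpha\cd\gamma)\subseteq\mathrm{vars}(\alpha)\) for every subformula \(\alpha\), so conjuncts (resp. disjuncts) that were pairwise variable-disjoint remain so — \(\wedge\)- and \(\vee\)-decomposability are preserved.

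For satisfiability I would compute a Boolean flag bottom up. In the \(\wedge\)-decomposable case: literals and \(\top\) are satisfiable, \(\bot\) is not, \(\bigvee_i\alpha_i\) is satisfiable iff some \(\alpha_i\) is, and \(\bigwedge_i\alpha_i\) is satisfiable iff every \(\alpha_i\) is. The conjunction case is the only nonroutine one; it holds because the conjuncts share no variables, so a model of \(\bigwedge_i\alpha_i\) can be glued together from independent models of the \(\alpha_i\) — a world is a total function on all variables, so each per-conjunct model extends, and the extensions cannot conflict. In the monotone case I would instead use that monotonicity forces each variable \(X\) to occur, if at all, only through a single positive state \(x_i\); hence the world \(\omega^{\top}\) that sends each occurring \(X\) to that \(x_i\) makes every literal true, and since a positive formula is \(\leq\)-monotone in literal truth values, \(\omega^{\top}\) satisfies \(\Delta\) whenever anything does. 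So \(\Delta\) is satisfiable iff \(\omega^{\top}\models\Delta\), a linear bottom-up evaluation (reading the literal of a degenerate one-state variable as \(\top\)).

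The validity claims are dual. For \(\vee\)-decomposable NNFs I would compute: \(\top\) is valid, \(\bot\) and literals are not (the lone exception, a positive literal of a one-state variable, being trivial bookkeeping), \(\bigwedge_i\alpha_i\) is valid iff every \(\alpha_i\) is, and \(\bigvee_i\alpha_i\) is valid iff some \(\alpha_i\) is — the last case again using variable-disjointness: if no disjunct is valid, pick a falsifying world for each \(\alpha_i\) and glue them on their disjoint variable sets to falsify the whole disjunction. For monotone NNFs, the \(\leq\)-minimal world \(\omega^{\bot}\) that sends each occurring \(X\) to some state other than its unique occurring one falsifies every literal, so \(\Delta\) is valid iff \(\omega^{\bot}\models\Delta\). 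I expect the only real care to lie in the two model-gluing steps — which hinge on worlds being total over all variables, so partial models always extend — and in checking that "no two distinct states of a variable" is exactly the hypothesis that makes the two extremal-world arguments go through; constants and degenerate one-state variables are then pure bookkeeping.
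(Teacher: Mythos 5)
Your proof is correct and follows essentially the same route as the paper's: conditioning only rewrites leaves (so structure, variable sets and literal occurrences can only shrink), satisfiability/validity tests distribute over the decomposable connectives by gluing models/countermodels on disjoint variable sets, and for monotone NNFs one evaluates at the extremal world that makes every literal true (resp.\ false), equivalently replacing all literals by \(\top\) (resp.\ \(\bot\)). You simply spell out the details the paper leaves implicit, including the harmless one-state-variable edge case.
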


\begin{proof}[\bf Proof of Lemma~\ref{lem:D-M}]
The first part about conditioning follows directly from the definitions of conditioning, \(\vee\)-decomposability, 
\(\wedge\)-decomposability and monotonicity.
The satisfiability test distributes over disjunctions. It distributes over a conjunction when the conjuncts do not share variables.
The validity test distributions over conjunctions. It distributes over disjunctions when the disjuncts do not share variables.
A monotone NNF is satisfiable (valid) iff it evaluates to true after replacing each of its literals with \(\top\) (\(\bot\)).
\end{proof}

\begin{lemma}\label{lem:state-or}
For state \(x_i\) of variable \(X\) and positive NNF \(\alpha\), we have \(x_i \vee \alpha\ = x_i \vee \alpha[x_i \To \bot]\) 
where \(\alpha[x_i \To \bot]\) is obtained by replacing every occurrence of \(x_i\) in \(\alpha\) with \(\bot\).
\end{lemma}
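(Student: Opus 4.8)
The plan is to prove the identity semantically, by showing that the two sides have exactly the same models. I fix an arbitrary world \(\omega\) and argue that \(\omega \models x_i \vee \alpha\) iff \(\omega \models x_i \vee \alpha[x_i \To \bot]\), splitting on the state that \(\omega\) assigns to variable \(X\). If \(\omega[X] = x_i\), then \(\omega \models x_i\), so both disjunctions are satisfied and there is nothing to check.

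The interesting case is \(\omega[X] = x_j\) for some \(j \neq i\). Then \(\omega \not\models x_i\), so the first disjunct is false on both sides, and it suffices to show \(\omega \models \alpha\) iff \(\omega \models \alpha[x_i \To \bot]\). For this I compare the two constant formulas \(\alpha \cd \omega\) and \(\alpha[x_i\To\bot] \cd \omega\). By the definition of conditioning on the positive term \(\omega\) (which contains the state \(x_j\) with \(j \neq i\)), forming \(\alpha \cd \omega\) replaces every occurrence of \(x_i\) in \(\alpha\) with \(\bot\), every occurrence of a state that matches \(\omega\) with \(\top\), and every other state with \(\bot\). Forming \(\alpha[x_i\To\bot] \cd \omega\) performs precisely the same replacements, the only difference being that the \(x_i\)-occurrences have already been turned into \(\bot\) by the substitution and conditioning leaves \(\bot\) unchanged. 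Hence \(\alpha \cd \omega\) and \(\alpha[x_i\To\bot] \cd \omega\) are syntactically the same constant formula, so they evaluate identically, which gives \(\omega \models \alpha\) iff \(\omega \models \alpha[x_i\To\bot]\). Combining the two cases shows the two sides are equivalent.

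I expect no real obstacle here; the only point needing care is the bookkeeping observation that ``substitute \(x_i \To \bot\), then condition on \(\omega\)'' and ``condition on \(\omega\)'' coincide whenever \(\omega\) falsifies \(x_i\), which is immediate since conditioning on a world assigning \(X \neq x_i\) already maps every occurrence of \(x_i\) to \(\bot\). An alternative route would be a short structural induction on the positive NNF \(\alpha\), distributing conditioning over \(\wedge\) and \(\vee\) and handling the literal \(x_i\) at the base case, but the direct semantic comparison above is shorter. (Note that positivity of \(\alpha\) is not actually essential for the conclusion; it merely matches the setting in which the lemma is used.)
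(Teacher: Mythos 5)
Your proof is correct, but it takes a genuinely different route from the paper. The paper proves Lemma~\ref{lem:state-or} by structural induction on the positive NNF \(\alpha\): base cases \(\top,\bot,x_i,x_j\ (j\neq i),y_j\ (Y\neq X)\), then the two inductive steps, distributing \(x_i\vee\cdot\) over \(\wedge\) and \(\vee\). You instead argue semantically: for a world \(\omega\) with \(\omega[X]=x_i\) both sides hold via the first disjunct, and for \(\omega[X]=x_j\), \(j\neq i\), conditioning on \(\omega\) already sends every occurrence of \(x_i\) to \(\bot\), so \(\alpha\cd\omega\) and \(\alpha[x_i\To\bot]\cd\omega\) are literally the same constant formula. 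Both arguments establish the intended logical equivalence (the paper's ``\(=\)'' here is equivalence, as its own base case \(x_i\vee x_i=x_i\vee\bot\) makes clear), so your proof is a valid substitute where the lemma is used, e.g.\ inside the proof of Proposition~\ref{prop:cr-cf}. What each buys: the paper's induction stays in the purely syntactic, equational style of the rest of the appendix and produces the equivalence by local rewriting; your semantic argument is shorter, isolates the one real observation (any world falsifying \(x_i\) already ``evaluates'' \(x_i\) to \(\bot\)), and, as you note, shows that positivity of \(\alpha\) is not actually needed --- the paper's induction, whose base cases cover only positive literals, is stated and proved only for the positive case.
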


\begin{proof}[\bf Proof of Lemma~\ref{lem:state-or}]
We prove this by induction on the structure of positive NNF \(\alpha\).
The base cases for \(\alpha\) are \(\top\), \(\bot\), \(x_i\), \(x_j\) for \(j \neq i\), and \(y_j\) for \(Y \neq X\).
When \(\alpha = x_i\), \(\alpha[x_i \To \bot] = \bot\) and \(x_i \vee x_i = x_i \vee \bot\) so the result holds. 
For all other base cases, \(\alpha[x_i \To \bot] = \alpha\) so the result holds. There are two inductive steps for
\(\alpha = \alpha_1 \wedge \alpha_2\) and \(\alpha = \alpha_1 \vee \alpha_2\). 
By the induction hypothesis, \(x_i \vee \alpha_1[x_i \To \bot] = x_i \vee \alpha_1\) 
and \(x_i \vee \alpha_2[x_i \To \bot] = x_i \vee \alpha_2\).
We have 
\((\alpha_1 \wedge \alpha_2)[x_i \To \bot] = (\alpha_1[x_i \To \bot]) \wedge (\alpha_2[x_i \To \bot])\) 
and hence
\(x_i \vee (\alpha_1 \wedge \alpha_2)[x_i \To \bot] = (x_i \vee \alpha_1) \wedge (x_i \vee \alpha_2) = x_i \vee \alpha\).
We also have 
\((\alpha_1 \vee \alpha_2)[x_i \To \bot] = (\alpha_1[x_i \To \bot]) \vee (\alpha_2[x_i \To \bot])\)
and hence
\(x_i \vee (\alpha_1 \vee \alpha_2)[x_i \To \bot] = (x_i \vee \alpha_1) \vee (x_i \vee \alpha_2) = x_i \vee \alpha\).
\end{proof}

\begin{proof}[\bf Proof of Proposition~\ref{prop:d-quantify-b}]
Follows directly from Definition~\ref{def:d-quantify}.
\end{proof}

\begin{proof}[\bf Proof of Proposition~\ref{prop:distribute-and-or}]
For the first part of the proposition, we have:
\begin{eqnarray*}
\forall x_i \cdot (\alpha \wedge \beta)
& = & (\alpha \wedge \beta)\cd x_i \wedge \bigwedge_{j \neq i} (x_i \vee (\alpha \wedge \beta) \cd x_j) \\
& = & (\alpha \cd x_i) \wedge (\beta\cd x_i) 
\wedge \bigwedge_{j \neq i} (x_i \vee \alpha\cd x_j) \wedge
\bigwedge_{j \neq i} (x_i \vee \beta \cd x_j) \\
& = & (\forall x_i \cdot \alpha) \wedge (\forall x_i \cdot \beta). 
\end{eqnarray*}

To show the second part of the proposition, suppose variable \(X\) does not occur in \(\alpha\).
Then \(\alpha \cd x_j = \alpha\) for all \(j\). Hence, \(\forall x_i \cdot \alpha = \alpha\) by Definition~\ref{def:d-quantify}.
We now have:
\begin{eqnarray*}
\forall x_i \cdot (\alpha \vee \beta)
& = & (\alpha \vee \beta)\cd x_i \wedge \bigwedge_{j \neq i} (x_i \vee (\alpha \vee \beta) \cd x_j) \\
& = & (\alpha \vee (\beta \cd x_i)) \wedge (x_i \vee \alpha \vee \bigwedge_{j \neq i} \beta \cd x_j) \\
& = & \alpha \vee (x_i \wedge (\beta \cd x_i)) \vee ((\beta \cd x_i) \wedge \bigwedge_{j \neq i} \beta \cd x_j) \\
& = & \alpha \vee ((\beta \cd x_i) \wedge \bigwedge_{j \neq i} (x_i \vee \beta \cd x_j)) \\
& = & (\forall x_i \cdot \alpha) \vee (\forall x_i \cdot \beta). \qedhere
\end{eqnarray*}
\end{proof}

\begin{proof}[\bf Proof of Proposition~\ref{prop:d-quantify-compute}]
Follows directly from Propositions~\ref{prop:d-quantify-b} and~\ref{prop:distribute-and-or}.
Note that universal quantification is commutative so we can quantify states in any order. 
\end{proof}

\begin{lemma}\label{lem:independence}
Let \(\alpha\) be a positive NNF and let \(x_i\) be a state of variable \(X\).
If \(x_i\) does not occur in \(\alpha\), then 
\(\alpha \cd x_i \models \alpha \cd x_j\) for all \(j \neq i\)
and \(\forall x_i \cdot \alpha = \alpha \cd x_i\).
If \(x_j\) does not occur in \(\alpha\) for \(j \neq i\), then 
\(\alpha \cd x_j \models \alpha \cd x_i\) for \(j \neq i\), \(\alpha \cd x_j = \alpha \cd x_k\)  for \(j, k \neq i\) 
and \(\forall x_i \cdot \alpha = \alpha\).
\end{lemma}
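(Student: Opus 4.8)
The plan is to argue directly from the definition of conditioning and from Definition~\ref{def:d-quantify}, using two ingredients: Lemma~\ref{lem:state-or}, and the monotonicity fact that replacing a constant $\bot$-leaf by the constant $\top$-leaf anywhere in an NNF yields a logically weaker formula (and symmetrically $\top \mapsto \bot$ yields a stronger one), which is a one-line structural induction I would state inline. First I would record the effect of conditioning on a positive NNF $\alpha$ of $X$: since $\alpha$ has only positive literals, $\alpha \cd x_i$ turns every $X$-leaf labeled $x_i$ into $\top$ and every $X$-leaf labeled $x_j$, $j \neq i$, into $\bot$, leaving the rest of $\alpha$ (which mentions no $X$) intact. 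The key structural point is that for distinct states $x_i,x_j$ the formulas $\alpha \cd x_i$ and $\alpha \cd x_j$ are identical except at the $X$-leaves, so comparing them reduces to the monotonicity fact.

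For the first bullet, assume $x_i$ does not occur in $\alpha$. Then every $X$-leaf of $\alpha$ is some $x_k$ with $k \neq i$, so all $X$-leaves become $\bot$ in $\alpha \cd x_i$, whereas in $\alpha \cd x_j$ the $x_j$-leaves become $\top$; hence $\alpha \cd x_j$ is $\alpha \cd x_i$ with some $\bot$-leaves flipped to $\top$, and monotonicity gives $\alpha \cd x_i \models \alpha \cd x_j$ for all $j \neq i$. Substituting into $\forall x_i \cdot \alpha = (\alpha \cd x_i) \wedge \bigwedge_{j \neq i}(x_i \vee \alpha \cd x_j)$ from Definition~\ref{def:d-quantify}, each conjunct $x_i \vee \alpha \cd x_j$ is implied by $\alpha \cd x_i$ (since $\alpha \cd x_i \models \alpha \cd x_j$), so the conjunction collapses and $\forall x_i \cdot \alpha = \alpha \cd x_i$.

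For the second bullet, assume $x_j$ does not occur in $\alpha$ for every $j \neq i$, so the only $X$-leaf that can appear is $x_i$. Then for each $j \neq i$ we get $\alpha \cd x_j = \alpha[x_i \To \bot]$, the same formula independently of $j$, which yields $\alpha \cd x_j = \alpha \cd x_k$ for $j,k \neq i$; and since $\alpha \cd x_i = \alpha[x_i \To \top]$ is this same formula with the $x_i$-leaves flipped from $\bot$ to $\top$, monotonicity gives $\alpha \cd x_j \models \alpha \cd x_i$. Plugging into Definition~\ref{def:d-quantify}, all the conjuncts $x_i \vee \alpha \cd x_j$ coincide, and by Lemma~\ref{lem:state-or} each equals $x_i \vee \alpha$, so $\forall x_i \cdot \alpha = \alpha[x_i \To \top] \wedge (x_i \vee \alpha)$. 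It then remains to show this equals $\alpha$, which I would verify by a two-case split on a world $\w$: if $\w \models x_i$, then $x_i \vee \alpha$ holds at $\w$ and $\alpha[x_i \To \top]$ agrees with $\alpha$ at $\w$; if $\w \not\models x_i$, then $x_i \vee \alpha$ agrees with $\alpha$ at $\w$, and whenever $\alpha$ holds at $\w$ so does $\alpha[x_i \To \bot]$, hence so does its monotone weakening $\alpha[x_i \To \top]$ — so in both cases the two sides agree at $\w$.

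The only step carrying any subtlety is this final equivalence $\alpha[x_i \To \top] \wedge (x_i \vee \alpha) \equiv \alpha$; everything else is bookkeeping about conditioning together with the NNF monotonicity fact, so I do not expect a genuine obstacle.
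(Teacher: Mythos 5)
Your proof is correct. The conditioning facts and the implications \(\alpha \cd x_i \models \alpha \cd x_j\) (resp.\ \(\alpha \cd x_j \models \alpha \cd x_i\)) are established exactly as in the paper — the paper argues them directly from the description of how conditioning rewrites the \(X\)-leaves, which is the same content as your explicitly named ``\(\bot \mapsto \top\) weakens an NNF'' monotonicity principle. The first bullet then collapses the conjunction the same way in both proofs. Where you genuinely diverge is the last step of the second bullet, showing \(\forall x_i \cdot \alpha = \alpha\): the paper runs a purely algebraic chain that expands \((\alpha\cd x_i) \wedge (x_i \vee \bigwedge_{j\neq i}\alpha\cd x_j)\), introduces \(\bar{x}_i = \bigvee_{k\neq i} x_k\), and finishes by recognizing the Shannon-style case decomposition \(\bigvee_k (x_k \wedge \alpha\cd x_k) = \alpha\) over the states of \(X\); you instead invoke Lemma~\ref{lem:state-or} to rewrite every conjunct \(x_i \vee \alpha\cd x_j\) as \(x_i \vee \alpha\), reducing the claim to \(\alpha[x_i \To \top] \wedge (x_i \vee \alpha) \equiv \alpha\), which you verify by a two-case semantic evaluation at an arbitrary world. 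Both finishes are sound; the paper's keeps everything equational but leans (implicitly) on the state-decomposition identity, while yours is more elementary and self-contained at the cost of one explicit world-level argument, and it puts Lemma~\ref{lem:state-or} to work here rather than only in the proof of Proposition~\ref{prop:cr-cf}. The only degenerate case neither treatment dwells on is a variable \(X\) with a single state (empty conjunction over \(j \neq i\)), which is harmless.
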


\begin{proof}[\bf Proof of Lemma~\ref{lem:independence}]
Suppose state \(x_i\) does not occur in NNF  \(\alpha\) and let \(j \neq i\).
Then \(\alpha \cd x_i\) is obtained from \(\alpha\) by replacing every state of variable \(X\) in \(\alpha\) with \(\bot\).
Moreover, \(\alpha \cd x_j\) is obtained from \(\alpha\) by replacing state \(x_j\) with \(\top\) and replacing all other states 
of variable \(X\) in \(\alpha\) with \(\bot\). Hence, \(\alpha \cd x_i \models \alpha \cd x_j\). We now have:
\begin{eqnarray*}
\forall x_i \cdot \alpha 
& = & (\alpha \cd x_i) \wedge \bigwedge_{j \neq i} (x_i \vee \alpha \cd x_j) \\
& = & (\alpha \cd x_i) \wedge (x_i \vee \bigwedge_{j \neq i} \alpha \cd x_j) \\
& = & ((\alpha \cd x_i) \wedge x_i) \vee ((\alpha \cd x_i) \wedge \bigwedge_{j\neq i} \alpha \cd x_j )) \\
& = & ((\alpha \cd x_i) \wedge x_i) \vee (\alpha \cd x_i)\quad \mbox{   since \(\alpha \cd x_i \models \alpha \cd x_j\)} \\
& = & \alpha \cd x_i.
\end{eqnarray*}
Suppose state \(x_j\) does not occur in NNF \(\alpha\) for \(j \neq i\).
Then \(\alpha \cd x_j\) is obtained from \(\alpha\) by replacing state \(x_i\) with \(\bot\).
Hence, \(\alpha \cd x_j = \alpha \cd x_k\) for \(k \neq i\).
Moreover, \(\alpha \cd x_i\) is obtained from \(\alpha\) by replacing state \(x_i\) with \(\top\).
Hence, \(\alpha \cd x_j \models \alpha \cd x_i\). We now have:
\begin{eqnarray*}
\forall x_i \cdot \alpha 
& = & (x_i \wedge (\alpha \cd x_i)) \vee ((\alpha \cd x_i) \wedge \bigwedge_{j\neq i} \alpha \cd x_j )) 
	\quad \mbox{ by previous derivation} \\
& = & (x_i \wedge (\alpha \cd x_i)) \vee (\bigwedge_{j\neq i} \alpha \cd x_j) 
	\quad \mbox{   since \(\bigwedge_{j \neq i} \alpha \cd x_j \models \alpha \cd x_i\)} \\
& = & (x_i \wedge (\alpha \cd x_i)) \vee (x_i \wedge \bigwedge_{j\neq i} \alpha \cd x_j) \vee (\n{x}_i \wedge \bigwedge_{j\neq i} \alpha \cd x_j) \\
& = & (x_i \wedge (\alpha \cd x_i)) \vee (\n{x}_i \wedge \bigwedge_{j\neq i} \alpha \cd x_j)
	\quad \mbox{   since \(x_i \wedge \bigwedge_{j \neq i} \alpha \cd x_j \models x_i \wedge (\alpha \cd x_i)\)} \\
& = & (x_i \wedge (\alpha \cd x_i)) \vee \bigvee_{k \neq i} (x_k \wedge \bigwedge_{j\neq i} \alpha \cd x_j) \\
& = & (x_i \wedge (\alpha \cd x_i)) \vee \bigvee_{k \neq i} (x_k \wedge \alpha \cd x_k) 
	\quad \mbox{ since \(\alpha \cd x_k = \alpha \cd x_j\)} \\
& = & \alpha. \qedhere
\end{eqnarray*}
\end{proof}

\begin{proof}[\bf Proof of Proposition~\ref{prop:distribute-or-ind}]
Suppose \(x_i\) does not occur in either \(\alpha\) or \(\beta\). 
Then \(x_i\) does not occur in \(\alpha \vee \beta\).
By Lemma~\ref{lem:independence}, 
\(\forall x_i \cdot \alpha = \alpha \cd x_i\), 
\(\forall x_i \cdot \beta = \beta \cd x_i\) and
\(\forall x_i \cdot (\alpha \vee \beta) = (\alpha \vee \beta) \cd x_i = (\alpha \cd x_i) \vee (\beta \cd x_i) 
= (\forall x_i \cdot \alpha) \vee (\forall x_i \cdot \beta)\).

Suppose \(x_j\) does not occur in either \(\alpha\) or \(\beta\) for \(j \neq i\).
Then \(x_j\) does not occur in \(\alpha \vee \beta\) for \(j \neq i\). 
By Lemma~\ref{lem:independence}, 
\(\forall x_i \cdot \alpha = \alpha\), 
\(\forall x_i \cdot \beta = \beta\) and
\(\forall x_i \cdot (\alpha \vee \beta) = \alpha \vee \beta = (\forall x_i \cdot \alpha) \vee (\forall x_i \cdot \beta)\). 
\end{proof}

\begin{proof}[\bf Proof of Proposition~\ref{prop:wtop}]
Suppose the states of  \(X\) appear in \(\alpha\) only in disjunctions of the form \(\Sigma_{S'} = \bigvee_{k \in S'} x_k\)
where \(S' \supseteq S\).
Then \(\alpha \cd x_p \models \alpha \cd x_q\) for all \(x_p\) and for all \(x_q \in S\). This follows since 
 \(x_q \in S'\) and hence \(\Sigma_{S'} \cd x_q = \top\). We now have the following equality, which is key for the proof: 
\begin{equation}
\bigwedge_{x_j \not \in S} \alpha \cd x_j = \bigwedge_{x_j \not \in S} \alpha \cd x_j \wedge \bigwedge_{x_j \in S} \alpha \cd x_j 
= \bigwedge_{j} \alpha \cd x_j. \label{eq:wtop}
\end{equation}
The proof will consider two cases: \(x_i \in S\) and \(x_i \not \in S\). Suppose \(x_i \in S\). We then have:
\begin{eqnarray*}
\forall x_i \cdot (\alpha \vee \bigvee_{x_k \in S} x_k) 
& = & ((\alpha \vee \bigvee_{x_k \in S} x_k)\cd x_i) \wedge (x_i \vee \bigwedge_{j \neq i} (\alpha \vee \bigvee_{x_k \in S} x_k) \cd x_j) \\
& = & x_i \vee \bigwedge_{x_j \not \in S} (\alpha \vee \bigvee_{x_k \in S} x_k) \cd x_j \\
& = & x_i \vee \bigwedge_{x_j \not \in S} \alpha \cd x_j \\
& = & x_i \vee \bigwedge_{j} \alpha \cd x_j \quad \mbox{by Equation~\ref{eq:wtop}.}
\end{eqnarray*}
Moreover, we have
\begin{eqnarray*}
\forall x_i \cdot \bigvee_{x_k \in S} x_k
& = & x_i. \\
\forall x_i \cdot \alpha
& = & (\alpha \cd x_i) \wedge (x_i \vee \bigwedge_{j \neq i} \alpha \cd x_j).
\end{eqnarray*}
The proposition holds since
\[
(\forall x_i \cdot \alpha) \vee (\forall x_i \cdot \bigvee_{x_k \in S} x_k)
= x_i \vee ((\alpha \cd x_i) \wedge (x_i \vee \bigwedge_{j \neq i} \alpha \cd x_j)) \\
= x_i \vee \bigwedge_{j} \alpha \cd x_j
\]
Suppose now that \(x_i \not \in S\). We then have:
\begin{eqnarray*}
\forall x_i \cdot (\alpha \vee \bigvee_{x_k \in S} x_k) 
& = & ((\alpha \vee \bigvee_{x_k \in S} x_k)\cd x_i) \wedge (x_i \vee \bigwedge_{j \neq i} (\alpha \vee \bigvee_{x_k \in S} x_k) \cd x_j) \\
& = & (\alpha \cd x_i) \wedge (x_i \vee \bigwedge_{j \neq i, x_j \not \in S} \alpha \cd x_j) \\
& = & (x_i \wedge (\alpha \cd x_i)) \vee \bigwedge_{x_j \not \in S} \alpha \cd x_j \\
& = & (x_i \wedge \alpha) \vee \bigwedge_{j} \alpha \cd x_j \quad \mbox{by Equation~\ref{eq:wtop}.}
\end{eqnarray*}
Moreover, we have
\begin{eqnarray*}
\forall x_i \cdot \bigvee_{x_k \in S} x_k
& = & \bot. \\
\forall x_i \cdot \alpha
& = & (\alpha \cd x_i) \wedge (x_i \vee \bigwedge_{j \neq i} \alpha \cd x_j)
= (x_i \wedge \alpha) \vee \bigwedge_{j} \alpha \cd x_j.
\end{eqnarray*}
The proposition holds since
\[
(\forall x_i \cdot \alpha) \vee (\forall x_i \cdot \bigvee_{x_k \in S} x_k)
= (x_i \wedge \alpha) \vee \bigwedge_{j} \alpha \cd x_j. \qedhere
\]
\end{proof}

\begin{proof}[\bf Proof of Proposition~\ref{prop:dt-nnfs}]
Consider the following NNF, which is the standard translation of a decision graph into an NNF
except that we are switching the \(\top\) and \(\bot\) leaf nodes:
\begin{equation*}
\Lambda^c[T] = 
\left\{
\begin{array}{ll}
\bot, & \mbox{if \(T\) is labeled with class \(c\)} \\
\top, & \mbox{if \(T\) is labeled with class \(c' \neq c\)} \\
\displaystyle\bigvee_j (\Lambda^c[T_j] \wedge \bigvee_{x_i \in S_j} x_i),
& \mbox{\(T\) has edges \(\DE(X,S_j,T_j)\)}
\end{array}
\right.
\end{equation*}
This NNF characterizes the instances of classes \(c' \neq c\).
That is, \(\delta \models \Lambda^c[T]\) iff \(T\) assigns a class \(c' \neq c\) to instance \(\delta\).
NNF \(\Delta^c[T]\) of Definition~\ref{def:dt-nnf} results from negating \(\Lambda^c[T]\) 
using deMorgan's law. 
Hence, \(\Delta^c[T]\) characterizes the instances of class \(c\).
\end{proof}

\begin{proof}[\bf Proof of Proposition~\ref{prop:cr-cf}]
We  first prove this proposition for decision graphs that satisfy the test-once property.
In this case, NNF \(\Delta^c[T]\) of Definition~\ref{def:dt-nnf} will be \(\vee\)-decomposable except for the disjunction
\(\Upsilon_j = \bigvee_{x_i \not \in S_j} x_i\) of each node \(T\) with edges edges \(\DE(X,S_j,T_j)\).
By Proposition~\ref{prop:distribute-and-or},
we can compute \(\forall \delta \cdot \Delta^c[T]\) by simply replacing each such disjunction
\(\Upsilon_j\)  with \(\forall \delta \cdot \Upsilon_j\)
since \(\forall \delta\) distributes over all other conjunctions and disjunctions in the NNF.
Let \(x_l = \delta[X]\).
Then  \(\forall \delta \cdot \Upsilon_j = \forall x_l \cdot \Upsilon_j\) by Lemma~\ref{lem:independence}. 
Since \(\{S_j\}_j\) is a state partition for variable \(X\) (test-once property), we have \(x_l \in \bigcup_j S_j\)
and \(x_l \in S_k\) for some  unique \(k\). By Definition~\ref{def:d-quantify}, 
\(\forall x_l \cdot \Upsilon_j = \bot\) if \(j = k\) and \(\forall x_l \cdot \Upsilon_j = x_l = \delta[X]\) if \(j \neq k\).
Applying these substitutions to NNF \(\Delta^c[T]\) of Definition~\ref{def:dt-nnf} 
yields NNF \(\Gamma^c[T]\) of Equation~\ref{eq:cr-cf}.

Suppose now that the decision graph satisfies only the weak test-once property. There is one place
where the previous argument breaks and another place where it is incomplete. 
Consider the disjunction \(\Delta^c[T_j] \vee (\bigvee_{x_i \not \in S_j} x_i)\) 
in Definition~\ref{def:dt-nnf} and suppose variable \(X\) is tested again at some node in 
graph \(T_j\) (not possible under the test-once property). 
Variable \(X\) will then appear in NNF \(\Delta^c[T_j]\) so we can no longer justify the
the distribution of \(\forall \delta\) over \(\Delta^c[T_j]\) and \((\bigvee_{x_i \not \in S_j} x_i)\)
using \(\vee\)-decomposability. Observe however that every occurrence of variable \(X\) in NNF \(\Delta^c[T_j]\) will
be in disjunctions of the form \(\bigvee_{x_i \not \in R_o} x_i\) where 
\(\DE(X,R_o,T_o)\) is an edge in graph \(T_j\). By the weak test-once property,
\(R_o \subseteq S_j\) and, hence, \(\overline{S}_j \subseteq \overline{R}_o\) 
so \(\forall \delta\) will distribute over  the disjunction \(\Delta^c[T_j] \vee (\bigvee_{x_i \not \in S_j} x_i)\)
by Proposition~\ref{prop:wtop} (and Proposition~\ref{prop:distribute-and-or}).
We now consider the place where the previous argument is incomplete. 
This is when \(x_l = \delta[X]\) and \(x_l \not \in \bigcup_o R_o\) (not possible under the test-once property). In this case,
\(\l_o = \forall x_l \cdot (\bigvee_{x_i \not \in R_o} x_i) = x_l\). However, \(\l_o = \bot\) according to
Equation~\ref{eq:cr-cf}. Using \(\bot\) instead of \(x_l\) preserves equivalence of the complete
reason as we show next (and ensures its \(\vee\)-decomposable). 
Since \(x_l \not \in \bigcup_o R_o\), we have \(x_l \not \in S_j\)
by the weak test-once property. Hence,
\(\forall x_l \cdot (\bigvee_{x_i \not \in S_j} x_i) = x_l\) which leads to
\(\forall \delta \cdot (\Delta^c[T_j] \vee \bigvee_{x_i \not \in S_j} x_i) = (\forall \delta \cdot \Delta^c[T_j]) \vee x_l\).
By Lemma~\ref{lem:state-or},
we can therefore replace the occurrences of \(x_l\) in \(\forall \delta \cdot \Delta^c[T_j]\) with \(\bot\) 
while preserving equivalence.
\end{proof}

\begin{proof}[\bf Proof of Proposition~\ref{prop:dt-cr}]
The formula in Equation~\ref{eq:cr-cf} does not contain negations so it is an NNF.
Literals can only appear in this NNF as \(\l_j\) in the expression \(\bigwedge_{j} (\Gamma^c[T_j] \vee \l_j)\) 
for a graph node with edges \(\DE(X,S_j,T_j)\). 
Since \(\l_j = \bot\) or \(\l_j = \delta[X]\), all literals in the NNF are states in instance \(\delta\)
so the NNF is monotone. To show that the NNF is \(\vee\)-decomposable, suppose variable \(X\)
is tested again at some node in graph \(T_j\) which has edges \(\DE(X,R_o,T_o)\).
By the conditions of Equation~\ref{eq:cr-cf}, if \(\l_j = \delta[X]\), then \(\l_j \not \in S_j\).
By the weak test-once property, \(\l_j \not \in R_o\) for all \(o\). 
By the conditions of Equation~\ref{eq:cr-cf}, \(\l_o = \bot\) for all \(o\) 
so state \(\l_j\) will not appear in the expression \(\bigwedge_{o} (\Gamma^c[T_o] \vee \l_o)\) for node \(T_k\). 
Hence, the NNF is \(\vee\)-decomposable.
\end{proof}

\begin{proof}[\bf Proof of Proposition~\ref{prop:congruent}]
We first observe that \(\forall \delta \cdot \Delta\) can be written as a DNF \(\tau_1 + \ldots + \tau_n\) where
\(\tau_i\) are the prime implicants of \(\Delta\) that satisfy \(\tau_i \subseteq \delta\)~\cite{ecai/DarwicheH20}.
Moreover, \(\delta \models \Delta\) by the definition of complete reason (we universally quantify \(\delta\)
from the class formula satisfied by \(\delta\)). 

\(\then\) Suppose instances \(\delta\) and \(\delta^\star\) are congruent.
Since \(\delta \models \Delta\), we get \(\delta \cap \delta^\star \models \Delta\) by Definition~\ref{def:congruent}.
Since \(\delta \cap \delta^\star\) is an implicant of  \(\Delta\) and a subset of \(\delta\), 
then \(\tau_i \subseteq \delta \cap \delta^\star\) for some \(\tau_i\).
Hence, \(\delta\cap\delta^\star \models \forall \delta \cdot \Delta\) and therefore \(\delta^\star \models \forall \delta \cdot \Delta\).

\(\bthen\) Suppose \(\delta^\star \models \forall \delta \cdot \Delta = \tau_1 + \ldots + \tau_n\).
Hence, \(\delta^\star \models \tau_i\) for some \(\tau_i\). Since \(\tau_i \subseteq \delta\), \(\tau_i \subseteq \delta^\star\)
and \(\tau_i \models \forall \delta \cdot \Delta\), we have \(\tau_i \subseteq \delta \cap \delta^\star \models \forall \delta \cdot \Delta\).
Moreover, \(\delta \cap \delta^\star \models \Delta\) since \(\forall \delta \cdot \Delta \models \Delta\)~\cite{darwiche2021quantifying}.
Hence, instances \(\delta\) and \(\delta^\star\) are congruent by Definition~\ref{def:congruent}.
\end{proof}

\begin{proof}[\bf Proof of Proposition~\ref{prop:subsets}]
The complete reason \(\Gamma = \forall \delta \cdot \Delta\) can be written
as a DNF \(\tau_1 + \ldots + \tau_n\) where \(\tau_i\) are the prime implicants of \(\Delta\) which satisfy 
\(\tau_i \subseteq \delta\)~\cite{ecai/DarwicheH20}. Hence, the prime implicants of \(\Gamma\)
are subsets of instance \(\delta\). We can obtain the prime implicates of \(\Gamma\) 
by converting DNF \(\tau_1 + \ldots + \tau_n\) into a CNF and then removing subsumed clauses (since the DNF is monotone). 
Hence, the prime implicates of \(\Gamma\) are also subsets of instance \(\delta\).
\end{proof}

\begin{proof}[\bf Proof of Proposition~\ref{prop:contrastive}]
Let \(\Gamma\) denote the complete reason \(\forall \delta \cdot \Delta\). We first prove the following lemma.
\vspace{-5mm}
\begin{quote}
\begin{lemma}\label{lem:contrastive}
If \(\gamma = \{s_1,\ldots,s_n\} \subseteq \delta\), then 
\(\Gamma \models s_1+\ldots+s_n\) iff \(\delta\setminus\gamma \not \models \Delta\).
\end{lemma}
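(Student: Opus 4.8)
The plan is to reduce both sides of the biconditional to one and the same statement about the prime implicants of \(\Delta\), stated so that one side is the negation of the other. Recall from the proofs of Propositions~\ref{prop:congruent} and~\ref{prop:subsets} that the complete reason can be written as a DNF \(\Gamma = \tau_1 + \ldots + \tau_m\) in which \(\tau_1, \ldots, \tau_m\) are exactly the prime implicants of \(\Delta\) that are subsets of the instance \(\delta\); in particular each \(\tau_i\) is a positive term with \(\tau_i \subseteq \delta\). The target is to prove: (i) \(\Gamma \models s_1 + \ldots + s_n\) iff every \(\tau_i\) meets \(\gamma\), i.e.\ \(\tau_i \cap \gamma \neq \emptyset\); and (ii) \(\delta \setminus \gamma \models \Delta\) iff some \(\tau_i\) is disjoint from \(\gamma\). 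Since (i) and (ii) describe complementary conditions on the family \(\{\tau_i\}\), the lemma is immediate from them.

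For (i): since \(\Gamma = \bigvee_i \tau_i\) and the models of a disjunction are the union of the models of its disjuncts, \(\Gamma \models s_1 + \ldots + s_n\) holds iff \(\tau_i \models s_1 + \ldots + s_n\) for every \(i\). So (i) reduces to the sub-claim: for any term \(\tau \subseteq \delta\), \(\tau \models s_1 + \ldots + s_n\) iff \(\tau \cap \gamma \neq \emptyset\). The direction \(\Leftarrow\) is trivial (\(s_j \in \tau\) gives \(\tau \models s_j\)). For \(\Rightarrow\) I argue by contraposition: assume \(\tau \cap \gamma = \emptyset\) and construct a world \(\omega\) that agrees with \(\delta\) on every variable not mentioned in \(\gamma\), and on each variable \(X\) carrying a literal \(s_j = (\eql{X}{x}) \in \gamma\) assigns \(X\) a state other than \(x\) --- possible because the literals of \(\gamma \subseteq \delta\) lie on distinct variables and (barring the trivial case of a single-state variable, which makes the clause valid) each such variable has at least two states. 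Every state of \(\tau\) lies on a variable outside \(\gamma\): if \(s \in \tau\) were on a \(\gamma\)-variable \(Y\), then since \(\tau \subseteq \delta\), \(\gamma \subseteq \delta\) and \(\delta\) has one state per variable, \(s\) would equal the \(\gamma\)-literal on \(Y\), contradicting \(\tau \cap \gamma = \emptyset\). Hence \(\omega \models \tau\) while \(\omega\) falsifies every \(s_j\), so \(\tau \not\models s_1 + \ldots + s_n\).

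For (ii): I use the standard fact that every implicant of \(\Delta\) contains a prime implicant of \(\Delta\) (supersets of implicants are implicants, so iteratively deleting one literal at a time while staying an implicant terminates at a prime one). Thus \(\delta \setminus \gamma \models \Delta\) iff \(\delta \setminus \gamma\) contains some prime implicant \(\tau\) of \(\Delta\); and as \(\tau \subseteq \delta \setminus \gamma \subseteq \delta\), such a \(\tau\) is one of the \(\tau_i\) and satisfies \(\tau_i \cap \gamma = \emptyset\). Conversely, if \(\tau_i \cap \gamma = \emptyset\) then \(\tau_i \subseteq \delta \setminus \gamma\), so \(\delta \setminus \gamma \models \tau_i \models \Delta\). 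This gives (ii), and combining with (i): \(\delta \setminus \gamma \not\models \Delta\) iff every \(\tau_i\) meets \(\gamma\) iff \(\Gamma \models s_1 + \ldots + s_n\).

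The main obstacle is the sub-claim in (i), specifically its \(\Rightarrow\) direction: ensuring the falsifying world \(\omega\) really exists and really satisfies \(\tau\). This is exactly where the hypothesis \(\gamma \subseteq \delta\) (hence at most one literal of \(\gamma\) per variable, and each \(\tau_i\) positive) does the work, and where one must separately note that a single-state variable occurring in \(\gamma\) only makes \(s_1 + \ldots + s_n\) valid and the claim trivial. The remaining ingredients --- the DNF form of \(\Gamma\) borrowed from the earlier propositions, closure of implicant-hood under supersets, and assembling (i) and (ii) --- are routine.
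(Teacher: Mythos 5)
Your proof is correct, but it takes a different route from the paper's. The paper argues semantically through the congruence machinery: for the forward direction it builds an instance \(\delta^\star\) consistent with \(\delta\setminus\gamma,\n{s}_1,\ldots,\n{s}_n\), notes \(\delta^\star\models\neg\Gamma\) and \(\delta\cap\delta^\star=\delta\setminus\gamma\), and invokes Proposition~\ref{prop:congruent} to conclude \(\delta\setminus\gamma\not\models\Delta\); the converse is a symmetric argument by contradiction using a model of \(\Gamma,\n{s}_1,\ldots,\n{s}_n\). You instead use directly the fact that \(\Gamma\) is the monotone DNF \(\tau_1+\ldots+\tau_m\) of the prime implicants of \(\Delta\) contained in \(\delta\), and reduce both sides of the biconditional to the hitting-set condition ``every \(\tau_i\) meets \(\gamma\)'' --- in effect re-deriving the classical fact that the implicates of a monotone DNF are the clauses that hit every term. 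The two arguments rest on the same foundation (the paper's route also depends on this DNF form, just indirectly via the proof of Proposition~\ref{prop:congruent}), and both implicitly need every variable mentioned in \(\gamma\) to have at least two states so that the falsifying world exists; you flag this explicitly, the paper assumes it silently when asserting that the term \(\delta\setminus\gamma,\n{s}_1,\ldots,\n{s}_n\) is consistent. What the paper's version buys is brevity and a direct link to the congruence semantics that motivates the whole section; what yours buys is a self-contained combinatorial argument that makes the implicant/implicate duality behind Proposition~\ref{prop:contrastive} completely explicit. One small presentational caveat: your sub-claim in (i) as literally stated (``\(\tau\models s_1+\ldots+s_n\) iff \(\tau\cap\gamma\neq\emptyset\)'') is false for a single-state variable, so the exclusion of that degenerate case belongs in the statement of the sub-claim rather than only in the surrounding prose.
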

\begin{proof}[\bf Proof of Lemma~\ref{lem:contrastive}]
Suppose \(\gamma = \{s_1,\ldots,s_n\}\) is a subset of instance \(\delta\).

\(\then\) Suppose \(\Gamma \models s_1+\ldots+s_n\).
Then \(\n{s}_1,\ldots,\n{s}_n \models \neg \Gamma\) and hence
\(\delta\setminus\gamma,\n{s}_1,\ldots,\n{s}_n \models \neg \Gamma\).
Let \(\delta^\star\) be an instance consistent with term \(\delta\setminus\gamma,\n{s}_1,\ldots,\n{s}_n\).
Then \(\delta^\star \models \neg \Gamma\) and \(\delta\cap\delta^\star = \delta\setminus\gamma\).
Since \(\delta^\star \models \neg \Gamma\), instances \(\delta\) and \(\delta^\star\) are not congruent
by Proposition~\ref{prop:congruent} and hence \(\delta\cap\delta^\star \not \models \Delta\) by 
Definition~\ref{def:congruent}. Hence, \(\delta\setminus\gamma \not \models \Delta\).

\(\bthen\) Suppose \(\delta\setminus\gamma \not \models \Delta\).
We will show \(\Gamma \models s_1 + \ldots + s_n\) by contradiction. 
Suppose \(\Gamma \not \models s_1 + \ldots + s_n\).
Then \(\Gamma, \n{s}_1, \ldots, \n{s}_n\) is consistent so there must exist
an instance \(\delta^\star\) such that \(\delta^\star \models \Gamma, \n{s}_1, \ldots, \n{s}_n\).
Since \(\delta^\star \models \Gamma\), instances \(\delta\) and \(\delta^\star\) are congruent 
by Proposition~\ref{prop:congruent} and hence \(\delta \cap \delta^\star \models \Delta\) by Definition~\ref{def:congruent}.
Since \(\delta \cap \delta^\star \subseteq \delta \setminus \gamma\), then
\(\delta \setminus \gamma \models \Delta\), a contradiction. Hence, \(\Gamma \models s_1 + \ldots + s_n\).
\end{proof}
\end{quote}

\noindent We are now ready to prove the proposition. 

\(\then\) Suppose \(\gamma = \{s_1,\ldots,s_n\}\) is a prime implicate of \(\Gamma\).
By Proposition~\ref{prop:subsets}, \(\gamma \subseteq \delta\), so
\(\gamma\) is a minimal subset of \(\delta\) such that \(\Gamma \models s_1+\ldots+s_n\).
By Lemma~\ref{lem:contrastive}, \(\gamma\) is a minimal subset of \(\delta\) 
such that \(\delta\setminus\gamma \not \models \Delta\) so it is a contrastive explanation by Definition~\ref{def:contrastive}.

\(\bthen\) Suppose \(\gamma = \{s_1,\ldots,s_n\}\) is a contrastive explanation.
By Definition~\ref{def:contrastive},
\(\gamma\) is a minimal subset of instance \(\delta\) such that \(\delta\setminus\gamma \not \models \Delta\).
By Lemma~\ref{lem:contrastive}, 
\(\gamma\) is a minimal subset of \(\delta\) such that \(\Gamma \models s_1 + \ldots + s_n\).
Hence, \(\gamma\)  is a prime implicate of \(\Gamma\) by Proposition~\ref{prop:subsets}.
\end{proof}

\begin{proof}[\bf Proof of Proposition~\ref{prop:iml}]
The cases \(\top\), \(\bot\) and \(x_i\) follow directly from Definition~\ref{def:iml}. 
For the other two cases, we first observe the following.
A monotone NNF can be converted to a CNF which includes
all prime (and shortest) implicates of the NNF as follows:
\(\CNF(\bot) = \{\{\}\}\), \(\CNF(\top) = \{\}\), \(\CNF(x_i) = \{\{x_i\}\}\),
\(\CNF(\alpha \wedge \beta) = \CNF(\alpha) \cup \CNF(\beta)\) and
\(\CNF(\alpha \vee \beta) = \{i \cup j \mid i \in \CNF(\alpha), j \in \CNF(\beta)\}\).
This immediately gives \(\iml(\alpha \wedge \beta) = \min(\iml(\alpha),\iml(\beta))\).
If \(\alpha\) and \(\beta\) share no variables (\(\vee\)-decomposability), 
then \(i \cap j = \emptyset\) for \(i \in \CNF(\alpha)\) and \(j \in \CNF(\beta)\),
which leads to \(\iml(\alpha \vee \beta) = \iml(\alpha) + \iml(\beta)\).
\end{proof}

\begin{proof}[\bf Proof of Proposition~\ref{prop:prune}]
\(\EXP(\Delta)\) is obtained by removing edges from \(\Delta\)
so it must be monotone and \(\vee\)-decomposable given that \(\Delta\) satisfies these properties.
The second part of the proposition can be shown directly by induction on the NNF structure while utilizing the conversion to CNF as
shown in the proof of Proposition~\ref{prop:iml}.
\end{proof}

\begin{proof}[\bf Proof of Proposition~\ref{prop:snr-complexity}]
For each node \(\alpha\) in NNF \(\EXP(\Delta)\), we have \(|\SNR(\alpha)| \leq M\).
This follows because on Line~\ref{snr:ln:or}, we have \(|\SNR(\alpha_1 \vee \ldots \vee \alpha_n)| = \prod_{i=1}^n |\SNR(\alpha_i)|\)
since we are computing a Cartesian product.
Moreover, on Line~\ref{snr:ln:and} with \(I = \{i \mid \iml(\alpha_i)=\iml(\Delta)\}\), we have
\(|\SNR(\bigwedge_{i \in I} \alpha_i)| \geq \max_{i \in I}(|\SNR(\alpha_i)|)\) since we are computing a union.
The Cartesian product computation on Line~\ref{snr:ln:or} takes \(O(|\SNR(\alpha_1 \vee \ldots \vee \alpha_n)|) = O(M)\) time and space.
The union computation on Line~\ref{snr:ln:and} takes \(O(\sum_{i \in I} |\SNR(\alpha_i)|) = O(|I| \cdot M)\) time 
and \(O(|\bigcup_{i \in I} \SNR(\alpha_i)|) = O(M)\) space. Suming these complexities over all nodes
in the NNF, we get a time complexity of \(O(M \cdot E)\) and a space complexity of \(O(M \cdot N)\).
\end{proof}

\begin{proof}[\bf Proof of Proposition~\ref{prop:dt-nr}]
We will show this by induction on the structure of the decision tree.
The complete reason for a decision tree is given in Equation~\ref{eq:cr-cf}, which show next for reference:
\begin{equation*}
\Gamma^c[T] = 
\left\{
\begin{array}{ll}
\top, & \mbox{if \(T\) is labeled with class \(c\)}  \\
\bot, & \mbox{if \(T\) is labeled with class \(c' \neq c\)} \\
\displaystyle
\bigwedge_{j} (\Gamma^c[T_j] \vee \l_j), & 
\mbox{if \(T\) has edges \(\DE(X,S_j,T_j)\)}
\end{array}
\right.
\end{equation*}
We will use \(L[T]\) to denote the number of leaf nodes in tree \(T\) which are labeled with classes 
other than \(c\). We will also use \(\#\![T]\) to denote the number of prime implicates for NNF \(\Gamma^c[T]\).
What we need to show is that \(\#\![T] \leq L[T]\).

The base cases are for leaf nodes of the decision tree. If leaf node \(T\) is labeled with class \(c\), then
\(L[T] = 0\) and the complete reason is \(\top\) which has no prime implicates so \(\#\![T] = 0\) and the proposition holds.
If leaf node \(T\) is labeled with class \(c' \neq c\), then \(L[T]=1\) and the complete reason is
\(\bot\) which has a single prime implicate \(\{\}\) so \(\#\![T] = 1\) and the proposition holds.
Consider now an internal node \(T\) with edges \(\DE(X,S_j,T_j)\) which has the complete
reason \(\bigwedge_{j} (\Gamma^c[T_j] \vee \l_j)\). 
Since \(\l_j\) is either a literal or \(\bot\) (by Proposition~\ref{prop:cr-cf}),
and since \(\l_j\) does not appear in \(\Gamma^c[T_j]\) if it is a literal (by Proposition~\ref{prop:dt-cr}), 
the prime implicates for \(\Gamma^c[T_j] \vee \l_j\) are obtained by
disjoining each prime implicate for \(\Gamma^c[T_j]\) with \(\l_j\). 
Hence, the number of prime implicates for \(\Gamma^c[T_j] \vee \l_j\) equals the number of 
prime implicates for \(\Gamma^c[T_j]\), \(\#\![T_j]\). Moreover, since the complete reason is monotone, 
the prime implicates for \(\bigwedge_{j} (\Gamma^c[T_j] \vee \l_j)\) are obtained by taking the 
union of prime implicates for each \(\Gamma^c[T_j] \vee \l_j\) and removing subsumed implicates
from the result.
Hence, \(\#\![T] \leq \sum_j \#\![T_j]\). By the induction hypothesis, \(\#\![T_j] \leq L[T_j]\).
We now have \(\#\![T] \leq \sum_j L[T_j] = L[T]\) which proves the proposition.
\end{proof}

\end{document}